\algnewcommand\algorithmicto{\textbf{to}}
\algnewcommand\algorithmicinit{\textbf{Initialise:}}
\DeclarePairedDelimiter\floor{\lfloor}{\rfloor}
\newcommand{\la}{\left \langle}
\newcommand{\ra}{\right\rangle}
\newcommand{\DS}[1]{\displaystyle{#1}}
\newcommand{\abs}[1]{\left| #1 \right|}
\newcommand{\norm}[1]{\left\lVert #1 \right\rVert}
\newcommand{\bracket}[1]{\left( #1 \right)}
\newcommand{\sqbracket}[1]{\left[ #1 \right]}
\newcommand{\set}[1]{\left\{ #1 \right\}}
\newcommand{\cA}{\mathcal{A}}
\newcommand{\sfC}{\mathsf{C}}
\newcommand{\E}{\mathbb{E}}
\newcommand{\fe}{\mathfrak{e}}
\newcommand{\cH}{\mathcal{H}}
\newcommand{\sH}{\mathsf{H}}
\newcommand{\K}{\mathcal{K}}
\renewcommand{\L}{\mathcal{L}}
\renewcommand{\O}{\mathcal{O}}
\newcommand{\cP}{\mathcal{P}}
\newcommand{\R}{\mathbb{R}}
\newcommand{\su}{\mathsf{u}}
\newcommand{\sw}{\mathsf{w}}
\newcommand{\X}{\mathcal{X}}
\newcommand{\sx}{\mathsf{x}}
\newcommand{\sy}{\mathsf{y}}
\newcommand{\sz}{\mathsf{z}}
\newcommand{\Wass}{\mathsf{Wass}}
\newtheorem{theorem}{Theorem}[section]
\newtheorem{corollary}[theorem]{Corollary}
\newtheorem{lemma}[theorem]{Lemma}
\newtheorem{proposition}[theorem]{Proposition}
\theoremstyle{definition}
\newtheorem{definition}[theorem]{Definition}
\newtheorem{assumption}[theorem]{Assumption}
\theoremstyle{remark}
\newtheorem{remark}[theorem]{Remark}
\newtheorem{example}[theorem]{Example}
\numberwithin{equation}{section}
\newcommand\numberthis{\addtocounter{equation}{1}\tag{\theequation}}
\title{Kernel Limit for a Class of Recurrent Neural Networks Trained on Ergodic Data Sequences}
\author{Samuel Chun-Hei Lam\footnote{Mathematical Institute, University of Oxford, E-mail: \url{Samuel.Lam@math.ox.ac.uk}.}\thanks{Samuel Lam's fellowship is supported by the EPSRC Centre for Doctoral Training in Mathematics of Random Systems: Analysis, Modelling and Simulation (EP/S023925/1).}, Justin Sirignano\footnote{Mathematical Institute, University of Oxford, E-mail: \url{Justin.Sirignano@maths.ox.ac.uk}.}, and Konstantinos Spiliopoulos\footnote{Department of Mathematics \& Statistics, Boston University, E-mail: \url{kspiliop@math.bu.edu}.}
\thanks{This article is part of the project ``DMS-EPSRC: Asymptotic Analysis of Online Training Algorithms in Machine Learning: Recurrent, Graphical, and Deep Neural Networks" (NSF DMS-2311500).}
\thanks{Konstantinos Spiliopoulos was also partially supported by NSF DMS-2107856 and Simons Foundation Award 672441.}}
\begin{document}
\maketitle

\begin{abstract}
Mathematical methods are developed to characterize the asymptotics of recurrent neural networks (RNN) as the number of hidden units, data samples in the sequence, hidden state updates, and training steps simultaneously grow to infinity. In the case of an RNN with a simplified weight matrix, we prove the convergence of the RNN to the solution of an infinite-dimensional ODE coupled with the fixed point of a random algebraic equation. The analysis requires addressing several challenges which are unique to RNNs. In typical mean-field applications (e.g., feedforward
neural networks), discrete updates are of magnitude $\mathcal{O}(1/N)$ and the number of updates is $\mathcal{O}(N)$. Therefore, the system can be represented as an Euler approximation of an appropriate ODE/PDE, which it will converge to as $N \rightarrow \infty$. However, the RNN hidden layer updates are $\mathcal{O}(1)$. Therefore, RNNs cannot be represented as a discretization of an ODE/PDE and standard mean-field techniques cannot be applied. Instead, we develop a fixed point analysis for the evolution of the RNN memory states, with convergence estimates in terms of the number of update steps and the number of hidden units. The RNN hidden layer is studied as a function in a Sobolev space, whose evolution is governed by the data sequence (a Markov chain), the parameter updates, and its dependence on the RNN hidden layer at the previous time step. Due to the strong correlation between updates, a Poisson equation must be used to bound the fluctuations of the RNN around its limit equation. These mathematical methods give rise to the neural tangent kernel (NTK) limits for RNNs trained on data sequences as the number of data samples and size of the neural network grow to infinity.
\end{abstract}

\tableofcontents

\section{Introduction} \label{S:ProposedResearch}

Recurrent neural networks (RNN) are widely used to model sequential data. Examples include natural language processing (NLP) and speech recognition \cite{DeepVoice, SpeechRecognition3}. The key architectural feature of an RNN is a hidden layer
which is updated at each time step of the sequence. This hidden layer -- sometimes referred to as a ``memory layer" -- is a nonlinear representation of the history of the data sequence. Using its hidden layer, the
RNN can -- in principle -- learn functions which map the path of a sequence (of arbitrary length) to fixed-dimensional vector predictions. The RNN's hidden layer therefore provides a parsimonious, nonlinear representation of the data in the sequence up until the current time. An RNN is trained by minimizing an appropriate loss function over a high-dimensional set of parameters using a gradient-descent-type algorithm.

The mathematical theory for RNNs is limited. In this article, we study the asymptotics of a single-layer RNN as the number of hidden units, training steps, and data samples in the sequence tend to infinity.  In the case of an RNN with a simplified weight matrix, we prove the convergence of the RNN to the solution of an infinite-dimensional ordinary differential equation (ODE) coupled with the fixed point of a random algebraic equation.

For feed forward neural networks (NNs) with i.i.d. data samples, limits have been proven as the number of hidden units, training steps, and data samples tend to infinity. The dynamics of the output of the trained network converges to either an ODE (the Neural Tangent Kernel NTK limit) \cite{jacot2018neural} or a  PDE (the mean-field limit) \cite{ChizatBach2018,Montanari, Rotskoff_VandenEijnden2018,NeuralNetworkLLN} depending upon the normalization used for the neural network output. For the NTK case, the equation for the limit neural network can be studied to prove global convergence of the neural network to the global minimizer of the objective function. Proving limit results for RNNs is substantially more challenging. The data sequence is not i.i.d., which complicates the analysis of the evolution of the trained neural network. Furthermore, the RNNs cannot be studied using standard mean-field or weak convergence analysis (e.g., as is true for feedforward neural networks). We explain in more detail below.

Consider a classic recurrent neural network (the standard Elman network \cite{elman}) with one hidden layer that takes in the input sequence $X = (X_k)_{k\geq 0}$ and outputs a prediction $(\hat{Y}^N_k)_{k\geq 0}$ for the target data $(Y^N_k)_{k\geq 0}$. The RNN predictions are given by the model outputs $\hat{Y}^N_k = g^N_{k}(X;\theta)$. The RNN depends on the parameters $\theta = (C,W,B)$, which must be trained on data. In particular, for all $k \geq 0$, the RNN hidden layer $S_k^N$ and predictions $\hat{Y}_k^N$ are updated as:
\begin{align}
S^{i,N}_{k+1}(X;\theta) &:= \sigma \bracket{(W^i)^\top X_k + \frac{1}{N^{\beta_1}} \sum_{j=1}^N B^{ij} S^{j,N}_{k}(X;\theta)}, \quad S^{i,N}_{0}(X;\theta) = 0 \label{eq:StandardRNNmemory}\\
g^N_{k}(X;\theta) &:= \frac{1}{N^{\beta_2}} \sum_{i=1}^N C^i S^{i,N}_{k+1}(X;\theta),
\label{eq:StandardRNN}
\end{align}
where
\begin{itemize}
    \item $N$ is the number of hidden units in the memory states $S_k^{i,N}(X;\theta)$,
    \item $\beta_1 = 1$ and $\beta_2$ determine the scaling used to normalise the outputs of the network,
    \item $C \in \R^N$, with $C^i$ representing the $i$-th component of $C$,
    \item $W \in \R^{N\times d}$, with $W^i$ representing the $i$-th column of $W$ as a column vector, and
    \item $B \in \R^{N\times N}$, with $B^{ij}$ is the $(i,j)$-entries of $B$.
\end{itemize}

 The data samples $X_k$, which are elements of a data sequence, are \emph{not} i.i.d. (unlike for feedforward neural networks). In our mathematical analysis, we will make the simplifying assumption that all columns of $B$ are equal, i.e. for all $j$ we have $B^{ij}=B^j$ for some $B^j$. Generalizing this assumption and developing a mean-field limit for the full-weight matrix $B^{ij}$, without any simplifying assumptions, is an interesting topic for future research. Similarly, deriving a mean-field limit for $\frac{1}{2} \leq \beta_1 < 1 $ would be interesting since, for $\beta_1 = 1$, the $W^i$ parameters primarily drive the limit dynamics while it is expected for $\frac{1}{2} \leq \beta_1 < 1 $ that the $B^{ij}$ will also become important drivers of the limit dynamics.

The memory state $S_k^{N,i}(X;\theta)$ is a non-linear representation of the history of the data sequence $(X_j)_{j=0}^{k-1}$. Using this non-linear representation -- which is learned from the data by training the parameters $\theta$ -- the RNN generates a prediction $\hat Y_k^N$ for the target data $Y_k$. Notice that if we fix $X_k = x$ and $B^{ij} = 0$, \eqref{eq:StandardRNN} becomes a standard feedforward network (i.e., the network does not dynamically evolve over time $k$ and the network output is a static prediction). Limits for gradient-descent-trained feedforward networks as the number of hidden units $N \rightarrow \infty$ can been established when $ \frac{1}{2} \leq \beta_2 < 1$ (the NTK limit \cite{jacot2018neural} and \cite{NENN,dNENN}) or $\beta_2 = 1$ (the mean-field limit \cite{ChizatBach2018, Montanari, Rotskoff_VandenEijnden2018, NeuralNetworkLLN}).

A ``typical" limit ODE from mean-field analysis will not occur in for the RNNs and standard mean-field techniques (see, for example, \cite{DeepNN}) cannot be directly applied. As an illustrative example, standard mean-field techniques would be applicable to a residual-type neural network with the following updates:
\begin{eqnarray}
S^{i,N}_{k+1}(X;\theta) &:=& {\color{blue} S^{i,N}_{k}(X;\theta)} + {\color{red} \frac{1}{N}} \sigma \bracket{(W^i)^\top X_k + \frac{1}{N^{\beta_1}} \sum_{j=1}^N B^{ij} S^{j,N}_{k}(X;\theta)}, \label{eq:StandardMeanFieldProblem} \\
S^{i,N}_{0}(X;\theta) &=& 0. \nonumber
\end{eqnarray}
\eqref{eq:StandardMeanFieldProblem} is an Euler approximation of an ODE with step size $\mathcal{O}(1/N)$. \eqref{eq:StandardMeanFieldProblem} is a standard mean-field framework and it can be shown as $N \rightarrow \infty$ that \eqref{eq:StandardMeanFieldProblem} converges to an appropriate infinite-dimensional ODE. However, the $\textcolor{blue}{S^{i,N}_{k}(X;\theta)}$  and $\textcolor{red}{1/N}$ do not appear in the RNN \eqref{eq:StandardRNN}. This changes the analysis: the RNN hidden layer \eqref{eq:StandardRNN} is not an Euler approximation to an ODE.

Although \eqref{eq:StandardRNN} is not a standard mean-field equation, we can observe mean-field behaviour for the distribution of the hidden layer. This is illustrated by Figure \ref{fig:3} where, for varying $N$, we simulated paths of the hidden layer $(S^{i,N}_k(X;\theta))_{i=1}^N$, based upon a common $\theta$ and independent paths of the input sequence $X$. The empirical distributions of the hidden units in the memory layer at a large, fixed time step $k$, is displayed as $N \to +\infty$. The empirical distributions converge as $N \rightarrow \infty$. Figure \ref{fig:3} suggests that a mean-field limit as $N \rightarrow \infty$ does exist. Further details of the simulation are provided in Appendix \ref{S:numerics}.

\begin{figure}[t]
    \centering
    \includegraphics[width=0.75\textwidth]{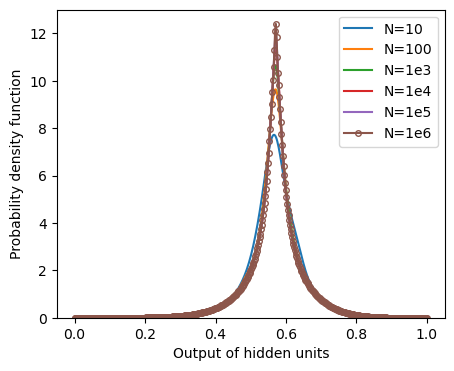}
    \caption{Each curve represents the \textit{overall} empirical distributions of the untrained hidden units in the memory states (the \textit{hidden memory units}) from all simulation instances $\ell = 1, \ldots, 100$ for $N = 10^2, ... 10^6$ and time step $k \approx 50000$.}
    \label{fig:3}
\end{figure}

Furthermore, numerical simulations suggest that the hidden layer is ergodic as the time steps $k \rightarrow \infty$. Figure \ref{fig:1_truncated} displays the time-averaged first and second moments of the hidden layer. The formal definition of the time averages is provided in Appendix \ref{S:numerics}. Figures \ref{fig:3} and \ref{fig:1_truncated} together motivate an analysis of the RNN \eqref{eq:StandardRNN} as both $k, N \rightarrow \infty$.

\begin{figure}[t]
\centering
\includegraphics[width=\textwidth]{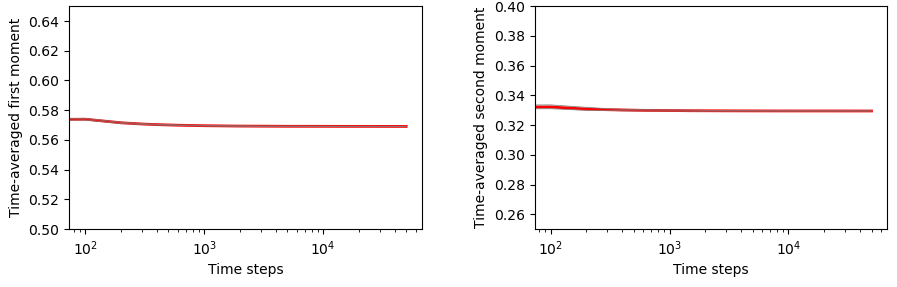}
\caption{The plots of the time-averaged first and second moments of the hidden units for a sufficiently large $N$ (chosen to be $10^6$) and $p = 1,2$. The $x$-axis represents the number of time steps. We summarise the minimum/maximum of the simulated first and second moments of the time-averages for independent input sequences $X$ using a (seemingly invisible) grey band. The red line represents the mean of the time-averaged moments for all input sequences $X$, thus providing a Monte-Carlo estimate for the moments of the \textit{random} fixed point. The fact that the realisations of the time averages all converge as $k \to \infty$ illustrates the ergodicity of the sequence $S^{i,N}_k(X;\theta)$.}
\label{fig:1_truncated}
\end{figure}

Our mean-field analysis studies an appropriate fixed point for the untrained hidden units in the memory states, and use it to study the evolution of the RNN. The case for a constant input sequence $X_k = x$ (with a finite number of hidden units) has been studied in \cite{recurrent_BP_1, recurrent_BP_2, recurrent_BP_3} for developing a more efficient gradient-descent-type algorithm. However, the fixed point analysis is more complex when the input sequence $X_k$ is non-constant and random. As a result, we would expect the RNN to have a \textit{random} fixed point and that the \textit{distribution} of the untrained hidden unit in the memory states should converge \emph{weakly}. The random fixed point should also depend on the distribution of the data sequence $X_k$.

We are able to prove that such a random fixed point exists if the data sequence $X_k$ is ergodic. Subsection \ref{SS:dynamics_of_sample_memories} shows that the ergodicity of $X$ actually leads to the ergodicity of $S^{i,N}_k(X;\theta)$.

The fixed point analysis is further complicated since the parameters $\theta$ will be simultaneously trained using the truncated backpropagation through time (tBPTT) algorithm as the hidden layer is updated at each time $k$; see Section \ref{S:ModelSetup}. Both the hidden layers $S_k^N(X;\theta_k)$ and the parameters $\theta_k$ (which govern the transition function for the hidden layer $S_k^N$) will jointly evolve in time. Therefore, the dynamics of the RNN will be changing over time. Fortunately, the changes due to the parameter updates will be of magnitude $\mathcal{O}(1/N)$. Therefore, the evolution of the output layer of the network ($g^N_{k}(X;\theta) = \frac{1}{N^{\beta_2}} \sum_{i=1}^N C^i S^{i,N}_{k+1}(X;\theta)$) can be represented as an Euler approximation of an appropriate infinite-dimensional ODE whose dynamics are a function of the RNN hidden layer's random fixed point, which it will converge to as $N \rightarrow \infty$. We emphasise that the evolution of the RNN hidden layer itself (i.e., $S^{i,N}_{k+1}(X;\theta)$) cannot be represented as a discretization of an ODE/PDE since the RNN hidden layer updates are $\mathcal{O}(1)$.

The algebraic equation which the fixed point satisfies specifically arises due to the memory and feedback connection in the recurrent structure of the RNN. Unlike in a feedforward network (which simply evaluates a function on i.i.d. data samples), the RNN will use the state of the hidden layer $S_k^N$ at the last time step $k$ (and the new data $X_k$ which arrives at time $k$) to update the state of the hidden layer to its new value $S_{k+1}^N$ at time $k+1$. Thus, there is a random map from $S_k, X_k \rightarrow S_{k+1}$ (where the randomness is due to the data $X_k$). We are able to prove that (the distribution of) $S_k$ converges to the solution of an appropriate fixed point equation (an algebraic equation) as $k \rightarrow \infty$. Since the map from $S_k, X_k \rightarrow S_{k+1}$ depends also on the data $X_k$, the ergodicity assumption for the underlying data sequence $X_k$ is required to study this map as time $k \rightarrow \infty$ and for a fixed point to exist. These mathematical features do not appear in the analysis of a standard fully-connected feedforward neural network architecture. In a standard feedforward neural network architecture, data samples are i.i.d. (i.e., they are not from a data sequence) and the hidden layer of the neural network for data sample $k+1$ is not affected by the hidden layer for data sample $k$ (the neural network evaluations and their hidden layers are completely independent for different data samples). Therefore, RNNs introduce non-trivial challenges due to the strong dependency of the hidden layer across the sequence of data samples at times $1, 2, \ldots, k, k+1, \ldots$ (i.e., the ``memory" of the RNN).

We present our main result on the limit dynamics of the RNN output in Subsection \ref{SS:ModelDynamics}. In Subsection \ref{SS:LimitProblemAnalysis} we prove that the limiting average loss is monotonically decreasing during training.

The RNN hidden layer is studied as a function in the appropriate space, whose evolution is governed by the data sequence $X_k$, the parameter updates, and its dependence on the RNN hidden layer at the previous time step. Due to the correlation between updates (including both the hidden layer and data samples), a Poisson equation must be used to bound the fluctuations of the RNN around its limit equation.

The rest of the paper is organized as follows. In Section \ref{S:ModelSetup} we present the model that we study and our main assumptions. The main results are discussed in Section \ref{S:main_results}, which involves convergence in distribution of the RNN memory layer and RNN network output to an ODE. The proof is spanned across two sections: Section \ref{S:proof_of_dynamics_of_memories} studies the RNN memory/hidden layer, and \ref{S:weak_convergence} studies the network outputs.

The paper contains a few appendices. Appendix \ref{S:recursive_inequality} presents a recursive inequality that is used in various places throughout the paper. Appendix \ref{S:construction_of_clipping_function} provides details on how to construct a clipping function to be used in the training of RNN. Appendix \ref{S:increments_of_parameters} establishes a-priori bounds of increments of parameters during training, which would be extensively used in the main proof. Finally, Appendix \ref{S:numerics} provides details on the numerical simulation.

\section{Assumptions, Data, and Model Architecture}
\label{S:ModelSetup}

\subsection{Data generation}
Our paper focuses on the problem of recovering the map from an input data sequence to an output data sequence. We assume the input data sequence $X = (X_k)_{k\geq 0}, X_k \in \R^d$, depends on the hidden state $Z = (Z_k)_{k\geq 0}, \, Z_k \in \R^{d_z}$, such that the joint process $(X_k, Z_k)$ is a time-homogeneous Markov chain with transition kernel $P$ that satisfies
\begin{equation}
\mathbb{P}((X_{k+1}, Z_{k+1}) \in \cdot \mid (X_k, Z_k) = (x,z)) = P((x,z), \cdot) \label{eq:dynamics_eq1}
\end{equation}
The output data sequence $Y = (Y_k)_{k\geq 0}, Y_k \in \R^{d_y}$ shall be governed by the following equation:
\begin{equation}
Y_k = f(X_k, Z_k, \eta_k), \label{eq:dynamics_eq2}
\end{equation}
where $(\eta_k)_{k\geq 0}, \eta \in \R^{d_\eta}$ are independent, identically distributed (iid) noises independent with $(X_k, Z_k)_{k\geq 0}$, and $f:\R^{d+d_z+d_\eta} \to \R^{d_y}$ is a map. We emphasise that only the inputs $(X_k)_{k\geq 0}$ and the outputs $(Y_k)_{k\geq 0}$ are observed, and it is not guaranteed whether the observation $(X_k, Y_k)_{k\geq 0}$ is actually a Markov process.

We assume that the joint process $(X_k, Z_k)_{k\geq 0}$ is nicely behaved. To specify this we need the notion of a (2-)Wasserstein distance between probability distributions. Recall that for $p\geq 1$, the $p$-Wasserstein space could be defined on a complete (Banach) and separable normed space $(\X,\norm{\cdot})$: is Banach (i.e. complete) and separable (see, for example, \cite{Villanioldandnew}):
\begin{definition}[Wasserstein Metric]
Let $\rho_1, \rho_2$ are measures in the $p$-Wasserstein space of $(\mathcal{X}, \norm{\cdot})$:
\begin{equation}
    \rho_1, \rho_2 \in \mathcal{P}^{\norm{\cdot}}_p(\mathcal{X}) = \set{\rho \,:\, \int_\mathcal{X} \norm{u}^p_\mathcal{X} \, \rho(du) < +\infty}\nonumber
\end{equation}
A measure $\rho$ on $\mathcal{X} \times \mathcal{X}$ is a \textit{coupling} between $\rho_1$ and $\rho_2$ if
\begin{equation}
    \rho_1(\cdot) = \rho(\cdot \times \X) \quad \rho_2(\cdot) = \rho(\X \times \cdot).\nonumber
\end{equation}
The $p$-Wasserstein distance between $\rho_1$ and $\rho_2$ with respect to norm $d$ is defined as $\Wass^{\norm{\cdot}}_p(\rho_1, \rho_2)$ that satisfies
\begin{equation}
    \sqbracket{\Wass^{\|\cdot\|}_p(\rho_1, \rho_2)}^p = \inf_{\gamma \in \Gamma(\rho_1,\rho_2)} \int_{\X \times \X} \norm{u-v}^p_\X \, \gamma(du \, dv), \label{eq:Wasserstein_def}
\end{equation}
where $\Gamma(\rho_1, \rho_2)$ is the set of all couplings between $\rho_1$ and $\rho_2$.
\end{definition}

The norm $\norm{\cdot}$ might be omitted in writing $\Wass^{\norm{\cdot}}_p(\rho_1, \rho_2)$ and $\cP^{\norm{\cdot}}_p(\X)$ if no confusion is created as a result.

\begin{remark} \label{rmk:optimal_coupling}
Note that the infinmum in \eqref{eq:Wasserstein_def} is a minimum \cite{Villanioldandnew}, i.e. there is an optimal coupling $\gamma^* \in \Gamma(\rho_1, \rho_2)$ such that
\begin{equation}
    \sqbracket{\Wass^{\|\cdot \|}_p(\rho_1, \rho_2)}^p = \int_{\X \times \X} \norm{u-v}_\X^p \, \gamma^*(du \, dv). \label{eq:Wasserstein_optimal}
\end{equation}
This fact will be used to prove bounds for the Wasserstein metric between successive terms of sequence of distribution $\mu_k$.
\end{remark}

With the notion of Wasserstein distance, we make the following assumptions for the transition kernel $P$ and the function $f$.

\begin{assumption}[On governing dynamics and background noises of the data sequences] \label{as:data_generation} \phantom{Blah\\}
\begin{enumerate}
\item The transition kernel $P$ is contractive in the following sense: there exists $L < 1$ such that for any $(x,z), (\tilde{x}, \tilde{z}) \in \R^{d+d_z}$,
\begin{equation}
    \frac{\Wass_2(P((x,z), \cdot), P((\tilde{x}, \tilde{z})), \cdot)}{|(x,z) - (\tilde{x},\tilde{z})|} \leq L.
\end{equation}
\item The sequence $|X_k, Z_k| \leq C_x$, where $C_x <\infty$.
\item The noises $\eta_k$ satisfies $\E|\eta_k|^2 = C_\eta < +\infty.$
\item The function $f$ is bounded and Lipschitz in all arguments.
\end{enumerate}
\end{assumption}

Assumptions \ref{as:data_generation}(1-2) are satisfied by the following examples.
\begin{example}
Let $g:\R^{d+d_z} \to \R^{d+d_z}$ be a bounded, $L$-Lipschitz function with $L < 1$, and $\epsilon_k \in \R^{d+d_z}$ (for $k\geq 1$) be bounded, independent, identically distributed (i.i.d.). Then, the sequence $(X_k, Z_k)$ defined recursively as
\begin{align*}
(X_{k+1}, Z_{k+1}) = g(X_k, Z_k) + \epsilon_{k+1}, \quad k\geq 0,
\end{align*}
with $(X_0, Z_0)$ initialised independently of the sequence $(\epsilon_{k})_{k\geq 1}$, satisfies Assumptions \ref{as:data_generation}(1-2).
\end{example}

Here is a related example that involves an i.i.d. sequence $(\epsilon_k)$ but does not require its boundedness.
\begin{example}
Let $g:\R^{d+d_z+1} \to \R^{d+d_z}$ be a bounded function such that $g(\cdot, \cdot, \epsilon)$ is a $L$-Lipschitz function with $L < 1$ for any $\epsilon$. Assume that $\epsilon_k \in \R$ are i.i.d. Then the sequence $(X_k, Z_k)$, defined recursively as
\begin{equation*}
(X_{k+1}, Z_{k+1}) = g(X_k, Z_k, \epsilon_{k+1}), \quad k \geq 0,
\end{equation*}
with $(X_0, Z_0)$ initialised independently of the sequence $(\epsilon_{k})_{k\geq 1}$, satisfies Assumptions \ref{as:data_generation}(1-2).
\end{example}

As we shall see in Theorem \ref{prop:conv_of_mu_k} and its proof in Subsection \ref{SS:dynamics_of_memory_units}, these assumptions are sufficient to ensure that the sequence of RNN memory layers is ergodic.

\subsection{Recurrent Neural Network}
A recurrent neural network is trained with the tBPTT algorithm to approximate the function $g$. Specifically, we will study the standard recurrent network in equation (\ref{eq:StandardRNN}) but with the following simplifying assumption for its weight matrix:

\begin{assumption}[Simplifying assumption for memory layer weight matrix]\label{as:memoryWeights}
We assume that all columns of $B$ are equal, that is, for all $j$ we have $B^{ij}=B^j$ for some $B^j$.
\end{assumption}

\begin{assumption}[Regularity of activation function] \label{as:activation_function}
We assume that $\sigma \in C^2_{b}(\R)$ (i.e., twice-continuously differentiable with bounded derivatives) and define:
\begin{equation*}
C_\sigma = \sup_z |\sigma(z)|, \quad C_{\sigma'} = \sup_z |\sigma'(z)|, \quad C_{\sigma''} = \sup_z |\sigma''(z)|.
\end{equation*}
\end{assumption}

\begin{assumption}[Initialization of parameters] \label{as:ergodicity_conditions}
We assume that
\begin{itemize}
    \item the initial parameters $\theta_0 = (C^i_0, W^i_0, B^i_0)$ are independent from the \\ sequence $(X_k, Z_k, \eta_k)_{k\geq 0}$,
    \item the initial parameters $(C^i_0, W^i_0, B^i_0)_{i=1}^{N}$ are independent and identically distributed, such that $(C^i_0, W^i_0, B^i_0) \sim \lambda_0 := \lambda(dc,dw,db)$, with $\lambda(dc,dw,db)$ is absolutely continuous (i.e., admitting a density) with respect to Lebesgue measure,
    \item for each $i$, the random variables $C^i_0$, $W^i_0$ and $B^i_0$ are mutually independent,
    \item there exist constants $\mu_{c^2}, \mu_b, \mu_{b^2}, \mu_{w^2}, K_c, K_b \in (0,+\infty)$ such that
    \begin{align*}
        |C^i_0| &\leq K_c, \quad \E[C^i_0] = 0, \quad \E[C^i_0]^2 = \mu_{c^2} \\
        |B^i_0| &\leq K_b, \quad \E[B^i_0] = \mu_b, \quad \E[B^i_0]^2 = \mu_{b^2}, \quad \E|W^i_0|^2 = \mu_{w^2}, \\
        \mu_b^2 &\leq \mu_{b^2} \leq K_b^2, \quad \mu_{c^2} \leq K^2_c, \quad 2 K_b^2 C_{\sigma'}^{2}<1,
    \end{align*}
    and
    \begin{align}
        q_0 := \sqrt{\max\{L^{2}+2\mu_{b}^{2}\mu_{w^{2}}C_{\sigma'}^{2},2\mu_{b}^{2}C_{\sigma'}^{2}\}} < 1.
    \end{align}
\end{itemize}
\end{assumption}

Assumption \ref{as:ergodicity_conditions} on the relation of the upper bounds and constants $L,\mu_{b}, \mu_{w^{2}}C_{\sigma'}, K_{b}$ is instrumental in our proof of the ergodicity of the memory states in Subsection \ref{SS:dynamics_of_memory_units}. It is used to obtain a contraction mapping, which in turns guarantees the existence of a limit. At an intuitive level, the relation between the different upper bounds of the involved parameters shows that the memory effect should not be too strong. Having said that, if one can establish the existence of a limit in a different way, then Assumption \ref{as:ergodicity_conditions} may not be needed.  In fact, we conjecture from our simulation results that the RNN exhibits the desired mean-field behaviour under possibly weaker assumptions. Despite our best efforts, we did not manage to find a viable alternative theoretical path.

At this point we mention that a careful investigation of the bounds demonstrates that the condition $|C^i_0| \leq K_c$ can be replaced by the moment boundedness condition $\E|C^i_0|^{8} \leq K_c$. We chose to present the arguments with the more restrictive  condition $|C^i_0| \leq K_c$ for the purposes of a more concise presentation and exposition.

\begin{example}
An example of an activation function satisfying our assumptions is the standard sigmoid function, defined as
\begin{equation}
    z \in \R \mapsto \sigma(z) = \frac{1}{1+e^{-z}} \implies \quad\sup_{z\in\R}|\sigma(z)|\leq 1.\nonumber
\end{equation}

One could show (see e.g. \cite{Goodfellow}) that $\sigma'(z) \in [0,1/4]$ and $|\sigma''(z)| \leq 1/4$ for any $z \in \R$. As a result, the sigmoid function satisfies our assumption when $L<1/2$ and $K_b,\mu_{b},\mu_{w^{2}}<1$. Alternatively, if the activation function is such that $C_{\sigma'}$ is larger, then Assumption \ref{as:ergodicity_conditions} suggests that the variables $B^{i}_{0}$ and $W^{i}_{0}$ should be initialized with  possibly smaller upper bounds so that Assumption \ref{as:ergodicity_conditions} holds.
\end{example}

In our notation later in the paper, the dependence of the RNN hidden layer $S_{k}^{i,N}(X; \theta)$ on $X$ may be dropped in the later sections.

Assumptions \ref{as:data_generation}, \ref{as:memoryWeights}, \ref{as:activation_function} and \ref{as:ergodicity_conditions} are assumed to hold throughout the paper.
\subsection{Training the RNN parameters}
The parameters $\theta$ are in practice trained with an online SGD algorithm seeking to minimize the objective function
\begin{eqnarray}
\L(\theta) = \frac{1}{T} \sum_{k=1}^T \L_k(\theta), \quad \text{where }\L_k(\theta) = \frac{1}{2} (g^N_k(X;\theta) - Y_k)^2.\nonumber
\label{eq:TrueObj}
\end{eqnarray}

The computational cost of evaluating the RNN network up to time step $T$ and performing one full evaluation of the gradient by back-propagation through time (BPTT) grows as $\O(T)$. This means that a single gradient descent iteration to train the RNN has a computational cost of $\O(T)$, which becomes computationally intractable for large $T$. We instead update the parameters through the \textit{online stochastic gradient descent with truncated back-propagation through time} (online SGD with tBPTT) \cite{WilliamsPeng1990} \cite{Sutskever2012}. A detailed explanation is given in the next section, but the main idea is to truncate the  computational graph up to $\tau$ time steps for $\tau \ll T$ when computing the outputs of the RNN and gradients with respect to the parameters. For simplicity and without loss of generality, we restrict our discussion to the case when $\tau = 1$.

\paragraph{Assumptions for the wide-network limit}
In our paper we set $\beta_1 = 1$. Depending on the choice of $\beta_2$, we will get different limits. Prior analysis for feedforward NNs (including our prior work),  \cite{NeuralNetworkLLN, RLasymptotics, PDEclosure, NENN, dNENN} demonstrate that when $N\to+\infty$, the evolution of the output of the feedforward NN converges to a limit equation:
\begin{enumerate}
    \item for $\beta_2 = 1$, the limit equation is expected to be a PDE,
    \item for $\beta_2 = \beta \in (1/2,1)$, the limit equation is expected to be an infinite-dimensional ODE, and
    \item for $\beta_2 = 1/2$, the limit equation is expected to be a \emph{random} infinite-dimensional ODE.
\end{enumerate}
Our paper will focus on the case when $\beta_2 = \beta \in (1/2,1)$. Some analyses have been done for the case when $\beta_2 = 1$ in \cite{mean_field_agazzi}, with the RNN trained \textit{offline} by continuous gradient descent after observing a fixed number of steps of the sequence $(X_k)_{k\geq 0}$. We emphasize that in our paper the RNN is trained \textit{online} and as the number of time steps $\rightarrow \infty$ (instead of prior literature which only considers a fixed finite number of time steps for the RNN and data sequence), where we update the parameters every time we observe a new time step of the data sequences $(X_k, Y_k)$. We study the asymptotics of the training of RNN as both the training time (and hence number of observations made for the input and output data sequences) and the width of the hidden layer $N$ $\rightarrow \infty$.

In order to derive a well-defined typical behaviour of the neural network when training our RNN in the limit as $N \to +\infty$, we choose the following learning rate:
\begin{equation}
   \alpha^N := \frac{\alpha}{N^{2-2\beta}},\label{Eq:LR}
\end{equation}
for some constant $\alpha > 0$.

\paragraph{Clipping the neural network output} It is challenging to prove a uniform bound for the neural network output $\hat{Y}^N_k$. To resolve this issue, we will clip the gradients used in the parameter updates. Gradient clipping is a standard method in deep learning and training RNNs \cite{Goodfellow, zhang2019gradient, exploding_gradient_2}. Note that the RNN output itself is not clipped. Furthermore, the gradient clipping will actually disappear in the final approximation argument as the number of hidden units $\rightarrow \infty$. Once the gradients in the parameter updates are clipped, the output $\hat{Y}^N_k$ can be proven to be bounded. We use the following clipping function:
\begin{definition}[Smooth clipping function \cite{CohenJiangSirignano}] \label{def:smooth_clipping_function}
A family of functions $(\psi^N)_{N \in \mathbb{N}}$, $\psi^N \in C^2_b(\R)$ is a family of \textit{smooth clipping function} with parameter $\gamma$ if the following are satisfied:
\begin{enumerate}
    \item $\abs{\psi^N (x)}$ is bounded by $2N^\gamma$,
    \item $\psi^N(x) = x$ for $x \in [-N^\gamma, N^\gamma]$,
    \item $\DS{\abs{\frac{d}{dx} \psi^N(x)} \leq 1}$.
\end{enumerate}
\end{definition}
The definition implies that $|\psi^N(x)| \leq |x|$ by the fundamental theorem of calculus. The parameter $\gamma$ should be small to ensure that $|\psi^N(\hat{Y}^N_k)|$ does not blow up too quickly as $N\to\infty$. In particular,
\begin{assumption}[Choice of $\gamma$] \label{as:choice_of_gamma}
We choose $\gamma \in (0, (1-\beta)/2)$, such that $\beta + 2\gamma < 1$.
\end{assumption}

As $\beta \in (1/2, 1)$, we must have $\gamma \in (0, 1/4)$. A construction of such a clipping function is provided in Appendix \ref{S:construction_of_clipping_function} following the arguments in \cite{NestruevJet2020Smao}. The online SGD algorithm with a tBPTT gradient estimate and clipped output is fully stated as Algorithm \ref{alg:onlineSGDwithBPTT}.

In this paper, we will make use of the notation ``$a \lesssim b$" to represent that there is a constant $\sfC > 0$ such that $a \leq \sfC b$. The constant $\sfC$ should be independent of $N$. We shall denote by $C_{T}<\infty$ constants that may depend on $T<\infty$. Using this notation, we next obtain the following bound (uniform in $i$) for how much the parameters change from their initial conditions during training.

\begin{algorithm}[t]
\caption{Online SGD with tBPTT ($\tau=1$)}\label{alg:onlineSGDwithBPTT}
\begin{algorithmic}[1]
\Procedure{SGDtBPTT}{$N, \lambda, T$} \Comment{network size, initial parameters distribution, running time}
    \State \textbf{Initalise:} initial parameters $\theta=(C_0,W_0,B_0) \sim \lambda$, initial memory layer $\forall i, \hat{S}^{i,N}_0 = 0$, step $k = 0$
    \While{$k \leq NT$}
    \ForAll{$i \in \{1,2,...,N\}$} \Comment{Truncated forward propagation}
        \State $\DS{\hat{S}^{i,N}_{k+1} \gets \sigma\bracket{(W^i_k)^\top X_k + \frac{1}{N} \sum_{j=1}^N B^j_k \hat{S}^{j,N}_k}}$ \Comment{Updating memory}
    \EndFor
    \State $\DS{\hat{Y}^N_k \gets \frac{1}{N^\beta} \sum_{i=1}^N C^i_k \hat{S}^{i,N}_{k+1}}$ \Comment{Updating output}
    \State $\DS{\hat{L}_k = \frac{1}{2}(\hat{Y}^N_k - Y_k)^2}$ \Comment{Computing loss}
    \ForAll{$i \in \{1,2,...,N\}$} \Comment{Truncated backward propagation on $\hat{L}_k$}
        \State $\DS{\Delta\hat{S}^{i,N}_{k+1} \gets \sigma'\bracket{(W^i_k)^\top X_k + \frac{1}{N} \sum_{j=1}^N B^j_k \hat{S}^{i,N}_k}}$
        \State $\DS{C^i_{k+1} = C^i_k - \frac{\alpha}{N^{2-\beta}}(\psi^N(\hat{Y}^N_k) - Y_k) \hat{S}^{i,N}_{k+1}}$
        \State $\DS{W^i_{k+1} = W^i_k - \frac{\alpha C^i_k}{N^{2-\beta}}(\psi^N(\hat{Y}^N_k) - Y_k) \Delta\hat{S}^{i,N}_{k+1}} X_k$
        \State $\DS{B^i_{k+1} = B^i_k - \frac{\alpha}{N^{3-\beta}}(\psi^N(\hat{Y}^N_k) - Y_k) \sum_{\ell=1}^N C^\ell_k  \hat{S}^{\ell,N}_k \Delta\hat{S}^{\ell,N}_{k+1}}$
    \EndFor
    \EndWhile
\EndProcedure
\end{algorithmic}
\end{algorithm}

\begin{lemma} \label{lem:evolution_of_parameters}
Fix $T>0$. If we choose $\gamma \in (0, (1-\beta)/2)$ as in assumption \ref{as:choice_of_gamma}, then for all $k$ with $k/N \leq T$, there exists $C_T > 0$ such that
\begin{equation} \label{eq:evolution_of_parameters}
|C_k^i - C_0^i| + \|W_k^i - W_0^i \| + |B_k^i - B_0^i| \leq \frac{C_T}{N^{1-\beta-\gamma}}.
\end{equation}
\end{lemma}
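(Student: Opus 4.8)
Proof proposal for Lemma~\ref{lem:evolution_of_parameters}.

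The plan is to bound each coordinate of $C$, $W$, and $B$ over a single SGD step using the a priori bounds available from our standing assumptions and from the clipping function, and then to sum (telescope) these per-step increments over the at most $\lceil NT\rceil$ update steps with $k/N \le T$. The relevant a priori bounds are: $|\hat{S}^{i,N}_k| \le 1$ for all $i, k$ because $\sigma$ has output bounded by $1$ (Assumption~\ref{as:activation_function}); $|\Delta\hat{S}^{i,N}_{k+1}| = |\sigma'(\cdot)| \le C_\sigma$ by the same assumption; $|\psi^N(\hat{Y}^N_k)| \le 2N^\gamma$ by property (1) of Definition~\ref{def:smooth_clipping_function}, so that $|\psi^N(\hat{Y}^N_k) - Y_k| \le 2N^\gamma + C_y$ using the uniform bound $|Y_k|\le C_y$; and $\norm{X_k} \le |(X_k, Z_k)| \le 1$ by Assumption~\ref{as:data_generation}. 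The crucial point is that every one of these bounds holds irrespective of the current parameter values.

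First I would control $C$, whose update in Algorithm~\ref{alg:onlineSGDwithBPTT} decouples from $W$ and $B$:
\[
|C^i_{k+1} - C^i_k| \;\le\; \frac{\alpha}{N^{2-\beta}}\,\big|\psi^N(\hat{Y}^N_k) - Y_k\big|\,\big|\hat{S}^{i,N}_{k+1}\big| \;\le\; \frac{\alpha\,(2N^\gamma + C_y)}{N^{2-\beta}}.
\]
Summing over the at most $\lceil NT\rceil$ steps and using $N^{-\gamma}\le 1$ to absorb the lower-order $C_y$ term, we obtain $|C^i_k - C^i_0| \le C^{(1)}_T\,N^{-(1-\beta-\gamma)}$ for a constant $C^{(1)}_T$ depending only on $T$ (and on the fixed constants $\alpha, C_y$). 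Since $1-\beta-\gamma > 0$ by Assumption~\ref{as:choice_of_gamma}, this bound is at most $C^{(1)}_T$ for every $N\ge 1$, so with $|C^i_0|\le 1$ from Assumption~\ref{as:parameters} we get $|C^i_k| \le M_T := 1 + C^{(1)}_T$ uniformly in $i$, $k$ and $N$ on the range $k/N\le T$.

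Armed with the uniform bound $|C^i_k|\le M_T$, the remaining two estimates follow the same way. From the $W$-update,
\[
\norm{W^i_{k+1} - W^i_k} \;\le\; \frac{\alpha\,|C^i_k|}{N^{2-\beta}}\,\big|\psi^N(\hat{Y}^N_k) - Y_k\big|\,\big|\Delta\hat{S}^{i,N}_{k+1}\big|\,\norm{X_k} \;\le\; \frac{\alpha\,M_T\,C_\sigma\,(2N^\gamma + C_y)}{N^{2-\beta}},
\]
and from the $B$-update, bounding $\sum_{\ell=1}^N |C^\ell_k|\,|\hat{S}^{\ell,N}_k|\,|\Delta\hat{S}^{\ell,N}_{k+1}| \le N\,M_T\,C_\sigma$ and using the extra factor $1/N$ in the $B$-learning rate,
\[
|B^i_{k+1} - B^i_k| \;\le\; \frac{\alpha}{N^{3-\beta}}\,\big|\psi^N(\hat{Y}^N_k) - Y_k\big|\,N\,M_T\,C_\sigma \;\le\; \frac{\alpha\,M_T\,C_\sigma\,(2N^\gamma + C_y)}{N^{2-\beta}}.
\]
Summing each over the at most $\lceil NT\rceil$ steps yields $\norm{W^i_k - W^i_0} \le C^{(2)}_T\,N^{-(1-\beta-\gamma)}$ and $|B^i_k - B^i_0|\le C^{(3)}_T\,N^{-(1-\beta-\gamma)}$ exactly as before, and adding the three bounds with $C_T := C^{(1)}_T + C^{(2)}_T + C^{(3)}_T$ proves the claim.

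The argument is essentially bookkeeping, and I do not expect a genuine obstacle; the one point requiring care — and the reason the estimate closes — is the interplay of scales together with the ordering of the steps. The $W$- and $B$-increments involve the $C$-coordinates (and, for $B$, the full sum $\sum_\ell |C^\ell_k|$ of $N$ of them), so one must first obtain a parameter-independent and $N$-uniform bound on the $C^i_k$; this is possible precisely because the $C$-update involves only quantities bounded irrespective of the parameters, and because the clipping function keeps the error prefactor $|\psi^N(\hat{Y}^N_k) - Y_k|$ of size $\mathcal{O}(N^\gamma)$ rather than letting it grow with the (unclipped) output $\hat{Y}^N_k$. Combined with the learning-rate normalisation of order $N^{-(2-\beta)}$ and the $\mathcal{O}(N)$ steps taken per unit of time $T$, this produces the stated $N^{-(1-\beta-\gamma)}$ decay; no fixed-point or ergodicity input (which enters only in the later sections) is needed at this stage.
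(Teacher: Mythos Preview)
Your proof is correct and follows essentially the same approach as the paper: first bound the $C$-increments using only the clipping bound $|\psi^N(\hat Y^N_k)|\le 2N^\gamma$ and $|\hat S^{i,N}_{k+1}|\le 1$, telescope to get $|C^i_k|\le C_T$ uniformly, and then feed this uniform bound into the $W$- and $B$-increments before telescoping again. The ordering you highlight---establishing the $N$-uniform bound on $|C^i_k|$ before tackling $W$ and $B$---is exactly the structure of the paper's argument.
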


\begin{proof}
See Appendix \ref{SS:proof_of_evolution_of_parameters_with_clipping}.
\end{proof}

\begin{remark} \label{rmk:less_sharp_evolution_of_parameters}
One can also show, with the absence of clipping function $\psi^N$, that there exists a constant $C_T > 0$ such that for all $k \leq \lfloor NT \rfloor$
\begin{equation} \label{eq:increment_of_parameters}
|C^i_{k+1} - C^i_k| + \|W^i_{k+1} - W^i_k \| + |B^i_{k+1} - B^i_k| \leq \frac{C_T}{N},
\end{equation}
and therefore
\begin{equation}
|C^i_k - C^i_0| + \|W^i_k - W^i_0 \| + |B^i_k - B^i_0| \leq TC_T.
\end{equation}
The proof of this remark is also very similar - see Appendix \ref{SS:proof_of_evolution_of_parameters}. However, this does not reflect our intuition that the trained parameters are getting closer to where they are initialised as $N\to\infty$. The clipping function here is crucial in justifying the linearisation of the sample outputs with respect to the parameters in our analysis.
\end{remark}

Define the empirical measure of $(C^i_k, W^i_k, B^i_k)$ as $\lambda^N_k$, i.e.
\begin{equation}
    \lambda^N_k = \frac{1}{N} \sum_{i=1}^N \delta_{C^i_k, W^i_k, B^i_k}.\nonumber
\end{equation}

Lemma \ref{lem:evolution_of_parameters} indicates that the parameters at step $k$ should be close to the initial parameters on average, so $\lambda^N_k$ should be well-approximated by $\lambda^N$ in some sense. We therefore expect that for $k \leq \floor{TN}$, $\lambda_k^N$ converges to $\lambda$ (distribution of initialisation) in a sense to be specified. However, establishing such convergence using traditional mean-field techniques for proving convergence to a limit ODE (see e.g. \cite{RLasymptotics}) is difficult. To that end, the evolution of
\begin{equation*}
\la f, \lambda^N_k \ra = \int_{\R^{1+d+1}} f(c,w,b) \, \lambda^N_k(dc,dw,db) = \frac{1}{N} \sum_{i=1}^N f(C^i_k, W^i_k, B^i_k),
\end{equation*}
that is, the inner-product of the empirical distribution with a smooth test function $f \in C^\infty(\R^{1+d+1})$ does not look like a discretization of a differential equation. A new mathematical approach is therefore required to analyse the infinite-width RNN.

\section{Main Results}
\label{S:main_results}

There are three parts to the main results. In Section \ref{SS:dynamics_of_sample_memories}, we establish the convergence of the RNN memory layers. In Section \ref{SS:ModelDynamics}, we state the limiting ODE of the network outputs $\hat{Y}^N_k$ when the RNN is trained under tBPTT. Finally, in \ref{SS:LimitProblemAnalysis}, we show that tBPTT monotonically decreases the objective function.

\subsection{Dynamics of the RNN Hidden/Memory Layers} \label{SS:dynamics_of_sample_memories}
Let us define the function
\begin{equation}
\varsigma_{x,u}(w) := \sigma(w^\top x+u),
\end{equation}
With this, the memory layer $S^{i,N}_k(X;\theta_0)$ as defined in \eqref{eq:StandardRNNmemory} could be written as the following:
\begin{align}
S^{i,N}_{k+1}(X; \theta_0) &= \varsigma_{X_k, u^N_k}(W^i_0) = \sigma((W^i_0)^\top X_k + u^N_k), \\
u^N_k &= \frac{1}{N} \sum_{j=1}^N B^j_0 \, \varsigma_{X_{k-1}, u^N_{k-1}}(W^j_0) = \frac{1}{N} \sum_{j=1}^N B^j_0 \sigma((W^j_0)^\top X_{k-1} + u^N_{k-1}), \quad u^N_0 = 0,
\end{align}
The trained memory layer $\hat{S}^{i,N}_k$, as defined in Algorithm \ref{alg:onlineSGDwithBPTT}, could also be written as the following:
\begin{align}
\hat{S}^{i,N}_{k+1} &= \varsigma_{X_k, v^N_k}(W^i_k) = \sigma((W^i_k)^\top X_k + v^N_k), \\
v^N_k &= \frac{1}{N} \sum_{j=1}^N B^j_k \, \varsigma_{X_{k-1}, v^N_{k-1}}(W^j_{k-1}) = \frac{1}{N} \sum_{j=1}^N B^j_k \sigma((W^j_{k-1})^\top X_{k-1} + v^N_{k-1}), \quad v^N_0 = 0.
\end{align}

For notational purposes, let us also define as $\mathcal{W}=\R^{d}$ to be the state space of the random variables $W^{i}$. Next, we define the random function $h_k(w)$ that represents the mean-field limit of the RNN memory layer. The random function shall be determined by the mean-field term $u_k$ by the following:
\begin{align}
h_{k+1}(w) &= \varsigma_{X_k,u_k}(w) =  \sigma(w^\top X_k + u_k)\\
u_{k+1} &= \mu_b \int_{\mathcal{W}} \sigma(\sw^\top X_k + u_k) \, \lambda(d\sw), \quad u_0 = 0.
\end{align}
Finally, we define the following joint processes
\begin{equation*}
V^N_{k} = (X_k, Z_k, Y_k, v^N_k), \quad H^N_k = (X_k, Z_k, Y_k, u^N_k), \quad H_k = (X_k, Z_k, Y_k, u_k).
\end{equation*}

With this, we can now state a weak law of large numbers for the trained RNN memory layer (and also the memory layer at its initialisation):

\begin{lemma}[Dynamics of RNN Memory Layer] \label{lem:diff_vNk_hk}
For any $k \leq NT$, there is a constant $C_T > 0$ (depending on $T$) such that:
\begin{equation} \label{eq:diff_vNk_hk_repeated}
\|V^N_k - H^N_k\| = |v^N_k - u^N_k| \leq \frac{C_T}{N^{1-\beta-\gamma}},
\end{equation}
and for any $k \geq 0$, there is constant $C > 0$ such that
\begin{equation} \label{eq:diff_hNk_hk}
\mathbb{E}\|H^N_k - H_k\|^2 = \mathbb{E}|u^N_k - u_k|^2 \leq \frac{C}{N}.
\end{equation}
\end{lemma}

Equation \eqref{eq:diff_vNk_hk_repeated} is due to the fact that the parameters $(B^i_k, W^i_k)_{k\geq 0}$ are not too far from their initialisations $(B^i_0, W^i_0)_{k\geq 0}$, as shown in Lemma \ref{lem:evolution_of_parameters}. Equation \eqref{eq:diff_hNk_hk} is a direct result of the $L^2$ weak law of large numbers and the recursive inequality \ref{lem:recursive_inequality}.

\begin{proof}
See Subsections \ref{SS:3_1} and \ref{SS:3_2}.
\end{proof}

It remains to study the mean-field process $H_k$. Since $(X_k, Z_k)$ is ergodic, we can show that $H_k$ converges in distribution as $k\to\infty$ regardless of the distribution of $(X_0, Z_0, Y_0)$ (recall that $u_0$ is defined to be $0$). Formally,
\begin{theorem} \label{prop:conv_of_mu_k}
There exists a unique $\mu \in \mathcal{P}_2(\R^{d+d_z+d_\eta+1})$ such that for any initialisation $H_0 \sim \mu_0 \in \mathcal{P}_2(\R^{d+d_z+d_\eta+1})$, we have
\begin{equation}
    \Wass_2(\mu_k, \mu) \overset{k\to\infty}\to 0,
\end{equation}
where $\mu_k$ be the distribution of the sequence $(H_k)_{k\geq 0}$ (which depends on the distribution of $H_0$).
\end{theorem}

\begin{proof}
See Subsection \ref{SS:dynamics_of_memory_units}.
\end{proof}

\subsection{Dynamics of the RNN Outputs}
\label{SS:ModelDynamics}

To study the dynamics of the sequence $\hat{Y}^N_k$, we first study a pre-limit operator $g^N_k$, which acts on the space of bounded and second-time differentiable functions $C^2_b(\R^d)$:
\begin{equation*}
\cH = \{h : \max(\|h\|_\infty, \|h'\|_\infty, \|h''\|_\infty) \leq C\}.
\end{equation*}
The pre-limit operator is defined as:
\begin{equation}
g^N_k: h \in \cH \mapsto g^N_k(h) := \frac{1}{N^\beta} \sum_{i=1}^{N}C^i_k h(W^i_k).
\end{equation}

The $g^N_k(\cdot)$ is a sequence of random linear functionals in $\cH$ that evolves with the time step $k$. We further note that $\hat{S}^{i,N}_{k+1} = \varsigma_{X_k, v^N_k}(W^i_k)$, so
\begin{equation}
    \hat{Y}^N_k = g^N_k(\varsigma_{X_k, v^N_k}).\nonumber
\end{equation}

Lemma \ref{lem:evolution_of_parameters} asserts that the parameters do not move too far from their initialisations. Therefore, the evolution of $g^N_k$ when evaluated at a fixed function could be approximated by a Taylor expansion. This, in turn, results in an ODE approximation of the time-rescaled evolution of the operator $g^N_{\lfloor Nt \rfloor}$. Formally:
\begin{theorem} \label{thm:weak_convergence_thm}
Let $T<\infty$ be given and $t\leq T$. Define the infinite-dimensional equation for $g_t \in (C^2_b(\R^d))^*$, the dual space of $C^2_b(\R^d)$, such that for the test function $h \in \cH$:
\begin{align*}
g_t(h) &= - \alpha \int_0^t \sqbracket{\int_{\X} (g_s(\varsigma_{\sx, \su}) - \sy) \K_{\lambda}(h, \varsigma_{\sx,\su}) \, \mu(d\sH)} \, ds, \quad g_0(h) = 0, \\
\K_{\lambda}(h, \varsigma) &:= \int_{\mathcal{W}} [h(\sw) \varsigma(\sw) + \mu_{c^2} \nabla h(\sw)^\top \nabla \varsigma(\sw)] \, \lambda(d\sw). \numberthis
\label{eq:KernelLimitEqn}
\end{align*}
where $\sH = (\sx,\sz,\sy,\su) \in \X$, with $\X=\R^{d+d_{z}+d_{y}+1}$, and $\mu$ is the stationary distribution of Markov chain $(H_k)_{k\geq 0}$ obtained in Theorem \ref{prop:conv_of_mu_k}. Then, for $\gamma \in (0, (1-\beta)/2)$ sufficiently small, as $N\to\infty$ there exists a constant $C_T > 0$ such that
\begin{equation}
\sup_{t \in [0,T]} \E|g^N_t(h) - g_t(h)| \leq \frac{C_T}{N^\epsilon} \|h\|_{C^2},\nonumber
\end{equation}
where $\epsilon = (1-\beta-2\gamma) \wedge \gamma \wedge (\beta-1/2) > 0$.
\end{theorem}

A trivial corollary is the following
\begin{corollary}
For $h \in C^2_b$, the deterministic process $t \mapsto \E[g^N_t(h)]$ converges uniformly to $g_t$ on $t \in [0,T]$. In particular,
\begin{equation*}
|\E[g^N_t(h)] - g_t(h)| \leq \frac{C_T}{N^\epsilon} \|h\|_{C^2}.
\end{equation*}
\end{corollary}

The ODE can be derived by applying a Taylor expansion to the increments of $g^N_k(h)$, resulting in the emergence of a Neural Tangent Kernel (NTK). The NTK consists of two terms, one due to the update of $(C^i)$'s, and one due to the update of $(W^i)$'s. Note that the NTK $\mathcal{K}_\lambda$ is defined on an infinite-dimensional space $C^2_b(\R^d)$. Due to the ergodicity of the data, the ODE depends on the data sequence $(X_k)$ through the stationary measure as provided in Theorem \ref{prop:conv_of_mu_k}. The actual proof of weak convergence is presented in Section \ref{S:weak_convergence}.

\subsection{Limit RNN minimises the average loss} \label{SS:LimitProblemAnalysis}
Let us analyse the deterministic limiting ODE. Again we let $\sH = (\sx,\sz,\sy,\su)$ and $\tilde{\sH} = (\tilde{\sx},\tilde{\sz},\tilde{\sy},\tilde{\su})$. We therefore have the following:
\begin{align}
\int_{\X} g_t(\varsigma_{\tilde{\sx},\tilde{\su}}) \, \mu(d\tilde{\sH})
&= \int \, \sqbracket{-\alpha \int_0^t \int_{\X} (g_s(\varsigma_{\sx,\su}) - \sy) \mathcal{K}_{\lambda}(\varsigma_{\tilde{\sx},\tilde{\su}}, \varsigma_{\sx,\su}) \, \mu(d\sH) \, ds} \, \mu(d\tilde{\sH})\nonumber \\
&\overset{\text{(Fubini)}}= - \alpha \int_0^t \int_{\X}\int_{\X} (g_s(\varsigma_{\sx,\su}) - \sy) \mathcal{K}_{\lambda}(\varsigma_{\sx,\su}, \varsigma_{\tilde{\sx},\tilde{\su}}) \, \mu(d\sH) \, \mu(d\tilde{\sH}) \, ds, \label{Eq:IntergatedFunctMinizize}
\end{align}
which yields
\begin{equation*}
\frac{d}{dt} \int_{\X} g_t(\varsigma_{\tilde{\sx},\tilde{\su}}) \, \mu(d\tilde{\sH}) = -\alpha \int_{\X}\int_{\X} (g_t(\varsigma_{\sx,\su}) - \sy) \mathcal{K}_{\lambda}(\varsigma_{\sx,\su}, \varsigma_{\tilde{\sx},\tilde{\su}}) \, \mu(d\sH) \, \mu(d\tilde{\sH}).
\end{equation*}
We can further compute that
\begin{align*}
&\phantom{=}\frac{d}{dt} \sqbracket{\frac{1}{2} \int_{\X} (g_t(\varsigma_{\tilde{\sx},\tilde{\su}}) - \tilde{\sy})^2 \, \mu(d\tilde{\sH})} \\
&= \int_{\X} \sqbracket{(g_t(\varsigma_{\tilde{\sx},\tilde{\su}}) - \tilde{\sy}) \frac{dg_t(\varsigma_{\sx,\su})}{dt}}\, \mu(d\tilde{\sH}) \\
&= -\alpha \int_{\X}\int_{\X} \mathcal{K}_{\lambda}(\varsigma_{\tilde{\sx},\tilde{\su}}, \varsigma_{\sx,\su}) (g_t(\varsigma_{\tilde{\sx}, \tilde{\su}}) - \tilde{\sy}) (g_t(\varsigma_{\sx,\su}) - \sy) \, \mu(d\sH) \, \mu(d\tilde{\sH}). \numberthis
\end{align*}
We can now prove that the loss (i.e., the objective function) is monotonically decreasing during training.
\begin{proposition}[Minimisation of averaged square loss]
\label{PropositionDescentDirection}
\begin{equation}
\frac{d}{dt} \sqbracket{\frac{1}{2} \int_\mathcal{X}  (g_t(\varsigma_{\tilde{\sx},\tilde{\su}}) - \tilde{\sy})^2  \, \mu(d\tilde{\sH})} \leq 0.
\end{equation}
\end{proposition}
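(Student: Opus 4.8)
The plan is to reduce the statement to the positive semi-definiteness of the kernel $\tilde{\K}$ on $\X$, using the identity for $\frac{d}{dt}\sqbracket{\frac{1}{2}\int_\X (g_t(\varsigma(\tilde{\sH}))-\tilde{\sy})^2\,\mu(d\tilde{\sH})}$ derived immediately above the statement. Since $\alpha>0$, it suffices to prove that
\begin{equation*}
\int_\X\int_\X \tilde{\K}(\tilde{\sH},\sH)\,(g_t(\varsigma(\tilde{\sH}))-\tilde{\sy})(g_t(\varsigma(\sH))-\sy)\,\mu(d\sH)\,\mu(d\tilde{\sH}) \;\ge\; 0,
\end{equation*}
and then the conclusion is immediate.

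The key step is to recognise $\tilde{\K}$ as a Gram kernel. Reading off the last display defining $\tilde{\K}$, I would introduce, for $\sH=(\sx,\sz,\sy,\sh)\in\X$, the feature maps
\begin{align*}
\Phi_1(\sH)(c,w,b) &= \sigma\bracket{w^\top \sx + \la b'\sh(w'),\lambda\ra}\in\R, \\
\Phi_2(\sH)(c,w,b) &= c\,\sigma'\bracket{w^\top \sx + \la b'\sh(w'),\lambda\ra}\,\sx\in\R^d,
\end{align*}
viewed as elements of $L^2(\lambda)$ and $L^2(\lambda;\R^d)$ respectively; they lie in (and are uniformly bounded in the norms of) these spaces because $\abs{\sigma}\leq 1$, $\abs{\sigma'}\leq C_\sigma$, $\abs{c}\leq 1$ and $\abs{\sx}\leq 1$ on the relevant supports. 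Then one reads off directly that
\begin{equation*}
\tilde{\K}(\tilde{\sH},\sH) = \la\Phi_1(\sH),\Phi_1(\tilde{\sH})\ra_{L^2(\lambda)} + \la\Phi_2(\sH),\Phi_2(\tilde{\sH})\ra_{L^2(\lambda;\R^d)},
\end{equation*}
which exhibits $\tilde{\K}$ as a sum of two inner-product kernels, hence positive semi-definite.

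Writing $u_t(\sH)=g_t(\varsigma(\sH))-\sy$, which is bounded on $\X$ uniformly in $t\in[0,T]$ by the a priori bound from the well-posedness analysis in section \ref{SS:well_posedness_of_ODE} together with Lemma \ref{lem:new_C1} (so that $\varsigma(\sH)\in\cH$ is bounded in $H^1(\lambda)$), the Bochner integrals $\int_\X u_t(\sH)\Phi_i(\sH)\,\mu(d\sH)$ are well-defined, and Tonelli's theorem justifies interchanging the $\mu\otimes\mu$ integration with the $\lambda$ integration (the integrand being, after inserting absolute values, a product of bounded functions against finite measures). This yields
\begin{align*}
\int_\X\int_\X \tilde{\K}(\tilde{\sH},\sH)\,u_t(\tilde{\sH})u_t(\sH)\,\mu(d\sH)\,\mu(d\tilde{\sH})
&= \norm{\int_\X u_t(\sH)\Phi_1(\sH)\,\mu(d\sH)}_{L^2(\lambda)}^2 \\
&\quad + \norm{\int_\X u_t(\sH)\Phi_2(\sH)\,\mu(d\sH)}_{L^2(\lambda;\R^d)}^2 \;\ge\; 0.
\end{align*}
Substituting this back into the derivative identity gives $\frac{d}{dt}\sqbracket{\frac{1}{2}\int_\X (g_t(\varsigma(\tilde{\sH}))-\tilde{\sy})^2\,\mu(d\tilde{\sH})}=-\alpha\cdot(\text{the nonnegative quantity above})\leq 0$, as claimed.

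The points needing care are measure-theoretic bookkeeping rather than conceptual: justifying differentiation under the $\mu$-integral (which rests on uniform-in-$t$ bounds for $g_t$ and $\frac{d}{dt}g_t$ coming from the ODE in section \ref{SS:well_posedness_of_ODE}) and the Fubini/Tonelli exchange above. The substantive observation — and the reason the training dynamics form a descent flow for the averaged loss — is simply that $\tilde{\K}$ is a sum of two Gram kernels and hence positive semi-definite; this is the RNN counterpart of the positivity of the neural tangent kernel in the feedforward case.
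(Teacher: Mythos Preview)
Your proof is correct and takes essentially the same approach as the paper: both exchange the order of the $\mu\otimes\mu$ and $\lambda$ integrals to rewrite the double integral as a sum of two nonnegative squared $L^2(\lambda)$-type norms. The only difference is cosmetic—you name the feature maps $\Phi_1,\Phi_2$ and phrase the observation as ``$\tilde{\K}$ is a sum of Gram kernels,'' whereas the paper simply carries out the Fubini computation in line without that vocabulary.
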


\begin{proof}
Recall
\begin{equation*}
\K_{\lambda}(h, \varsigma) = \int_{\mathcal{W}} \left[h(\sw) \varsigma(\sw) + \mu_{c^2} \sum_{j=1}^d \frac{\partial h}{\partial w_j}(\sw) \frac{\partial \varsigma}{\partial w_j}(\sw) \right] \, \lambda(d\sw).
\end{equation*}
In particular, we have
\begin{equation*}
\K_{\lambda}(\varsigma_{\sx,\su}, \varsigma_{\tilde{\sx},\tilde{\su}}) = \int_{\mathcal{W}} \left[\varsigma_{\sx,\su}(\sw) \varsigma_{\tilde{\sx},\tilde{\su}}(\sw) + \mu_{c^2} \sigma'(\sw^\top \sx + \su) \sigma'(\sw^\top \tilde{\sx} + \tilde{\su}) \sx^\top \tilde{\sx} \right] \, \lambda(d\sw).
\end{equation*}
Define $\sx_j$ as the $j$-th entry for the vector $\sx$, we have
\begin{align*}
&\phantom{=}\frac{d}{dt} \sqbracket{\frac{1}{2} \int_{\X} (g_t(\varsigma_{\sx,\su}) - \tilde{\sy})^2 \, \mu(d\tilde{\sH})} \\
&= -\alpha \int_{\X}\int_{\X}\int_{\mathcal{W}} \left[\varsigma_{\sx,\su}(\sw) \varsigma_{\tilde{\sx},\tilde{\su}}(\sw) + \mu_{c^2} \sigma'(\sw^\top \sx + \su) \sigma'(\sw^\top \tilde{\sx} + \tilde{\su}) \sx^\top \tilde{\sx} \right] \\
&\phantom{=}\times (g_t(\varsigma_{\tilde{\sx}, \tilde{\su}}) - \tilde{\sy}) (g_t(\varsigma_{\sx,\su}) - \sy) \, \lambda(d\sw)\, \mu(d\sH) \, \mu(d\tilde{\sH}) \\
&= -\alpha \int_{\mathcal{W}}\Bigg[\left(\int_{\X} (g_t(\varsigma_{\sx,\su}) - \sy) \, \varsigma_{\sx,\su}(\sw) \,  \mu(d\sH)\right)^2
\\
&\phantom{=}+ \mu_{c^2} \sum_{j=1}^d \left(\int_{\X} (g_t(\varsigma_{\sx,\su})) - \sy) \, \sigma'(\sw^\top \sx + \su) \, \sx_j \,  \mu(d\sH)\right)^2 \Bigg]\lambda(d\sw) \, \\
&\leq 0.
\end{align*}
\end{proof}

Therefore, the function $t \mapsto \frac{1}{2} \int_\mathcal{X}  (g_t(\varsigma_{\tilde{x}, \tilde{u}}) - \tilde{\sy})^2 \, \mu(d\tilde{\sH})$ is non-increasing. This is a useful theoretical guarantee that emerges from the limit analysis. The pre-limit training algorithm (tBPPT) truncates the chain rule -- see the algorithm (\ref{alg:onlineSGDwithBPTT}) -- and therefore it is not guaranteed that parameter updates are in a descent direction for the loss function. That is, in principle, the loss (for the long-run distribution of the data sequence) may actually increase when the parameters are updated. Proposition \ref{PropositionDescentDirection} proves that, as $N,k \rightarrow \infty$, the RNN model
will be updated in a descent direction for the loss function.

\section{Proof of Dynamics of RNN Memory Layer}
\label{S:proof_of_dynamics_of_memories}

\subsection{Reduction to Initialisations}
\label{SS:3_1}

Let $E^{N,1}_k = v^N_k - u^N_k$. Then $E^{N,1}_0 = 0$. For $k\geq 1$, we have:
\begin{align*}
E^{N,1}_k
&= \frac1N \sum_{j=1}^N \left[B^j_k \sigma((W^j_{k-1})^\top X_{k-1} + v^N_{k-1}) - B^j_0 \sigma((W^j_0)^\top X_{k-1} + u^N_{k-1})) \right] \\
&= \frac{1}{N} \sum_{j=1}^N \Big[(B^j_k - B^j_0) \sigma((W^j_{k-1})^\top X_{k-1} + v^N_{k-1}) \\
&\phantom{=}+ \frac{1}{N}\sum_{j=1}^N B^i_0 \big(\sigma((W^j_{k-1})^\top X_{k-1} + v^N_{k-1}) - \sigma((W^j_0)^\top X_{k-1} + u^N_{k-1}) \big) \Big].
\end{align*}
The first term could be bounded by
\begin{align*}
&\phantom{=}\abs{\frac{1}{N} \sum_{j=1}^N \big[(B^j_k - B^j_0) \sigma((W^j_{k-1})^\top X_{k-1} + v^N_{k-1}) \big]} \\
&\leq \frac1N \sum_{j=1}^N \abs{B^j_k - B^j_0} \abs{\sigma((W^j_{k-1})^\top X_{k-1} + v^N_{k-1})} \\
&\lesssim \frac{1}{N} \sum_{j=1}^N \abs{B^j_k - B^j_0} \\
&\overset{\text{(Lemma \ref{lem:evolution_of_parameters})}}\leq \frac{C_T}{N^{1-\beta-\gamma}}.
\end{align*}
The second term could also be bounded as followed:
\begin{align*}
&\phantom{=}\left|\frac{1}{N}\sum_{j=1}^N B^j_0 \big(\sigma((W^j_{k-1})^\top X_{k-1} + v^N_{k-1}) - \sigma((W^j_0)^\top X_{k-1} + u^N_{k-1}) \big) \right|  \\
&\leq \frac{C_{\sigma'}}{N}\sum_{j=1}^N B^j_0\big[|(W^j_{k-1} - W^j_0)^\top X_k| + |v^N_{k-1} - u^N_{k-1}| \big] \\
&\leq \frac{C_{\sigma'} K_b C_x}{N} \sum_{j=1}^N \|W^j_{k-1} - W^j_0\| + C_{\sigma'} K_b E^{N,1}_{k-1} \\
&\overset{\text{(Lemma \ref{lem:evolution_of_parameters})}}\leq \frac{C_T}{N^{1-\beta-\gamma}} + C_{\sigma'} K_b E^{N,1}_{k-1}.
\end{align*}
Therefore, we have
\begin{equation}
|E^{N,1}_k| \leq \frac{C_T}{N^{1-\beta-\gamma}} + C_{\sigma'} K_b |E^{N,1}_{k-1}|.\label{eq:evolution_ENk1_new}
\end{equation}
Since $C_{\sigma'}K_b < 1$ by Assumption \ref{as:ergodicity_conditions}, we conclude from Lemma \ref{lem:recursive_inequality} that for all $k \geq 0$ we have
\begin{equation}
|E^{N,1}_k| \leq  \frac{C_T}{N^{1-\beta-\gamma}}. \nonumber
\end{equation}

\subsection{Weak Law of Large Numbers}
\label{SS:3_2}

Recall that
\begin{align*}
u^N_k &= \frac{1}{N} \sum_{j=1}^N B^j_0 \sigma((W^j_0)^\top X_{k-1} + u^N_{k-1}), \quad u^N_0 = 0, \\
u_k &= \mu_b \int_{\mathcal{W}} \sigma(\sw^\top X_k + u_k) \, \lambda(d\sw), \quad u_0 = 0.
\end{align*}
Since $u^N_k$ is an empirical average, it should converge to $u_k$ by an argument of weak law of large numbers. To make this formal, let us define $E^{N,2}_k = u^N_k - u_k$. We have $E^{N,2}_0 = 0$, and for $k \geq 1$:
\begin{align*}
E^{N,2}_k
&=
\frac{1}{N} \sum_{j=1}^N B^j_0 \sigma((W^j_0)^\top X_{k-1} + u^N_{k-1}) - \mu_b \int_{\mathcal{W}} \sigma(\sw^\top X_{k-1} + u_{k-1}) \, \lambda(d\sw) \\
&= \frac{1}{N} \sum_{j=1}^N \left[B^j_0 \Big(\sigma((W^j_0)^\top X_{k-1} + u^N_{k-1}) - \sigma((W^j_0)^\top X_{k-1} + u_{k-1}) \Big)\right] \\
&\phantom{=}+ \frac{1}{N} \sum_{j=1}^N (B^j_0 - \mu_b) \sigma((W^j_0)^\top X_{k-1} + u_{k-1}) \\
&\phantom{=}+ \frac{\mu_b}{N} \sum_{j=1}^N \sigma((W^j_0)^\top X_{k-1} + u_{k-1}) - \mu_b \int_{\mathcal{W}} \sigma(\sw^\top X_{k-1} + u_{k-1}) \, \lambda(d\sw). \numberthis \label{eq:EN2_breakdown}
\end{align*}
By Cauchy-Schwarz inequality, for any $a,b,c \in \R$:
\begin{equation}
(a+b+c)^2 \leq 2a^2 + 2(b+c)^2 \leq 2a^2+4b^2+4c^2. \label{eq:Cauchy_Schwarz_three_terms}
\end{equation}
Squaring both sides of \eqref{eq:EN2_breakdown}, apply  \eqref{eq:Cauchy_Schwarz_three_terms} and take expectation both sides, we have
\begin{align}
&\phantom{=} \E\sqbracket{E^{N,2}_k}^2 \nonumber \\
&\leq 2\E\Bigg[\frac{1}{N} \sum_{j=1}^N \Big[B^j_0 \big(\sigma((W^j_0)^\top X_{k-1} + u^N_{k-1}) - \sigma((W^j_0)^\top X_{k-1} + u_{k-1}) \big)\Big] \Bigg]^2 \nonumber \\
&\phantom{=}+ 4\E\Bigg[\frac{1}{N} \sum_{j=1}^N (B^j_0 - \mu_b) \sigma((W^j_0)^\top X_{k-1} + u_{k-1}) \Bigg]^2 \label{eq:EN2_squared_breakdown} \\
&\phantom{=}+ 4\E\Bigg[\frac{\mu_b}{N} \sum_{j=1}^N \sigma((W^j_0)^\top X_{k-1} + u_{k-1}) - \mu_b \int_{\mathcal{W}} \sigma(\sw^\top X_{k-1} + u_{k-1}) \, \lambda(d\sw) \Bigg]^2. \nonumber
\end{align}

The main observation here is that $(u_k)_{k\geq 0}$ is no longer dependent on $B^j_0$ and $W^j_0$, so we could prove a weak law of large numbers for the second and third terms in \eqref{eq:EN2_squared_breakdown}. In fact, for the second term, we see that $B^j_0$ is independent of $\sigma((W^j_0)^\top X_{k-1} + u_{k-1})$, so
\begin{eqnarray}
&\phantom{=}& \E\bigg[\frac{1}{N} \sum_{j=1}^N (B^j_0 - \mu_b) \bracket{\sigma((W^j_0)^\top X_{k-1} + u_{k-1}} \bigg]^2 \nonumber\\
&=& \E\bigg[\frac{1}{N^2} \sum_{j_1, j_2 = 1}^N (B^{j_1}_0 - \mu_b) (B^{j_2}_0 - \mu_b) \nonumber \\
&\phantom{=}&\times \big(\sigma((W^{j_1}_0)^\top X_{k-1} + u_{k-1} \big) \big(\sigma((W^{j_2}_0)^\top X_{k-1} + u_{k-1} \big) \bigg] \nonumber\\
&=& \frac{1}{N^2} \sum_{j_1, j_2 = 1}^N \E[B^{j_1}_0 - \mu_b] \, \E[B^{j_2}_0 - \mu_b]  \\
&\phantom{=}& \times \E\left[\bracket{\sigma((W^{j_1}_0)^\top X_{k-1} + u_{k-1}} \bracket{\sigma((W^{j_2}_0)^\top X_{k-1} + u_{k-1}} \right] \nonumber \\
&=& \frac{1}{N^2} \sum_{j=1}^N \E[B^{j}_0 - \mu_b]^2 \E[\sigma((W^j_0)^\top X_k + u_k)]^2 \nonumber \\
&\leq& \frac{4K_b^2 C_\sigma^2}{N}. \nonumber \label{eq:EN2_second_term}
\end{eqnarray}
For the third term, we note that the random variables $\big(\sigma((W^j_0)^\top X_{k-1} + u_{k-1}) \big)_{j=1}^N$ are independent when conditioned on the inputs $X_{0:k-1} := (X_0, ..., X_{k-1})$. Furthermore, the integral $\int_{\mathcal{W}} \sigma(\sw^\top X_{k-1} + u_{k-1}) \, \lambda(d\sw)$ is the conditional expectation $\E[\sigma((W^j_0)^\top X_{k-1} + u_{k-1}) \mid X_{0:k-1}]$. Therefore,
\begin{align*}
&\phantom{=}\mathbb{E}\sqbracket{\frac{\mu_b}{N} \sum_{j=1}^N \sigma((W^j_0)^\top X_{k-1} + u_{k-1}) - \mu_b \int_{\mathcal{W}} \sigma(\sw^\top X_{k-1} + u_{k-1}) \, \lambda(d\sw)}^2 \\
&= \frac{\mu_b^2}{N^2} \sum_{j_1, j_2=1}^N \E\Bigg[\E\bigg[\bigg[\sigma((W^{j_1}_0)^\top X_{k-1} + u_{k-1}) - \int_{\mathcal{W}} \sigma(\sw^\top X_{k-1} + u_{k-1}) \, \lambda(d\sw) \bigg] \\
&\phantom{=}\times \bigg[\sigma((W^{j_2}_0)^\top X_{k-1} + u_{k-1}) - \int_{\mathcal{W}} \sigma(\sw^\top X_{k-1} + u_{k-1}) \, \lambda(d\sw) \bigg] \, \bigg|\, X_{0:k-1} \bigg] \Bigg] \\
&= \frac{\mu_b^2}{N^2} \sum_{j_1, j_2=1}^N \E\Bigg[\E\bigg[\sigma((W^{j_1}_0)^\top X_{k-1} + u_{k-1}) - \int_{\mathcal{W}} \sigma(\sw^\top X_{k-1} + u_{k-1}) \, \lambda(d\sw) \, \bigg|\, X_{0:k-1} \bigg] \\
&\phantom{=}\times \E\bigg[\sigma((W^{j_2}_0)^\top X_{k-1} + u_{k-1}) - \int_{\mathcal{W}} \sigma(\sw^\top X_{k-1} + u_{k-1}) \, \lambda(d\sw) \,\bigg|\, X_{0:k-1} \bigg] \Bigg] \\
&= \frac{\mu_b^2}{N^2} \sum_{j=1}^N \E\Bigg[\E\bigg[\sigma((W^{j}_0)^\top X_{k-1} + u_{k-1}) - \int_{\mathcal{W}} \sigma(\sw^\top X_{k-1} + u_{k-1}) \, \lambda(d\sw) \, \bigg|\, X_{0:k-1} \bigg]^2 \\
&\leq \frac{\mu_b^2}{N^2} \sum_{j=1}^N \E\Bigg[\E\bigg[\bigg[\sigma((W^{j}_0)^\top X_{k-1} + u_{k-1}) - \int_{\mathcal{W}} \sigma(\sw^\top X_{k-1} + u_{k-1}) \, \lambda(d\sw) \bigg]^2 \, \bigg|\, X_{0:k-1} \bigg] \\
&= \frac{\mu_b^2}{N^2} \sum_{j=1}^N \E\bigg[\sigma((W^{j}_0)^\top X_{k-1} + u_{k-1}) - \int_{\mathcal{W}} \sigma(\sw^\top X_{k-1} + u_{k-1}) \, \lambda(d\sw) \bigg]^2 \\
&\leq  \frac{4K^2_b C^2_\sigma}{N}. \numberthis \label{eq:EN2_third_term}
\end{align*}
It remains to bound the first term, which could be bounded using Cauchy-Schwarz:
\begin{align}
&\phantom{=}\E\Bigg[\frac{1}{N} \sum_{j=1}^N \Big[B^j_0 \Big(\sigma((W^j_0)^\top X_{k-1} + u^N_{k-1}) - \sigma((W^j_0)^\top X_{k-1} + u_{k-1}) \Big)\Big] \Bigg]^2 \nonumber \\
&\leq \frac{1}{N^2} \E\Bigg[\Bigg[\sum_{j=1}^N (B^j_0)^2 \Bigg] \sum_{j=1}^N \Big[\sigma((W^j_0)^\top X_{k-1} + u^N_{k-1}) - \sigma((W^j_0)^\top X_{k-1} + u_{k-1}) \Big]^2 \Bigg] \label{eq:EN2_first_term} \\
&\leq \frac{1}{N^2} \E\sqbracket{N{K_b^2 C_{\sigma'}^2} \bracket{\sum_{j=1}^N \bracket{u^N_{k-1} -u_{k-1})}^2}} \nonumber \\
&\leq {K_b^2 C_{\sigma'}^2} \E(E^{N,2}_{k-1})^2. \nonumber
\end{align}
Combining \eqref{eq:EN2_squared_breakdown} with \eqref{eq:EN2_second_term}, \eqref{eq:EN2_third_term} and \eqref{eq:EN2_first_term} yields:
\begin{equation}
\E\sqbracket{E^{N,2}_k}^2 \leq 2 {K_b^2 C_{\sigma'}^2} \E\bracket{E^{N,2}_{k-1}}^2 + \frac{32 K^2_b C^2_\sigma}{N} \label{eq:evolution_ENk1}
\end{equation}
Assumption \ref{as:ergodicity_conditions} asserts that $2 K_{B}^2 C_{\sigma'}^2 <1$, so by Lemma \ref{lem:recursive_inequality}, for all $k \geq 0$ we have
\begin{equation*}
\E\bracket{E^{N,2}_k}^2 \leq \frac{32 K^2_b C^2_\sigma}{N(1-{2K_b^2 C_{\sigma'}^2})}.
\end{equation*}

\subsection{Dynamics of the memory units at the asymptotic limit}
\label{SS:dynamics_of_memory_units}

Let us first define the sequence $U_k = (X_k, Z_k, \eta_k, u_k)_{k\geq 0}$, which is defined recursively as
\begin{align*}
(X_{k+1}, Z_{k+1}) \,|\, (X_k, Z_k) &\sim P((X_k, Z_k), \,\cdot\,), \\
u_{k+1} &= \mu_b \int_{\mathcal{W}} \sigma(\sw^\top X_k + u_k) \, \lambda(d\sw), \quad u_0 = 0,
\end{align*}
and $(\eta_k)_{k\geq 0}$ is an i.i.d. sequence of random variable independent of $(X_k, Z_k, u_k)_{k\geq 0}$. Note that the $U_k$ is a time-homogeneous Markov chain with transitional probability $Q$, such that for any $A \in \mathcal{B}(\R^{d+d_z}), B, C \in \mathcal{B}(\R)$,
\begin{eqnarray}
&\phantom{=}& Q((x,z,\eta,u), A \times B \times C) \nonumber \\
&=& \mathbb{P}((X_{k+1}, Z_{k+1}) \in A, \, \eta_{k+1} \in B, u_{k+1} \in C \,|\, X_k = x, \, Z_k = z, \,\eta_k = \eta, \, u_k = u) \\
&=& P((x,z), A) \, \mu_\eta(B) \, \mathbb{I}_C\bracket{\mu_b \int_{\mathcal{W}} \sigma\bracket{\sw^\top x + u} \, \lambda(d\sw)}, \nonumber
\end{eqnarray}
where $\mu_\eta$ is the distribution of $\eta_1$. We shall use $U$ for the shorthand of $(x,z,\eta,u)$.

With the above notions, we could show that $Q$ is a contraction in the following sense:
\begin{lemma} \label{lem:perturbation_of_kernel}
For $U := (x,z,\eta,u), \tilde{U} := (\tilde{x}, \tilde{z},\tilde{\eta},\tilde{u}) \in \R^{d+d_z+d_{\eta}+1}$, we have
\begin{equation}
\Wass_2(Q(U,\cdot), Q(\tilde{U},\cdot)) \leq q_0 |U - \tilde{U}|.
\end{equation}
\end{lemma}
\begin{proof}
By Assumption \ref{as:data_generation} and Remark \ref{rmk:optimal_coupling}, there exists a coupling $\gamma_{(x,z), (\tilde{x},\tilde{z})}$ between $P((x,z),\cdot)$ and $P((\tilde{x}, \tilde{z}), \cdot)$ such that
\begin{eqnarray}
&\phantom{=}& \bracket{\Wass_2(P((x,z), \cdot), \, P(\tilde{x}, \tilde{z}), \cdot)}^2 \nonumber \\
&=& \int_{\R^{d+1}\times \R^{d+1}} |(\sx,\sz) - (\tilde{\sx},\tilde{\sz})|^2 \, \gamma_{(x,z),(\tilde{x},\tilde{z})} (d\sx,d\sz,d\tilde{\sx},d\tilde{\sz}) \\
&\leq& L^2 |(x,z) - (\tilde{x}, \tilde{z})|^2. \nonumber
\end{eqnarray}
From this, we construct a new coupling $\tilde{\gamma}_{U,\bar{U}}(\cdot)$ between $Q(U,\cdot)$ and $Q(\bar{U},\cdot)$: for any $A_1, A_2 \in \mathcal{B}(\R^{d+d_z})$, $B_1, B_2 \in \mathcal{B}(\R^{d_y})$ and $C_1, C_2 \in \mathcal{B}(\R^{d_\eta})$:,
\begin{eqnarray}
&\phantom{=}& \tilde{\gamma}_{U,\tilde{U}}(A_1 \times B_1 \times A_2 \times B_2 \times C_1 \times C_2) \nonumber \\
&=& \gamma_{(x,z),(\tilde{x},\tilde{z})}(A_1 \times A_2) \\
&\phantom{=}& \times\mathbb{I}_{B_1}\bracket{\mu_b \int_{\mathcal{W}} \sigma\bracket{\sw^\top x + u} \, \lambda(d\sw)} \mathbb{I}_{B_2}\bracket{\mu_b \int_{\mathcal{W}} \sigma\bracket{\sw^\top \tilde{x} + \tilde{u}} \, \lambda(d\sw)} \nonumber \\
&\phantom{=}& \times (\mathsf{diag}\#\mu_\eta)(C_1 \times C_2), \nonumber
\end{eqnarray}
where $\mathsf{diag} : x \in \R \mapsto (x,y) \in \R^2$ is the diagonal map, and the measure $\mathsf{diag\#\mu_\eta}$ being the push-forward of $\mu_\eta$ by $\mathsf{diag}$, known as the \textit{diagonal coupling}. Using the change of variable formula, we have:
\begin{eqnarray}
&\phantom{=}& \bracket{\Wass_2(Q(U,\cdot), Q(\tilde{U},\cdot))}^2 \nonumber \\
&\leq& \int |(\sx,\sz,\eta,\su)-(\tilde{\sx},\tilde{\sz},\tilde{\eta},\tilde{\su})|^2 \, \gamma_{U,\tilde{U}}(d\sx,d\sz,d\eta,d\su,d\tilde{\sx},d\tilde{\sz},d\eta,d\tilde{\su}) \nonumber \\
&=& \int \sqbracket{|(\sx,\sz)-(\tilde{\sx},\tilde{\sz})|^2 + |\su - \tilde{\su}|^2} \, \tilde{\gamma}_{U,\tilde{U}}(d\sx,d\sz,d\su,d\tilde{\sx},d\sz,d\su) \nonumber \\
&=& \int |(\sx,\sz)-(\tilde{\sx},\tilde{\sz})|^2 \gamma_{(x,z),(\tilde{x},\tilde{z})}(d\sx,d\sz,d\tilde{\sx},d\tilde{\sz}) \nonumber \\
&\phantom{=}& + \abs{\mu_{b}\int_{\mathcal{W}} \sqbracket{\sigma(\sw^\top x + u) - \sigma(\sw^\top \tilde{x} + \tilde{u})} \, \lambda(d\sw)}^2 \\
&\leq& L^2 |(x,z) - (\tilde{x}, \tilde{z})|^2 + \mu_{b}^2 C^2_{\sigma'} \int_{\mathcal{W}} \sqbracket{\sw^\top (x - \tilde{x}) + (u - \tilde{u})}^2 \, \lambda(d\sw) \nonumber \\
&\leq& L^2 |(x,z) - (\tilde{x}, \tilde{z})|^2 + 2\mu_{b}^2 C^2_{\sigma'} \int_{\mathcal{W}} \sqbracket{|\sw|^2 |x - \tilde{x}|^2 + |u - \tilde{u}|^2} \, \lambda(d\sw) \nonumber \\
&\leq& (L^2 + 2 \mu_{b}^2 \mu_{w^{2}}C^2_{\sigma'}) |(x,z) - (\tilde{x}, \tilde{z})|^2 + 2\mu_{b}^2 C^2_{\sigma'}  |u - \tilde{u}|^{2} \, \nonumber \\
&\leq& q_0^2 |U - \tilde{U}|^2. \nonumber
\end{eqnarray}
\end{proof}
Recall that the transition kernel induces an adjoint operator over the space of probability measures $\mathcal{M}(\R^{d+d_z+d_\eta+1})$:
\begin{equation}
    \rho \in \mathcal{M}(\R^{d+d_z+d_\eta+1}) \mapsto Q^\vee \rho, \quad Q^\vee \rho(\cdot) = \int Q((\sx, \sz, \eta, \su), \cdot) \, \rho(d\sx, d\sz, d\eta, d\su).
\end{equation}
Then
\begin{proposition}[Contractivity] \label{prop:conv_of_nu_k}
For all $\rho, \tilde{\rho} \in \mathcal{P}_2(\R^{d+d_z+d_\eta+1})$,
\begin{equation}
    \Wass_2(Q^\vee \rho, Q^\vee \tilde{\rho}) \leq q_0 \Wass_2(\rho, \tilde{\rho}),
\end{equation}
Since $q_0 < 1$ by Assumption \ref{as:ergodicity_conditions}, the operator $Q^\vee$ has a unique contractive fixed point $\nu \in \mathcal{P}_2(\R^{d+d_z+1})$. Such fixed point is the stationary measure of the Markov chain $(U_k)_{k\geq 0}$, and for any $\rho \in \mathcal{P}_2(\R^{d+d_z+d_\eta+1})$,
\begin{equation}
\Wass_2((Q^\vee)^k \rho, \nu) \leq q_0^k \, \Wass_2(\rho, \nu) \overset{k\to\infty}\to 0,
\end{equation}
\end{proposition}

\begin{proof}
The proof of this proposition is very similar to the proof of \cite[Proposition 14.3]{DobrushinRoland2006LoPT}.
\end{proof}

We are now ready to prove the main result.

\begin{proof}[Proof of Theorem \ref{prop:conv_of_mu_k}]
Observe that $H_k = \tilde{f}(U_k)$, where
\begin{equation*}
    \tilde{f}: U:=(x,z,\eta,u) \mapsto (x,z,f(x,z,\eta),u),
\end{equation*}
and that $\tilde{f}$ is Lipschitz. With this, we define $\mu = \tilde{f}\#\nu$, where $\nu$ is given in Proposition \ref{prop:conv_of_nu_k}. Further assume that $\mu_0 = \tilde{f}\#\rho$ for some $\rho \in \mathcal{P}_2(\R^{d+d_z+d_\eta+1})$. By Proposition \ref{prop:conv_of_nu_k} there is a coupling $\gamma_k$ between $(Q^\vee)^k \rho$ and $\nu$ such that
\begin{align*}
    \int_{\X^{2}} |(\sx,\sz,\eta,\su) - (\tilde{\sx},\tilde{\sz},\tilde{\eta},\tilde{\su})|^2 \, \gamma_k(d\sx,d\sz,d\eta,d\su,d\tilde{\sx},d\tilde{\sz},d\eta,d\tilde{\su}) \leq q_0 \Wass_2(\rho,\nu).
\end{align*}
Let $\tilde{\tilde{f}} : (x,z,\eta,u,\tilde{x},\tilde{z},\tilde{\eta},\tilde{u}) \mapsto (x,z,f(x,z,\eta),u,\tilde{x},\tilde{z},f(\tilde{x},\tilde{z},\tilde{\eta}),\tilde{u})$, and $\tilde{\gamma} = \tilde{\tilde{f}} \# \gamma$, then there is a $C > 0$ (from Lipschitzness of $f$) such that
\begin{align*}
&\phantom{=}\int_{\X^{2}} |(\sx,\sz,\sy,\su) - (\tilde{\sx},\tilde{\sz},\tilde{\sy},\tilde{\su})|^2 \, \tilde{\gamma}_k(d\sx,d\sz,d\sy,d\su,d\tilde{\sx},d\tilde{\sz},d\sy,d\tilde{\su}) \\
&\lesssim \int_{\X^{2}} |(\sx,\sz,\eta,\su) - (\tilde{\sx},\tilde{\sz},\tilde{\eta},\tilde{\su})|^2 \, \gamma_k(d\sx,d\sz,d\eta,d\su,d\tilde{\sx},d\tilde{\sz},d\eta,d\tilde{\su}) \\
&\lesssim q_0^{k} \Wass_2(\rho,\nu).
\end{align*}
Note that due to Assumption \ref{as:data_generation}, $H_0$ is bounded and $\E|\eta_k|^2 < +\infty$, so there exists a constant $\sfC > 0$, depending on $C_x, C_\eta$ and $C_\sigma$, such that
\begin{align}
\Wass_2(\rho,\nu) \leq \sfC,\label{eq:apriori_bound_for_initialisation}
\end{align}
therefore
\begin{align*}
\int_{\X^{2}} |(\sx,\sz,\sy,\su) - (\tilde{\sx},\tilde{\sz},\tilde{\sy},\tilde{\su})|^2 \, \tilde{\gamma}_k(d\sx,d\sz,d\sy,d\su,d\tilde{\sx},d\tilde{\sz},d\sy,d\tilde{\su}) \lesssim q_0^{k} \overset{k\to\infty}\to  0.
\end{align*}
\end{proof}

\begin{remark} \label{rmk:support_of_mu}
As we assume in Assumption \ref{as:data_generation} that:
$$|(X_k, Z_k)| \leq C_x, \quad|Y_k| = |f(X_k , Z_k, \eta_k)| \leq C_y,$$
and in Assumption \ref{as:activation_function}, $$|u_k| = |\sigma(\cdot)| \leq C_{\sigma},$$
so the support of $\mu$ must be bounded. In fact, we have
\begin{align*}
\mathsf{supp} \, \mu \subseteq \{(x,z,y,u):|(x,z)| \leq C_x, |y| \leq C_y, |u| \leq C_{\sigma}\}.
\end{align*}
\end{remark}

\section{Proof of weak convergence}
\label{S:weak_convergence}
We shall now prove that the time-rescaled evolution of $(g^N_{\lfloor Nt \rfloor})_{t \in [0,T]}$ converges to the limiting ODE \eqref{eq:KernelLimitEqn}. Here $h$ is restricted to a second-time differentiable function, i.e., is an element in $C^2_b(\R^d)$. Let $\|f\|_{C^2}$ be the $C^2$-norm of $f$ defined by the following:
\begin{equation}
\|f\|_{C^2} = \sup_x \max_{i,j,k} \sqbracket{|f(x)|, \abs{\frac{\partial f(x)}{\partial x^i}}, \abs{\frac{\partial^2 f(x)}{\partial x^j \partial x^k}}}.
\end{equation}
Furthermore, we abuse notation to allow the unimportant constants $0<C, C_T <\infty$ vary from line to line.

The main idea of the proof is to note that for any $k \leq NT$ and $h \in \mathcal{H}$:
\begin{equation}
g^N_k(h) = \sum_{m=0}^{k-1}\triangle g^N_m(h), \quad \triangle g^N_m(h) = g^N_{m+1}(h)-g^N_m(h).
\end{equation}
We shall approximate the increments $\triangle g^N_m(h)$. This will show that $g^N_k(h)$ is a discretisation of the limiting ODE \eqref{eq:KernelLimitEqn}, and therefore the rescaled process $g^N_{\lfloor Nt\rfloor}(h)$ converges (weakly) to the limiting ODE \eqref{eq:KernelLimitEqn}.

The actual convergence of the ODE is broken into a few steps:
\begin{enumerate}
\item In Subsection \ref{SS:well-posedness}, we prove that the limiting ODE is well-posed. The bounds established will be used in the weak convergence analysis, see step 5.
\item In Subsection \ref{SS:proof_of_KernelEvolution1}, we Taylor expand the increments $\triangle g^N_m(h)$ to show that for any $k \leq NT$,
\begin{align*}
\abs{\triangle g^N_m(h) - \delta^{(1)} g^N_m(h)} \leq \frac{C_T}{N^{3-\beta-2\gamma}} \|h\|_{C^2},
\end{align*}
where
\begin{align*}
\delta^{(1)} g^N_m(h)
&= - \frac{\alpha}{N^2} (\psi^N(\hat{Y}^N_m) - Y_m) \\
&\phantom{=}\times \sum_{i=1}^N \sqbracket{\hat{S}^{i,N}_{m+1} h(W_m^i) + (C^{i}_m)^2 \sigma'((W_m^i)^\top X_m + v^N_m) \nabla h(W_m^i)^\top X_m}.
\end{align*}
\item In Subsection \ref{SS:replacing_memory}, we replace the trained parameters with their initialisations. If we define
\begin{align*}
\delta^{(2)} g^N_m(h) &= - \frac{\alpha}{N^2} (\psi^N(g^N_m(h_{m+1})) - Y_m) \\
&\phantom{=}\times \sum_{i=1}^N \sqbracket{\hat{S}^{i,N}_{m+1} h(W_m^i) + (C^{i}_{m})^2 \sigma'((W_m^i)^\top X_m + v^N_m) \nabla h(W_m^i)^\top X_{m}}, \\
\delta^{(3)} g^N_m(h)
&= - \frac{\alpha}{N} (\psi^N(g^N_m(h_{m+1})) - Y_m) \\
&\phantom{=}\times \int \Big[\sigma(\sw^\top X_m + u_m) h(\sw) + \mu_{c^2} \sigma' \bracket{\sw^\top X_m + u_m} \nabla h(\sw)^\top X_m \Big] \, \lambda(d\sw),
\end{align*}
then for any $m \leq NT$,
\begin{equation*}
\E\abs{\delta^{(1)} g^N_m(h) - \delta^{(2)} g^N_m(h)}^2 \vee \E\abs{\delta^{(2)} g^N_m(h) - \delta^{(3)} g^N_m(h)}^2 \leq \frac{C_T \|h\|^2_{C^2}}{N^{4-2\beta-4\gamma}}.
\end{equation*}
\item In Subsection \ref{SS:remove_clip}, we remove the clipping function $\psi^N(\cdot)$ in $\delta^{(3)} g^N_k(h)$ by noticing the initial value $g^N_0(h)$ is $L^2$ integrable for all $h \in C^2_b(\R^d)$.
\item In Subsection \ref{SS:weak_convergence_analysis}, we establish the weak convergence by analysing the corresponding Poisson equation of the Markov chain $(H_k)_{k\geq 0}$ as defined in \cite{meyertweedie}.
\item Finally, in Subsection \ref{SS:ProofMainTheorem}, we complete the proof of the main result of this paper, Theorem \ref{thm:weak_convergence_thm}.
\end{enumerate}

\subsection{Well-posedness of the Limit ODE}
\label{SS:well-posedness}
The limit ODE \eqref{eq:KernelLimitEqn} should be seen as an affine ODE on the dual space $(C^2_b(\R^d))^*$ (which is a Banach space), written as
\begin{equation}
    \frac{d}{dt} g_t = \cA(g_t) + b, \quad g_0 = 0,\nonumber
\end{equation}
where $\cA$ is a linear operator from $(C^2_b(\R^d))^*$ to $(C^2_b(\R^d))^*$ as defined below,
\begin{equation}
\mathcal{A}: g \in (C^2_b(\R^d))^* \mapsto \sqbracket{\mathcal{A}(g): h \mapsto - \alpha \int_{\X} g(\varsigma_{\sx,\su}) \K_{\lambda}(h,\varsigma_{\sx,\su})\, \mu(d\sH)};\nonumber
\end{equation}
and $b$ is the following linear functional:
\begin{equation}
    b: h \mapsto \int_{\X} \sy \K_{\lambda}(h,\varsigma_{\sx,\su}) \, \mu(d\sH)\nonumber
\end{equation}
Therefore the ODE admits a unique solution if $\cA$ is bounded. Indeed, for $h,\varsigma \in C^2_b(\R^d)$,
\begin{equation} \label{eq:norm_for_k_lambda}
|\K_\lambda(h,\varsigma)| \leq \|h\|_\infty + \mu_{c^2} \|\nabla h ^\top \nabla \varsigma \|_\infty \leq (1+\mu_{c^2} d) \|h\|_{C^2} \|\varsigma\|_{C^2} = C \|h\|_{C^2} \|\varsigma \|_{C^2}.
\end{equation}
so
\begin{equation} \label{eq:norm_for_k_lambda_with_varsigma}
|\mathcal{K}_{\lambda}(h,\varsigma_{\sx,\su})| \lesssim  \|\varsigma_{\sx,\su}\|_{C^2} \|h\|_{C^2} \lesssim \max(C_\sigma, C_x C_{\sigma'}, C_x^2 C_{\sigma''}) \|h\|_{C^2} = C\|h\|_{C^2}.
\end{equation}
Therefore
\begin{align*}
|[\mathcal{A}(g)](h)| &\leq \alpha \int_{\X} |g(\varsigma_{\sx,\su})| |\K_{\lambda}(h,\varsigma_{\sx,\su})| \, \mu(d\sH) \\
&\leq \alpha \int_{\X} \|g\|_{(C^2)^*} \|\varsigma_{\sx,\su}\|^2_{C^2_b} \|h\|_{C^2_b}\, \mu(d\sH) \\
&\leq C \|g\|_{(C^2)^*} \|h\|_{C^2}. \numberthis \label{eq:bdd_of_A}
\end{align*}
Therefore
\begin{equation}
\norm{\mathcal{A}(g)}_{(C^2_b(\R^d))^*} \leq C \norm{g}_{(C^2_b(\R^d))^*} \implies \norm{\mathcal{A}} \leq C. \nonumber
\end{equation}
Moreover, $b$ is bounded the following sense:
\begin{equation}
|b(h)| \leq \int_{\X} |\sy| |\K_{\lambda}(h,\varsigma_{\sx,\su})| \, \mu(d\sH) \leq C \norm{h}_{C^2_b} \implies \norm{b}_{(C^2_b(\lambda))^*} \leq C. \label{eq:bdd_of_b}
\end{equation}
Therefore, the following exponential operator is well defined for any $T > 0$:
\begin{equation}
    \exp(t\mathcal{A}) = \sum_{i=0}^\infty \frac{t^i \mathcal{A}^{\circ i}}{i!}, \quad \mathcal{A}^{\circ i} = \underbrace{\cA \circ ... \circ \cA}_{i \text{ times}}, \nonumber
\end{equation}
and therefore
\begin{proposition} \label{prop:well_posedness_ode}
The ODE \eqref{eq:KernelLimitEqn} admits the following classical solution:
\begin{equation} \label{eq:KernelLimitSln}
    g_t = \int_0^t \exp((t-s)\mathcal{A}) b \, ds,
\end{equation}
and when acting on $h \in H^1(\lambda)$ we have
\begin{equation}
    g_t(h) = \int_0^t \exp((t-s)\mathcal{A}) b(h) \, ds. \nonumber
\end{equation}
\end{proposition}
In particular, for any $T>0$, there exists $C_T > 0$ such that for any $t \leq T$
\begin{equation}
\|g_t\|_{(C^2_b(\R^d))^*} \leq T\exp(T\|\cA\|) \|b\|_{(C^2_b(\R^d))^*} \leq C_T, \label{eq:operator_norm_control_of_gt}
\end{equation}
where $\|\cA\|$ is the operator norm of the operator $\cA$, which is proven to be bounded by \eqref{eq:bdd_of_A}. So for all $h$ we have $|g_t(h)| \leq C_T \|h\|_{C^2}$ for any finite $T > 0$.

\begin{proof}
We follow \cite{PazyA1983} for our discussion. Firstly, \cite[Theorems 1.2-3]{PazyA1983} state that $\mathcal{A}$ induces a unique uniformly continuous semigroup $\exp(t\mathcal{A})$. Indeed, consider the sequence of operators from $(C^2_b(\R^d))^*$ to $(C^2_b(\R^d))^*$:
\begin{equation}
S^N(t;\mathcal{A}) = \sum_{i=0}^N \frac{t^i \mathcal{A}^{\circ i}}{i!}\nonumber
\end{equation}
By triangle inequality we have the operator norm control $\|S^N(t;\mathcal{A})\| \leq \exp(t\|\cA\|) < +\infty$, uniform in $N$. Therefore, the partial sums $(S^N(t,\mathcal{A}))_{N\geq 0}$ must converge absolutely. Since the space of operators from $(C^2_b(\R^d))^*$ to $(C^2_b(\R^d))^*$ is Banach, the partial sum $(S^N(t,\mathcal{A}))_{N\geq 0}$ must also converge to an operator in operator norm, for which we define it as $\exp(t\mathcal{A})$.

Since $b \in (C^2_b(\R^d))^*$ is constant in $t$, then by \cite[Corollary 2.5]{PazyA1983} the ODE \ref{eq:KernelLimitEqn} admits a classical solution, given by the formula \eqref{eq:KernelLimitSln} according to \cite[Corollary 2.2]{PazyA1983}.
\end{proof}

\subsection{Pre-Limit Evolution}
\label{SS:proof_of_KernelEvolution1}

Using a Taylor expansion, we derive the following evolution equation for $g^N_m(h)$ for a fixed test function $h$:
\begin{lemma} \label{lem:KernelEvolution1}
For all $h \in C^2_b(\R^d)$, let
\begin{align*}
\triangle g^N_m(h) &= g^N_{m+1}(h) - g^N_m(h), \nonumber \\
\delta^{(1)} g^N_m(h)
&= - \frac{\alpha}{N^2} (\psi^N(\hat{Y}^N_m) - Y_m) \sum_{i=1}^N \sqbracket{\hat{S}^{i,N}_{m+1} h(W_m^i) + (C^{i}_{m})^2 \Delta \hat{S}^{i}_{m+1} \nabla h(W_m^i)^\top X_{m}} \\
&= - \frac{\alpha}{N^2} (\psi^N(\hat{Y}^N_m) - Y_m) \sum_{i=1}^N \big[\sigma((W_m^i)^\top X_m + v^N_k) h(W_m^i) \\
&\phantom{=}+ (C^{i}_{m})^2 \sigma'((W_m^i)^\top X_m + v^N_m) \nabla h(W_m^i)^\top X_{m} \big]. \numberthis \label{eq:KernelEvolution1}
\end{align*}
Then for all $m \leq \floor{NT}$, there are constant(s) $C_T > 0$
\begin{equation}
\abs{\triangle g^N_m(h) - \delta^{(1)} g^N_m(h)} \leq \frac{C_T}{N^{3-\beta-2\gamma}} \|h\|_{C^2}, \label{eq:lemma_61_1}
\end{equation}
and for any $m, \floor{NT}$ and $k \geq 0$,
\begin{align}
\abs{\triangle g^N_m(\varsigma_{X_k, v^N_k}) - \delta^{(1)} g^N_m(\varsigma_{X_k,u^N_k})} &\leq \frac{C_T}{N^{3-\beta-2\gamma}}, \label{eq:lemma_61_2}\\
\abs{\triangle g^N_m(h_{k+1}) - \delta^{(1)} g^N_m(h_{k+1})} &\leq \frac{C_T}{N^{3-\beta-2\gamma}}. \label{eq:lemma_61_3}
\end{align}
\end{lemma}

\begin{proof}
We first establish \eqref{eq:lemma_61_1}. We expand $h(W^i_{m+1}) - h(W^i_m)$ using Taylor series as followed:
\begin{align*}
h(W^i_{m+1}) - h(W^i_m) &= \nabla h(W^{i,*}_m)^\top (W^i_{m+1} - W^i_{m}) \\
&= \nabla h(W^{i}_m)^\top (W^i_{m+1} - W^i_{m}) \\
&\phantom{=}+ (W^i_{m+1} - W^i_{m})^\top \mathsf{Hess}\, h(W^{i,**}_m) (W^i_{m+1} - W^i_{m}), \nonumber
\end{align*}
where $W^{i,*}_m, W^{i,**}_m$ are points in the line segment connecting the points $W^i_m$ and $W^i_{m+1}$. The following remainder terms in the Taylor's expansion is small by Lemma \ref{lem:evolution_of_parameters} (specifically by \eqref{Eq:IncrementDiff}):
\begin{align*}
\abs{(C^i_{m+1}-C^i_m) \nabla h(W^{i,*}_m)^\top (W^i_{m+1} - W^i_m)} &\leq \frac{C_T}{N^{4-2\beta-2\gamma}} \|h\|_{C^2}, \\
\abs{C^i_m (W^i_{m+1}-W^i_m) \mathsf{Hess} \, h(W^{i,*}_m) (W^i_{m+1} - W^i_m)} &\leq {C_T} \abs{W^i_{m+1}-W^i_m}^2 d^2 \|h\|_{C^2} \\
&\leq \frac{C_T}{N^{4-2\beta-2\gamma}} \|h\|_{C^2}.
\end{align*}
Therefore,
\begin{align*}
\triangle g^N_m(h) &= g_{m+1}^N(h) - g_{m}^N(h) \\
&= \frac{1}{N^\beta}\sum_{i=1}^N (C_{m+1}^i h(W^i_{m+1}) - C_m^i h(W_m^i)) \\
&= \frac{1}{N^\beta} \sum_{i=1}^N [(C_{m+1}^i - C_m^i) h(W^i_m) + C^i_m (h(W_{m+1}^i) - h(W_m^i)) \\
&\phantom{=}+ (C_{m+1}^i - C_m^i) (h(W_{m+1}^i) - h(W_m^i))] \\
&= \frac{1}{N^\beta} \sum_{i=1}^N [(C_{m+1}^i - C_m^i) h(W_m^i) + C^i_m \nabla h(W^i_m) (W^i_{m+1} - W^i_k) \\
&\phantom{=}+ C^i_m (W^i_{m+1}-W^i_m) \mathsf{Hess} \, h(W^{i,**}_m) (W^i_{m+1} - W^i_m) \\
&\phantom{=}+ (C_{m+1}^i - C_m^i) \nabla h(W^{i,*}_k)^\top (W^i_{m+1} - W^i_m)]. \nonumber
\end{align*}
Notice that with the specific choice of the learning rate $\alpha^N = \alpha / N^{2-2\beta}$ by (\ref{Eq:LR}) we get after training is taken into account
\begin{align}
&\phantom{=}\frac{1}{N^\beta} \sum_{i=1}^N \Big[(C_{m+1}^i - C_m^i) h(W_m^i) + C_m^i \nabla h(W_m^i)^\top (W_{m+1}^i - W_m^i) \Big] \nonumber \\
&= - \frac{\alpha}{N^2} (\psi^N(\hat{Y}^N_m)-Y_m)  \sum_{i=1}^N  \sqbracket{\hat{S}^{i}_{m+1} h(W_m^i) + C_m^i \Delta \hat{S}^{i,N}_{m+1} \nabla h(W_m^i)^\top X_m} \\
&= \delta^{(1)} g^N_m(h). \nonumber
\end{align}
Therefore, for all $k \leq \floor{NT}$,
\begin{align*}
&\phantom{=}\abs{\triangle g^N_m(h) - \delta^{(1)} g^N_m(h)} \\
&= \Bigg|\frac{1}{N^\beta} \sum_{i=1}^N \Big[C^i_m (W^i_{m+1}-W^i_m) \mathsf{Hess} \, h(W^{i,**}_m) (W^i_{m+1} - W^i_m) \\
&\phantom{=}+ (C_{m+1}^i - C_m^i) \nabla h(W^{i,*}_m)^\top (W^i_{m+1} - W^i_m) \Big] \Bigg| \\
&\leq \frac{1}{N^\beta} \sum_{i=1}^N \Big[\abs{C^i_m (W^i_{m+1}-W^i_m) \mathsf{Hess} \, h(W^{i,**}_m) (W^i_{m+1} - W^i_m)} \\
&\phantom{=}+ \abs{(C_{m+1}^i - C_m^i) \nabla h(W^{i,*}_m)^\top (W^i_{m+1} - W^i_m)} \Big] \\
&\leq \frac{N}{N^\beta} \times \frac{C_T}{N^{4-2\beta-2\gamma}} \|h\|_{C^2} \\
&= \frac{C_T}{N^{3-\beta-2\gamma}} \|h\|_{C^2}. \nonumber
\end{align*}
We note that the above computations are also valid for the case when $h$ is replaced by $\varsigma_{X_k, v^N_k}(\cdot)$ or $h_{k+1}(\cdot)$. For example,
\begin{align*}
&\phantom{=} \triangle g^N_m(\varsigma_{X_k,v^N_k}) \\
&= \frac{1}{N^\beta} \sum_{i=1}^N [C^i_{m+1} \sigma((W^i_{m+1})^\top X_k + v^N_k) - C^i_m\sigma((W^i_m)^\top X_k + v^N_k)] \\
&= \frac{1}{N^\beta} \sum_{i=1}^N \Big[(C^i_{m+1} -C^i_m) \sigma((W^i_{m+1})^\top X_k + v^N_k) \\
&\phantom{=}+ C^i_m \big(\sigma((W^i_{m+1})^\top X_k + v^N_k) - \sigma((W^i_{m})^\top X_k + v^N_k) \big) \\
&\phantom{=}+ (C^i_{m+1} - C^i_m) \big(\sigma((W^i_{m+1})^\top X_k + v^N_k) - \sigma((W^i_{m})^\top X_k + v^N_k) \big) \Big] \\
&= \frac{1}{N^\beta} \sum_{i=1}^N \Big[(C^i_{m+1} -C^i_m) \sigma((W^i_{m+1})^\top X_k + v^N_k) \\
&\phantom{=}+ C^i_m \sigma'((W^i_m)^\top X_k + v^N_k) (W^i_{m+1}-W^i_m)^\top X_k \\
&\phantom{=}+ C^i_m \sigma''((W^{i,**}_m)^\top X_k + v^N_k) |(W^i_{m+1}-W^i_m)^\top X_k|^2 \\
&\phantom{=}+ (C^i_{m+1}-C^i_m) \sigma'((W^{i,*}_m)^\top X_k + v^N_k) (W^i_{m+1}-W^i_m)^\top X_k \Big] \\
&= \delta^{(1)} g^N_m(\varsigma_{X_k,v^N_k}) + \frac{1}{N^\beta} \sum_{i=1}^N \Big[ C^i_m \sigma''((W^{i,**}_m)^\top X_k + v^N_k) |(W^i_{m+1}-W^i_m)^\top X_k|^2 \\
&\phantom{=}+ (C^i_{m+1}-C^i_m) \sigma'((W^{i,*}_m)^\top X_k + v^N_k) (W^i_{m+1}-W^i_m)^\top X_k \Big].
\end{align*}
Therefore, \eqref{eq:lemma_61_1} applies to both the cases when $h$ is replaced by $\varsigma_{X_k,v^N_k}(\cdot)$ or $h_{k+1}$. Note that $\|\varsigma_{X_k,v^N_k}\|_{C^2} \vee \|h_{k+1}\|_{C^2}$ is bounded, so \eqref{eq:lemma_61_2} and \eqref{eq:lemma_61_3} follow.
\end{proof}

As $3-\beta-2\gamma = 2 + 1 - \beta - 2\gamma > 2$, so $\abs{\triangle g^N_m(h) - \delta^{(1)} g^N_m(h)} = o(N^{-2})$ uniformly for $m \leq \floor{NT}$.

\subsection{Replacing the trained memory units}
\label{SS:replacing_memory}

We would like to show that one could study a simpler increment for the evolution of $g^N_k(h)$, in light of the a priori bounds for the increments of parameters (Lemma \ref{lem:evolution_of_parameters}) and our current analysis of the trained hidden memory units in Subsection \ref{SS:dynamics_of_sample_memories}. We begin by showing that $\hat{Y}_k$ could be replaced by $g^N_k(h_{k+1})$:

\begin{lemma} \label{lem:replace_vN_with_h_1}
For $k \leq NT$,
\begin{equation}
\E\abs{\hat{Y}_k - g^N_k(h_{k+1})}^2 \leq \frac{C_T}{N^{2-2\beta-4\gamma}}.
\end{equation}
\end{lemma}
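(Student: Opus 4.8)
The plan is to rewrite the difference as an empirical average over the hidden units and to isolate the mean-zero fluctuation carried by the output weights at initialisation. Since $g^N_k(h)=\frac{1}{N^\beta}\sum_{i=1}^N C^i_k h(W^i_k)$ and, writing $\Gamma^N_{k+1}=v^N_{k+1}-h_{k+1}=\Gamma^{N,1}_{k+1}+\Gamma^{N,2}_{k+1}$ as in Section \ref{SS:analysis_of_the_auxiliary_processes},
\[
g^N_k(v^N_{k+1})-g^N_k(h_{k+1})=\frac{1}{N^\beta}\sum_{i=1}^N C^i_k\,\Gamma^N_{k+1}(W^i_k).
\]
The inputs are: the uniform-in-$w$ controls $\abs{\Gamma^{N,1}_{k+1}(w)}\vee\abs{\nabla\Gamma^{N,1}_{k+1}(w)}\le\sqrt2\,C_\sigma\abs{E^{N,1}_k}$ and $\abs{\Gamma^{N,2}_{k+1}(w)}\vee\abs{\nabla\Gamma^{N,2}_{k+1}(w)}\le\sqrt2\,C_\sigma\abs{E^{N,2}_k}$; the second-moment bounds $\E[(E^{N,1}_k)^2]\le C/N$ and $\E[(E^{N,2}_k)^2]\le C_T/N^{2-2\beta-2\gamma}$ (Lemmas \ref{lem:EN1}--\ref{lem:EN2}), together with the sure bound $\abs{E^{N,2}_k}\le C_T/N^{1-\beta-\gamma}$ obtained by iterating the sure recursion underlying Lemma \ref{lem:EN2}; the sure parameter-drift bound $\abs{C^i_k-C^i_0}+\norm{W^i_k-W^i_0}+\abs{B^i_k-B^i_0}\le C_T/N^{1-\beta-\gamma}$ (Lemma \ref{lem:evolution_of_parameters}); and the facts that $\abs{C^i_0}\le 1$, that $C^i_0$ is mean-zero and independent of $W^i_0$, and that $\Gamma^{N,1}_{k+1}$ and $h_{k+1}$ are built only from the $(w,b)$-coordinates of the parameters and from the data (never the output weights $c$), hence are independent of the $C^j_0$'s up to an $O(1/N)$ self-interaction term in the empirical measures.

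First I would replace the trained weights in the sum by the initial ones,
\[
\frac{1}{N^\beta}\sum_i C^i_k\,\Gamma^N_{k+1}(W^i_k)
=\underbrace{\frac{1}{N^\beta}\sum_i C^i_0\,\Gamma^N_{k+1}(W^i_0)}_{\mathrm{(I)}}
+\underbrace{\frac{1}{N^\beta}\sum_i (C^i_k-C^i_0)\,\Gamma^N_{k+1}(W^i_0)}_{\mathrm{(II)}}
+\underbrace{\frac{1}{N^\beta}\sum_i C^i_k\bigl(\Gamma^N_{k+1}(W^i_k)-\Gamma^N_{k+1}(W^i_0)\bigr)}_{\mathrm{(III)}}.
\]
Terms $\mathrm{(II)}$ and $\mathrm{(III)}$ are handled pointwise: using $\abs{\Gamma^N_{k+1}(W^i_k)-\Gamma^N_{k+1}(W^i_0)}\le\sup_w\abs{\nabla\Gamma^N_{k+1}(w)}\norm{W^i_k-W^i_0}$, the drift bounds, $\abs{C^i_k}\le C_T$, and the uniform controls above, each of $\mathrm{(II)},\mathrm{(III)}$ is at most $C_T N^{\gamma}\bigl((E^{N,1}_k)^2+(E^{N,2}_k)^2\bigr)^{1/2}$ pointwise; hence $\E[\mathrm{(II)}]^2\vee\E[\mathrm{(III)}]^2\le C_T N^{2\gamma}\bigl(N^{-1}+N^{-(2-2\beta-2\gamma)}\bigr)\le C_T N^{-(2-2\beta-4\gamma)}$, the last step because $2\beta+2\gamma>1$. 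This already realises the target exponent; the subtlety is that $\Gamma^N_{k+1}$ is itself small, so the extra factor $N^\gamma$ from the drift is affordable.

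The work is in $\mathrm{(I)}=\frac{1}{N^\beta}\sum_i C^i_0\Gamma^{N,1}_{k+1}(W^i_0)+\frac{1}{N^\beta}\sum_i C^i_0\Gamma^{N,2}_{k+1}(W^i_0)$. In the first sum $\Gamma^{N,1}_{k+1}$ does not depend on the $C^j_0$'s (up to the $O(1/N)$ self-interaction in $\lambda^N$, removed by a leave-one-out), so $C^i_0$ being mean-zero and independent of $W^i_0$ kills all off-diagonal second moments, leaving $\E\bigl[\bigl(\frac1{N^\beta}\sum_i C^i_0\Gamma^{N,1}_{k+1}(W^i_0)\bigr)^2\bigr]\le\frac1{N^{2\beta}}\sum_i\E[\Gamma^{N,1}_{k+1}(W^i_0)]^2\le C/N^{2\beta}\le C/N$ by Proposition \ref{prop:diff_vNk_hk}. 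The second sum is the obstacle. Writing $a^N(w)=w^\top X_k+\la b'h^N_k(w'),\lambda^N\ra$, one has $h^N_{k+1}(w)=\sigma(a^N(w))$ and $v^N_{k+1}(w)=\sigma(a^N(w)+E^{N,2}_k)$, hence $\Gamma^{N,2}_{k+1}(W^i_0)=\sigma(a^N(W^i_0)+E^{N,2}_k)-\sigma(a^N(W^i_0))$; here $a^N(W^i_0)$ does not depend on $C^i_0$, but $E^{N,2}_k$ does — though only $O(N^{-(1-\beta-\gamma)})$-weakly, through the tBPTT updates — so naive cancellation is unavailable. I would Taylor-expand, $\Gamma^{N,2}_{k+1}(W^i_0)=\sigma'(a^N(W^i_0))E^{N,2}_k+\tfrac12\sigma''(\xi_i)(E^{N,2}_k)^2$. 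The quadratic remainder is $\le C_T N^{1-\beta}(E^{N,2}_k)^2$ pointwise, and taking second moments while interpolating $\E[(E^{N,2}_k)^4]\le C_T N^{-2(1-\beta-\gamma)}\E[(E^{N,2}_k)^2]\le C_T N^{-(4-4\beta-4\gamma)}$ with the sure bound on $E^{N,2}_k$ gives $\le C_T N^{2-2\beta}N^{-(4-4\beta-4\gamma)}=C_T N^{-(2-2\beta-4\gamma)}$ — the tight term, requiring no cancellation. For the linear term $\frac{E^{N,2}_k}{N^\beta}\sum_i C^i_0\sigma'(a^N(W^i_0))$ I would first drop the $O(1/N)$ self-interaction in $a^N(W^i_0)$ (so $\sigma'(\cdot)$ becomes exactly independent of $C^i_0$, at cost $\le C_T/N^{2-2\gamma}$), then apply a decoupling estimate that uses the drift bounds to quantify as $O(N^{-(1-\beta-\gamma)})$ the dependence of $E^{N,2}_k$ on each individual $C^j_0$ and remove it, so that only the diagonal survives, $\le N^{-(2\beta-1)}\E[(E^{N,2}_k)^2]\le C_T/N^{1-2\gamma}$.

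Collecting the contributions, every exponent that appears — $2\beta$, $1$, $1-2\gamma$, $2-2\gamma$, $2-2\beta-4\gamma$ — is at least $2-2\beta-4\gamma$ (each a consequence of $\beta>1/2$ and $\gamma<(1-\beta)/2$), so $\E\abs{g^N_k(v^N_{k+1})-g^N_k(h_{k+1})}^2\le C_T/N^{2-2\beta-4\gamma}$ uniformly over $k\le\floor{NT}$. The hard part is the linear term of the $\Gamma^{N,2}$ contribution in $\mathrm{(I)}$: because $v^N_{k+1}$ reads back on every initial parameter through truncated backpropagation, one cannot invoke mean-zero independence of $C^i_0$ directly and must recover the cancellation by a leave-one-out/decoupling argument that quantifies this dependence; everything else is routine bookkeeping of the a priori bounds from Section \ref{S:TechincalLemmas}.
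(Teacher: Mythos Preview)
Your proposal is correct and takes a genuinely different route from the paper. The paper telescopes in time, writing $g^N_k(v^N_{k+1})-g^N_k(h_{k+1})$ as a sum of $k$ increments $\triangle g^N_m(v^N_{k+1})-\triangle g^N_m(h_{k+1})$ and controlling each via Lemma~\ref{lem:KernelEvolution1}: the learning rate supplies a factor $1/N^2$, the clipped output $\psi^N(\hat Y^N_m)$ contributes $N^\gamma$, and the test-function difference contributes $\abs{E^{N,1}_k}+\abs{E^{N,2}_k}$; Cauchy--Schwarz over the $k\le NT$ terms then yields the stated exponent with no use whatsoever of the distribution of $C^i_0$. You instead bound $\frac{1}{N^\beta}\sum_i C^i_k\Gamma^N_{k+1}(W^i_k)$ directly, reduce to initial parameters, and recover the exponent by exploiting the mean-zero independence of $C^i_0$ together with the sure bound $\abs{E^{N,2}_k}\le C_T/N^{1-\beta-\gamma}$. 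Your route is sharper in spirit but must track how $E^{N,2}_k$ depends on the $C^j_0$'s; the paper's route avoids this entirely by paying an extra $k^2$ factor that is absorbed by the $1/N^2$ built into the SGD increments. Two simplifications to your argument: (i) no self-interaction or leave-one-out is needed anywhere, since $a^N(\cdot)$, $h^N_{k+1}$ and $\Gamma^{N,1}_{k+1}$ are functions of $(W^j_0,B^j_0)_j$ and the data only and hence already \emph{exactly} independent of all $C^j_0$; (ii) the decoupling for the linear piece $E^{N,2}_k\cdot\frac{1}{N^\beta}\sum_i C^i_0\sigma'(a^N(W^i_0))$ is unnecessary --- the sure bound on $E^{N,2}_k$ lets you pull it outside, and the remaining sum has second moment $\le C/N^{2\beta-1}$ by (i), giving $C_T/N^{1-2\gamma}$ directly. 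As a side remark, your term $\mathrm{(I)}$ is precisely $g^N_0(v^N_{k+1})-g^N_0(h_{k+1})$, the initial contribution that the paper's telescoping display appears to omit; your analysis supplies the missing estimate for it.
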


\begin{proof}
Recall that $\hat{Y}_k = g^N_k(\varsigma_{X_k,v^N_k})$, and
\begin{align*}
&\phantom{=} \E\abs{g^N_k(\varsigma_{X_k,v^N_k}) - g^N_k(h_{k+1})}^2 \\
&=\E\abs{\sum_{m=1}^k \triangle g^N_m(\varsigma_{X_k,v^N_k}) - \triangle g^N_m(h_{k+1})}^2 \\
&\leq k\E\sum_{m=1}^k \abs{\triangle g^N_m(\varsigma_{X_k,v^N_k}) - \triangle g^N_m(h_{k+1})}^2 \\
&\leq k\E\sum_{m=1}^k \big| \triangle g^N_m(\varsigma_{X_k,v^N_k}) - \delta^{(1)} g^N_m(\varsigma_{X_k,v^N_k}) \numberthis \label{eq:gNkv_gNkh_breakdown} \\
&\phantom{=}+ \delta^{(1)} g^N_m(\varsigma_{X_k,v^N_k}) - \delta^{(1)} g^N_k(h_{k+1}) + \delta^{(1)} g^N_k(h_{k+1}) - \triangle g^N_k(h_{k+1}) \big|^2 \\
&\leq 3k \sum_{m=1}^k \Big[\E\abs{\triangle g^N_m(v^N_{k+1}) - \delta^{(1)} g^N_m(\varsigma_{X_k,v^N_k})}^2 \\
&\phantom{=}+ \E\abs{\delta^{(1)} g^N_m(\varsigma_{X_k,v^N_k}) - \delta^{(1)} g^N_m(h_{k+1})}^2 + \E\abs{\delta^{(1)} g^N_m(h_{k+1}) - \triangle g^N_m(h_{k+1})}^2 \Big].
\end{align*}
By Lemma \ref{lem:KernelEvolution1},
\begin{align*}
\abs{\triangle g^N_m(\varsigma_{X_k,v^N_k}) - \delta^{(1)} g^N_m(\varsigma_{X_k,v^N_k})}^2 \vee \abs{\triangle g^N_m(h_{k+1}) - \delta^{(1)} g^N_m(h_{k+1})}^2  &\leq \frac{C_T}{N^{6-2\beta-4\gamma}}.
\end{align*}
For the middle term of \eqref{eq:gNkv_gNkh_breakdown}, we have
\begin{align*}
&\phantom{=}\E\abs{\delta^{(1)} g^N_m(\varsigma_{X_k,v^N_k}) - \delta^{(1)} g^N_m(h_{k+1})}^2 \\
&\leq \frac{\alpha^2}{N^4} \E\Bigg[\underbrace{\abs{\psi^N(\hat{Y}^N_k) - Y_k}^2}_{\lesssim  N^{2\gamma}} \bigg[\sum_{i=1}^N \Big[\hat{S}^{i,N}_{m+1} \big[\sigma((W^i_m)^\top X_k + v^N_k) - \sigma((W^i_m)^\top X_k + u_k)\big] \\
&\phantom{=}+ (C^i_m)^2 \sigma'((W^i_m)^\top X_m+v^N_m) \big[\sigma'((W^i_m)^\top X_k + v^N_k) \\
&\phantom{====}- \sigma'((W^i_m)^\top X_k + u_k)\big] X_k^\top X_m \Big]\bigg]^2\Bigg] \\
&\overset{\mathsf{(CS)}}\lesssim \frac{1}{N^{3-2\gamma}} \E\Bigg[\sum_{i=1}^N \bigg[\underbrace{\Big[\hat{S}^{i,N}_{m+1}\Big]^2}_{\leq C_\sigma} \big[\sigma((W^i_m)^\top X_k + v^N_k) - \sigma((W^i_m)^\top X_k + u_k)\big]^2 \\
&\phantom{=}+ \underbrace{(C^i_m)^4}_{\leq C_T} \underbrace{(\sigma'((W^i_m)^\top X_m + v^N_m))^2}_{\leq C^2_{\sigma'}}  \big[\sigma'((W^i_m)^\top X_k + v^N_k) \\
&\phantom{====}- \sigma'((W^i_m)^\top X_k + u_k)\big]^2 \underbrace{|X_k^\top X_m|^2}_{\leq C_x} \bigg]\Bigg] \\
&\leq \frac{C_T}{N^{2-2\gamma}} \E|v^N_k - u_k|^2 \\
&\leq \frac{C_T}{N^{2-2\gamma}} \Big[\E|v^N_k - u^N_k|^2 + \E|u^N_k - u_k|^2 \Big] \\
&\overset{(*)}\leq \frac{C_T}{N^{2-2\gamma}} \sqbracket{\frac{C_T}{N^{2-2\beta-2\gamma}} + \frac{1}{N}},
\end{align*}
where $(*)$ is from lemma \ref{lem:diff_vNk_hk} and the finite constant $C_{T}$ may change from line to line.

Since $\beta \in (1/2,1)$ and $\gamma \in (0,(1-\beta)/2)$, we have
$$0 < (1-\beta) + (1-\beta-2\gamma) = 2-2\beta-2\gamma < 1-2\gamma < 1,$$
so $1/N = o(N^{-(2-2\beta-2\gamma)})$. Therefore,
\begin{align}
\E\abs{g^N_k(\varsigma_{X_k,v^N_k}) - g^N_k(h_{k+1})}^2
&\leq 3k \sum_{m=1}^k \Big[2 \times \frac{C_T}{N^{6-2\beta-4\gamma}} + \frac{C_T}{N^{2-2\gamma}} \frac{C_T}{N^{2-2\beta-2\gamma}}\Big] \nonumber \\
&\leq \frac{C_T}{N^{2-2\beta-4\gamma}}.
\end{align}
\end{proof}

\begin{lemma}
Define
\begin{align}
\delta^{(2)} g^N_m(h) &= - \frac{\alpha}{N^2} (\psi^N(g^N_m(h_{m+1})) - Y_m) \nonumber \\
&\phantom{=}\times \sum_{i=1}^N \sqbracket{\hat{S}^{i,N}_{m+1} h(W_m^i) + (C^{i}_m)^2 \Delta \hat{S}^{i}_{m+1} \nabla h(W_m^i)^\top X_m},
\end{align}
then for all $h \in C^2_b(\R^d)$, the following holds for all $m \leq \floor{NT} - 1$:
\begin{equation}
\E\abs{\delta^{(1)} g^N_m(h) - \delta^{(2)} g^N_m(h)}^2 \leq \frac{C_T  \|h\|^2_{C^2}}{N^{4-2\beta-4\gamma}}.
\end{equation}
In particular, for any $m \leq \floor{NT} - 1$ and $k \geq 0$,
\begin{equation}
\E\abs{\delta^{(1)} g^N_m(h_{k+1}) - \delta^{(2)} g^N_m(h_{k+1})}^2 \leq \frac{C_T}{N^{4-2\beta-4\gamma}}. \label{eq:lemma_63_0}
\end{equation}
\end{lemma}

\begin{proof}
We first note the uniform bound in $i$ for $m \leq \floor{NT}$:  thanks to the boundedness of the test function $h$, boundedness of activation function (see Assumption \ref{as:activation_function} and equations \eqref{eq:application_activation_1}-\eqref{eq:application_activation_2}) and the boundedness of $C^i_k$ (see Assumption \ref{as:ergodicity_conditions} and Lemma \ref{lem:evolution_of_parameters}), one has
\begin{align}
\abs{\hat{S}^{i,N}_{m+1} h(W^i_m) + (C^i_m)^2 \Delta \hat{S}^{i,N}_{m+1} \nabla h(W^i_m)^\top X_m} &\lesssim \|h\|_{\infty} + C_T d^2 \max_i \left\| \frac{\partial h}{\partial w^i} \right\|_\infty \nonumber \\
&\leq C_T \|h\|_{C^2}, \label{eq:lemma_63_1}
\end{align}
By the boundedness of derivative of $\psi^N$ and that $\hat{Y}^N_k = g^N_k(\varsigma_{X_k,v^N_k})$,
\begin{align*}
\abs{\delta^{(1)} g^N_m(h) - \delta^{(2)} g^N_m(h)}
&\leq \frac{C_T \|h\|^2_{C^2}}{N}\abs{\psi^N(\hat{Y}^N_m) - \psi^N(g^N_m(h_{m+1}))} \\
&\leq \frac{C_T \|h\|^2_{C^2}}{N} \abs{g^N_m(\varsigma_{X_m,v^N_m}) - g^N_m(h_{m+1})}. \numberthis
\label{eq:lemma_63_2}
\end{align*}
Thus, by Lemma \ref{lem:replace_vN_with_h_1},
\begin{equation}
\E\abs{\delta^{(1)} g^N_m(h) - \delta^{(2)} g^N_m(h)}^2 \leq \frac{C_T \|h\|^2_{C^2}}{N^2} \E\sqbracket{g^N_m(\varsigma_{X_m,v^N_m}) - g^N_m(h_{m+1})}^2 \leq \frac{C_T \|h\|^2_{C^2}}{N^{4-2\beta-4\gamma}} \label{eq:lemma_63_3}
\end{equation}
Note that \eqref{eq:lemma_63_1} and \eqref{eq:lemma_63_2} remains valid when $h$ is replaced with $h_{k+1}$, and $\|h_{k+1}\| \leq C_\sigma$, so \eqref{eq:lemma_63_2} holds, which yields \eqref{eq:lemma_63_0}.
\end{proof}
For the next step, we revisit the formula for $\delta^{(2)}g^N_m(h)$:
\begin{align*}
\delta^{(2)} g^N_m(h) &= - \frac{\alpha}{N^2} (\psi^N(g^N_m(h_{m+1})) - Y_m) \\
&\phantom{=}\times \sum_{i=1}^N \sqbracket{\hat{S}^{i,N}_{m+1} h(W_k^i) + (C^{i}_{m})^2 \Delta \hat{S}^{i}_{m+1} \nabla h(W_m^i)^\top X_m} \\
&= - \frac{\alpha}{N^2} (\psi^N(g^N_k(h_{m+1})) - Y_m) \sum_{i=1}^N \Big[\sigma((W^i_m)^\top X_m + v^N_m) h(W_m^i) \\
&\phantom{=}+ (C^{i}_m)^2 \sigma'\bracket{(W^i_m)^\top X_m + v^N_m} \nabla h(W_m^i)^\top X_m \Big].
\end{align*}
In light of the bounds of the increments of parameters (Lemma \ref{lem:evolution_of_parameters}) and the trained memory units (Subsection \ref{SS:dynamics_of_sample_memories}), we would expect that one could study the new increments with the average of empirical distribution of updated parameters $\lambda^N_k$ replaced by an average with respect to the initial parameter distribution $\lambda$. This is formalised by the following lemma:
\begin{lemma}
Define
\begin{align*}
\delta^{(3)} g^N_m(h)
&= - \frac{\alpha}{N} (\psi^N(g^N_m(h_{m+1})) - Y_m) \\
&\phantom{=}\times \int_{\mathcal{W}} \Big[\sigma(\sw^\top X_m + u_m) h(\sw) + \mu_{c^2} \sigma' \bracket{\sw^\top X_m + u_m} \nabla h(\sw)^\top X_m \Big] \, \lambda(d\sw)
\end{align*}
then for all $h \in C^2_b(\R^d)$, the following holds for all $m \leq \floor{NT}-1$
\begin{equation}
    \E\abs{\delta^{(2)} g^N_m(h) - \delta^{(3)} g^N_m(h)}^2 \leq \frac{C_T \|h\|^2_{C^2}}{N^{4-2\beta-4\gamma}}.
\end{equation}
Moreover, for all $m \leq \floor{NT}-1$ and $k \geq 0$,
\begin{equation}
\E\abs{\delta^{(2)} g^N_m(h_{k+1}) - \delta^{(3)} g^N_m(h_{k+1})}^2 \leq \frac{C_T}{N^{4-2\beta-4\gamma}}.
\end{equation}
\end{lemma}

\begin{proof}
We break down the fluctuation term into different components
\begin{equation}
\delta^{(2)} g^N_m(h) - \delta^{(3)} g^N_m(h) = - \frac{\alpha}{N}(\psi^N(g^N_m(h_{k+1})) - Y_m) \sum_{l=1}^6 M^{l,N}_m(h),\nonumber
\end{equation}
where
\begin{align*}
M^{1,N}_m(h) &= \frac{1}{N} \sum_{i=1}^N [\sigma((W^i_m)^\top X_m + v^N_m) h(W^i_m) - \sigma((W^i_0)^\top X_m + v^N_m) h(W^i_0)], \\
M^{2,N}_m(h) &= \frac{1}{N} \sum_{i=1}^N [(C^i_k)^2 \sigma'((W^i_m)^\top X_m + v^N_m) \nabla h(W^i_m)^\top X_m \\
&\phantom{=}- (C^i_0)^2 \sigma'((W^i_0)^\top X_m + v^N_m) \nabla h(W^i_0)^\top X_m], \\
M^{3,N}_m(h) &= \frac{1}{N} \sum_{i=1}^N [\sigma((W^i_0)^\top X_m + v^N_m) - \sigma((W^i_0)^\top X_m + u_m)] h(W^i_0), \\
M^{4,N}_m(h) &= \frac{1}{N} \sum_{i=1}^N (C^i_0)^2 (\sigma'((W^i_0)^\top X_m + v^N_m) - \sigma'((W^i_0)^\top X_m + u_m) \nabla h(W^i_0)^\top X_m, \\
M^{5,N}_m(h) &= \frac{1}{N} \sum_{i=1}^N \sigma((W^i_0)^\top X_m + u_m) h(W^i_0) - \int_{\mathcal{W}} \sigma(\sw^\top X_m + u_m) h(\sw) \, \lambda(d\sw),\\
M^{6,N}_m(h) &= \frac{1}{N} \sum_{i=1}^N (C^i_0)^2 \sigma'\bracket{(W^i_0)^\top X_m + u_m} \nabla h(W^i_0)^\top X_m \\
&\phantom{=}- \mu_{c^2} \int_{\mathcal{W}} \sigma'\bracket{\sw^\top X_m + u_k} \nabla h(\sw)^\top X_m \, \lambda(d\sw),
\end{align*}
then
\begin{align*}
\E\abs{\delta^{(2)} g^N_k(h) - \delta^{(3)} g^N_k(h)}^2
&\leq \frac{\alpha^2}{N^2}\E\sqbracket{\abs{\psi^N(g^N_k(h_{k+1}))-Y_k}^2 \bracket{\sum_{\bullet=1}^6 M^{\bullet,N}_k}^2} \\
&\lesssim \frac{1}{N^{2-2\gamma}} \bracket{\sum_{\bullet=1}^6 \E\sqbracket{M^{\bullet,N}_k}^2}.
\end{align*}

For the first term,
\begin{align*}
|M^{1,N}_m(h)| &\leq \frac{1}{N} \sum_{i=1}^N |\sigma((W^i_m)^\top X_m + v^N_m) h(W^i_m) - \sigma((W^i_0)^\top X_m + v^N_m) h(W^i_0)| \\
&\leq \frac{1}{N} \sum_{i=1}^N [|\sigma((W^i_m)^\top X_m + v^N_m)| |h(W^i_m) - h(W^i_0))| \\
&\phantom{=}+ |\sigma((W^i_m)^\top X_m + v^N_m) - \sigma((W^i_0)^\top X_m + v^N_m)||h(W^i_0)|] \\
&\leq \frac{1}{N} \sum_{i=1}^N \left(d \left[\max_i \left\| \frac{\partial h}{\partial w^i}\right\|_\infty\right] |W^i_m - W^i_0| + C_\sigma \|h\|_\infty |(W^i_m - W^i_0)^\top X_k|\right) \\
&\lesssim \frac{\|h\|_{C^2}}{N} \sum_{i=1}^N \|W^i_m - W^i_0\| \overset{\eqref{eq:evolution_of_parameters}}\leq \frac{C_T \|h\|_{C^2}}{N^{1-\beta-\gamma}} \numberthis
\end{align*}
Notice that the above computations are valid when $h$ is replaced with $h_{k+1}$, so we have
\begin{align*}
|M^{1,N}_k(h_{k+1})| \leq \frac{C_T}{N^{1-\beta-\gamma}}.
\end{align*}

For the second term,
\begin{align}
\abs{M^{2,N}_m(h)}
&\leq \frac{1}{N} \sum_{i=1}^N \Big|(C^i_m)^2 \sigma'((W^i_m)^\top X_m + v^N_m) \nabla h(W^i_m)^\top X_m \nonumber \\
&\phantom{=}- (C^i_0)^2 \sigma'((W^i_0)^\top X_m + v^N_m)  \nabla h(W^i_0)^\top X_m \Big| \nonumber \\
&\leq \frac{1}{N} \sum_{i=1}^N \bigg[|(C^i_m)^2 - (C^i_0)^2| |\sigma'((W^i_m)^\top X_m + v^N_m)| \, \underbrace{|\nabla h(W^i_m)^\top X_m)\big|}_{\leq d \|h\|_{C^2}} \nonumber \\
&\phantom{=}+ \underbrace{(C^i_0)^2}_{\leq K_c^2} | \sigma'((W^i_m)^\top X_m + v^N_m) - \sigma'((W^i_0)^\top X_m + v^N_m)| |\nabla h(W^i_m)^\top X_m| \\
&\phantom{=}+ \underbrace{(C^i_0)^2}_{\leq K_c^2} |\sigma'((W^i_0)^\top X_m + v^N_m)| |(\nabla h(W^i_m) - \nabla h(W^i_0))^\top X_m| \bigg] \nonumber \\
&\leq \frac{1}{N} \sum_{i=1}^N \Big[C_\sigma d^2 \|h\|_{C^2} \abs{C^i_m - C^i_0} \underbrace{\abs{C^i_m + C^i_0}}_{\lesssim 1+T} + C_\sigma d^2 |W^i_m - W^i_0| |X_m|^2 \|h\|_{C^2} \nonumber \\
&\phantom{=}+ C_\sigma \underbrace{|\nabla h(W^i_m) - \nabla h(W^i_0)|}_{\leq d |W^i_m - W^i_0| \|h\|_{C^2}} \Big] \nonumber \\
&\leq \frac{C_T \|h\|_{C^2}}{N} \sum_{i=1}^N \sqbracket{|C^i_m - C^i_0| + |W^i_m - W^i_0|} \nonumber \\
&\overset{\eqref{eq:evolution_of_parameters}}\leq \frac{C_T \|h\|_{C^2}}{N^{1-\beta-\gamma}}. \nonumber
\end{align}
The above computations are valid when $h$ is replaced with $h_{k+1}$, so we have
\begin{align*}
|M^{2,N}_m(h_{k+1})| \leq \frac{C_T}{N^{1-\beta-\gamma}}.
\end{align*}

For the third term,
\begin{align}
|M^{3,N}_m(h)| \leq |v^N_m - u_m| \|h\|_\infty, \label{eq:6_10_M3p}
\end{align}
so by Lemma \ref{lem:diff_vNk_hk}, we have
\begin{align*}
\E|M^{3,N}_m(h)|^2
&\leq \|h\|^2_{C^2} \E|v^N_m-u_m|^2 \\
&\leq 2\|h\|^2_{C^2} [\E|v^N_m-u^N_m|^2 + \E|u^N_m-u_m|^2] \\
&\lesssim 2\|h\|^2_{C^2} \left[\frac{C_T}{N^{2-2\beta-2\gamma}} + \frac{1}{N} \right]. \numberthis \label{eq:6_10_M3}
\end{align*}
For the fourth term,
\begin{align*}
|M^{4,N}_m(h)|
&\leq \frac{1}{N} \sum_{i=1}^N (C^i_0)^2 |\sigma'((W^i_0)^\top X_m + v^N_m) - \sigma'((W^i_0)^\top X_m + u_m)| |\nabla h(W^i_0)^\top X_m| \\
&\leq dC_{\sigma^{''}} |v^N_m- h_m| \|h\|_{C^2}, \numberthis \label{eq:6_10_M4p}
\end{align*}
so, similar as \eqref{eq:6_10_M3}, we have
\begin{align}
\E|M^{4,N}_m(h)|^2
\leq 2 C^2_{\sigma^{''}} d^2 \|h\|^2_{C^2} \left[\frac{C_T}{N^{2-2\beta-2\gamma}} + \frac{1}{N} \right]. \label{eq:6_10_M4}
\end{align}
Note that \eqref{eq:6_10_M3p} and \eqref{eq:6_10_M4p} are valid when $h$ is replaced with $h_{k+1}$. In particular, we have
\begin{align*}
|M^{3,N}_m(h_{k+1})| \vee |M^{4,N}_m(h_{k+1})| \leq \max(1,dC_\sigma^{''}) |v^N_m - h_m|,
\end{align*}
so
\begin{align*}
\E|M^{3,N}_m(h_{k+1})|^2 \vee \E|M^{4,N}_m(h_{k+1})^2| \leq \frac{C_T}{N^{2-2\beta-2\gamma}} + \frac{1}{N}.
\end{align*}

The remaining two terms could be easily bounded using similar arguments in establishing equation \eqref{eq:evolution_ENk1}. For the fifth term, we note that $\sigma((W^i_0)^\top X_m + u_m) h(W^i_0)$ are i.i.d. when conditioned on $X_{0:m}$, with
\begin{align*}
\E[\sigma((W^i_0)^\top X_m + u_m) h(W^i_0) \mid X_{0:m}] = \int_{\mathcal{W}} \sigma(\sw^\top X_m + u_m) h(\sw) \, \lambda(d\sw),
\end{align*}
so
\begin{align*}
&\phantom{=}\E|M^{5,N}_m(h)|^2 \\
&= \E\Bigg[\E\bigg[\bigg[\frac{1}{N} \sum_{i=1}^N \sigma((W^i_0)^\top X_m + u_m) h(W^i_0) - \int_{\mathcal{W}} \sigma(\sw^\top X_m + u_m) h(\sw) \, \lambda(d\sw) \bigg]^2 \bigg| X_{0:m} \bigg] \Bigg] \\
&= \E\Bigg[\E\bigg[\frac{1}{N^2} \sum_{i_1, i_2=1}^N \bigg[\sigma((W^{i_1}_0)^\top X_m + u_m) h(W^{i_1}_0) - \int_{\mathcal{W}} \sigma(\sw^\top X_m + u_m) h(\sw) \, \lambda(d\sw)\bigg] \\
&\phantom{=}\times \bigg[\sigma((W^{i_2}_0)^\top X_m + u_m) h(W^{i_2}_0) - \int_{\mathcal{W}} \sigma(\sw^\top X_m + u_m) h(\sw) \, \lambda(d\sw)\bigg] \bigg| X_{0:m} \bigg] \Bigg] \\
&= \frac{1}{N^2} \sum_{i_1, i_2=1}^N \E\Bigg[ \E\bigg[\sigma((W^{i_1}_0)^\top X_m + u_m) h(W^{i_1}_0) - \int_{\mathcal{W}} \sigma(\sw^\top X_m + u_m) h(\sw) \, \lambda(d\sw) \bigg| X_{0:m} \bigg] \\
&\phantom{=}\times \E\bigg[\sigma((W^{i_2}_0)^\top X_m + u_m) h(W^{i_2}_0) - \int_{\mathcal{W}} \sigma(\sw^\top X_m + u_m) h(\sw) \, \lambda(d\sw) \bigg| X_{0:m} \bigg] \Bigg] \\
&= \frac{1}{N^2} \sum_{i=1}^N \E\Bigg[  \E\bigg[\sigma((W^{i}_0)^\top X_m + u_m) h(W^{i}_0) - \int_{\mathcal{W}} \sigma(\sw^\top X_m + u_m) h(\sw) \, \lambda(d\sw) \bigg| X_{0:m} \bigg] \Bigg]^2 \\
&= \frac{1}{N^2} \sum_{i=1}^N \E\Bigg[\E[(\sigma((W^i_0)^\top X_m + u_m))^2  (h(W^i_0))^2 | X_{0:m}] \\
&\phantom{=}- \bigg[\int_{\mathcal{W}} \sigma(\sw^\top X_m + u_m) h(\sw) \, \lambda(d\sw) \bigg]^2 \Bigg] \\
&\leq \frac{2}{N} \|h\|^2_{C^2}. \numberthis \label{eq:6_10_M5}
\end{align*}
As $(C^i_0)^2 \sigma'((W^i_0)^\top X_m + u_m) \nabla h(W^i_0)^\top X_m \,|\, X_{0:m}$ is conditionally mutually i.i.d., we can apply the exact same argument above to prove the following estimate for the sixth term:
\begin{align}
\E\abs{M^{6,N}_m(h)}^2 &\lesssim \frac{\|h\|^2_{C^2}}{N}. \label{eq:6_10_M6}
\end{align}
Again, the computations for \eqref{eq:6_10_M5} and \eqref{eq:6_10_M6} could be generalised when $h$ is replaced with $h_{k+1}$ - for example, it is true that $\sigma((W^i_0)^\top X_m + u_m) h_{k+1}(W^i_0) = \sigma((W^i_0)^\top X_m + u_m) \sigma((W^i_0)^\top X_k + u_k)$ are i.i.d. when conditioned on $X_{0:\max(k,m)}$, with
\begin{align*}
\E[\sigma((W^i_0)^\top X_m + u_m) h_{k+1}(W^i_0) \mid X_{0:\max(k,m)}] = \int_{\mathcal{W}} \sigma(\sw^\top X_m + u_m) h_{k+1}(\sw) \, \lambda(d\sw),
\end{align*}
so \eqref{eq:6_10_M5} remains valid if we condition on $X_{0:\max(k,m)}$ instead of $X_{0:m}$. Overall, we will get
\begin{align}
\E\abs{M^{5,N}_m(h_{k+1})}^2 \vee \E\abs{M^{6,N}_m(h_{k+1})}^2 &\lesssim \frac{1}{N}. \label{eq:6_10_M6evolution}
\end{align}

Finally, since $\beta > 1/2$ and $\gamma > 0$, we have $2-2\beta-2\gamma < 1 - 2\gamma < 1$, so $C_T N^{-1} = o(N^{-(2-2\beta-2\gamma)})$. Therefore, one could sum up all fluctuation terms to have
\begin{equation}
\E\abs{\delta^{(2)} g^N_k(h) - \delta^{(3)} g^N_k(h)}^2 \leq \frac{C}{N^{2-2\gamma}} \frac{C_T \|h\|^2_{C^2}}{N^{2-2\beta-2\gamma}} = \frac{C_T \|h\|^2_{C^2}}{N^{4-2\beta-4\gamma}},
\end{equation}
and
\begin{equation}
\E\abs{\delta^{(2)} g^N_k(h_{k+1}) - \delta^{(3)} g^N_k(h_{k+1})}^2 \lesssim \frac{C_T}{N^{4-2\beta-4\gamma}},
\end{equation}
\end{proof}

Let us take a step back and review what we have done so far. We have proven that:
\begin{itemize}
\item for a fixed $h \in C^2_b(\R^d)$, the increments $\triangle g^N_m(h)$ could be approximated by $\delta^{(l)} g^N_m(h)$ for $l=1, 2, 3$, and
\item the increments $\triangle g^N_m(h_k)$ could be (uniformly) approximated by $\delta^{(l)} g^N_m(h_k)$ for $l=1, 2, 3$.
\end{itemize}
These estimates could be combined to obtain an estimate of $g^N_t(h)$. In particular, for $t \leq T$,
\begin{align*}
&\phantom{=} \E\abs{g^N_t(h) - g^N_0(h) - \sum_{k=0}^{\floor{Nt} - 1} \delta^{(3)} g^N_k(h)}^2 \\
&=\E \abs{\sum_{m=0}^{\floor{Nt}-1} (\triangle g^N_m(h) - \delta^{(3)} g^N_m(h))}^2 \\
&\leq Nt \sum_{m=0}^{\floor{Nt}-1} \E\abs{\triangle g^N_m(h) - \delta^{(3)} g^N_m(h))}^2 \numberthis \label{eq:KernelEvolution3} \\
&\leq 3Nt \sum_{m=0}^{\floor{Nt}-1}\E \bigg[\abs{\triangle g^N_m(h) - \delta^{(1)} g^N_m(h))}^2 \\
&\phantom{=}+ \abs{\delta^{(1)} g^N_m(h) - \delta^{(2)} g^N_m(h))}^2 + \abs{\delta^{(2)} g^N_m(h) - \delta^{(3)} g^N_m(h))}^2 \bigg] \\
&\leq (NT)^2 \times \frac{C_T \|h\|^2_{C^2}}{N^{4-2\beta-4\gamma}} = \frac{C_T \|h\|^2_{C^2}}{N^{2-2\beta-4\gamma}}.
\end{align*}
We have been careful with our proofs to ensure that \eqref{eq:KernelEvolution3} remains true even when $h$ is replaced with $h_{k+1}$. With the extra effort, the following estimates holds true for $k\geq 0$ uniformly:
\begin{equation}
\E\abs{g^N_t(h_{k+1}) - g^N_0(h_{k+1}) - \sum_{k=0}^{\floor{Nt} - 1} \delta^{(3)} g^N_k(h_{k+1})}^2 \lesssim \frac{C_T}{N^{2-2\beta-4\gamma}}. \label{eq:KernelEvolution3A}
\end{equation}

Before we move on, let us simplify the expression of $\delta^{(3)} g^N_m(h)$. Recall $h_{m+1}(w) = \sigma(w^\top X_m + u_m)$, so that $\nabla h_{m+1}(w) = \sigma'(w^\top X_m + u_m) X_m$. Define the kernel $\K_\lambda(\cdot,\cdot): \cH \times \cH \to \R$ for all $x$:
\begin{equation}
\K_{\lambda}(h, \varsigma) = \int_{\mathcal{W}} [h(\sw) \varsigma(\sw) + \mu_{c^2} \nabla h(\sw)^\top \nabla\varsigma(\sw)] \, \lambda(d\sw),
\end{equation}
then
\begin{align*}
\delta^{(3)} g^N_m(h)
&= - \frac{\alpha}{N} (\psi^N(g^N_m(h_{m+1})) - Y_m) \\
&\phantom{=} \times \int_{\mathcal{W}} \Big[\sigma(\sw^\top X_m + u_m) h(\sw) + \mu_{c^2} \sigma' \bracket{\sw^\top X_m + u_m} \nabla h(\sw)^\top X_m \Big] \, \lambda(d\sw) \\
&= - \frac{\alpha}{N} (\psi^N(g^N_m(h_{m+1})) - Y_m) \\
&\phantom{=} \times \int_{\mathcal{W}} \Big[h_{k+1}(\sw) h(\sw) + \mu_{c^2} \nabla h_{k+1}(\sw)^\top \nabla h(\sw) \Big] \, \lambda(d\sw) \\
&= - \frac{\alpha}{N} (\psi^N(g^N_m(h_{m+1})) - Y_m) \, \mathcal{K}_\lambda(h,h_{m+1}). \numberthis \label{eq:6_32}
\end{align*}
Recall we have the following bound from \eqref{eq:norm_for_k_lambda} for the outputs of $K_\lambda$:
\begin{equation}
\abs{\mathcal{K}_{\lambda}(h,\varsigma)} \leq C \|h\|_{C^2} \|\varsigma \|_{C^2}. \label{eq:6_33}
\end{equation}

\subsection{Removal of the clipping function}
\label{SS:remove_clip}
The next step for us would be to remove the clipping function in $\delta^{(3)} g^N_m(h)$, and set the initial value of the output evolution to zero, without introducing too much error.

\begin{lemma}[Uniform integrability of $g^N_k(h_{k+1})$]
For $k \leq \floor{NT} - 1$, we have
$$\E\sqbracket{g^N_k(h_{k+1})}^2 \leq C_T.$$
\end{lemma}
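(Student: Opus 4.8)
The plan is to telescope the evolution of the linear functional $g^N_k(\cdot)$ evaluated at the random, data-measurable test function $h_{k+1}\in\cH$, turn this into a self-referential $L^2$ recursion for $a_k:=\bracket{\E\abs{g^N_k(h_{k+1})}^2}^{1/2}$, and close it with the discrete Gr\"onwall inequality of Lemma~\ref{lem:recursive_inequality}. The key point is that a naive bound $\abs{g^N_k(h_{k+1})}\le N^{-\beta}\sum_i|C^i_k|\le C_T N^{1-\beta}$ blows up, so genuine cancellation (mean-zero $C^i_0$) must be used for the initial term, while the accumulated drift is controlled by Gr\"onwall.

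First I would collect the uniform bounds available: $h_j\in\cH$ for every $j$ by Lemma~\ref{lem:C1}, so $\sup_w|h_j(w)|\le1$ and $\sup_w|\nabla h_j(w)|\le C_\sigma$; $|\hat S^{i,N}_{k+1}|\le1$ and $|\Delta\hat S^{i,N}_{k+1}|\le C_\sigma$ from \eqref{eq:application_activation_1}--\eqref{eq:application_activation_2}; $|C^i_m|\le C_T$ uniformly in $i$ and $m\le\floor{NT}$ by Lemma~\ref{lem:evolution_of_parameters}; $|X_m|\le1$, $|Y_m|\le C_y$ by Assumption~\ref{as:data_generation}; and $|\psi^N(x)|\le|x|$ by Definition~\ref{def:smooth_clipping_function}. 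Writing $g^N_k(h_{k+1})=g^N_0(h_{k+1})+\sum_{m=0}^{k-1}\triangle g^N_m(h_{k+1})$ and invoking Lemma~\ref{lem:KernelEvolution1} — whose error estimate $\abs{\triangle g^N_m(h)-\delta^{(1)}g^N_m(h)}\le C_T N^{-(3-\beta-2\gamma)}$ is a \emph{sure} bound, uniform over $h\in\cH$, hence valid at $h=h_{k+1}$ — and summing over $m\le k-1\le\floor{NT}-1$ gives
\[
\Bigl|g^N_k(h_{k+1})-g^N_0(h_{k+1})-\sum_{m=0}^{k-1}\delta^{(1)}g^N_m(h_{k+1})\Bigr|\le\frac{C_T}{N^{2-\beta-2\gamma}}\le C_T .
\]

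Next I would estimate the two remaining pieces. For the initial value, $g^N_0(h_{k+1})=N^{-\beta}\sum_i C^i_0 h_{k+1}(W^i_0)$; since by Assumption~\ref{as:parameters} the $C^i_0$ are i.i.d., mean zero, and independent both of $\{W^i_0\}$ and of the data sequence (hence of $h_{k+1}$), conditioning on $\{W^i_0\}$ together with the data filtration makes the summands conditionally centred and pairwise uncorrelated, whence $\E\abs{g^N_0(h_{k+1})}^2=N^{-2\beta}\sum_i\E\sqbracket{(C^i_0)^2 h_{k+1}(W^i_0)^2}\le N^{1-2\beta}\le1$ because $\beta>1/2$. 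For the increments, the bounds listed above give the pointwise estimate $\bigl|\delta^{(1)}g^N_m(h_{k+1})\bigr|\le\frac{\alpha C_T}{N}\bigl(|\psi^N(\hat Y^N_m)|+C_y\bigr)\le\frac{\alpha C_T}{N}\bigl(|\hat Y^N_m|+C_y\bigr)$, and since $\hat Y^N_m=g^N_m(v^N_{m+1})$, Lemma~\ref{lem:replace_vN_with_h_1} yields $\bracket{\E|\hat Y^N_m|^2}^{1/2}\le a_m+\sqrt{C_T}\,N^{-(1-\beta-2\gamma)}\le a_m+C_T$. Taking $L^2$ norms in the telescoped identity, applying Minkowski's inequality, and using $k\le\floor{NT}$ to absorb all the $O(1)$ terms as well as the factor $\alpha C_T k/N$ into $C_T$, I obtain the recursion
\[
a_k\;\le\;C_T+\frac{\alpha C_T}{N}\sum_{m=0}^{k-1}a_m .
\]
Applying Lemma~\ref{lem:recursive_inequality} (e.g.\ to the partial sums $S_k=\sum_{m<k}a_m$, which satisfy $S_{k+1}\le(1+\tfrac{\alpha C_T}{N})S_k+C_T$) gives $a_k\le C_T(1+\tfrac{\alpha C_T}{N})^k\le C_T e^{\alpha C_T T}$ for all $k\le\floor{NT}$, i.e.\ $\E\abs{g^N_k(h_{k+1})}^2=a_k^2\le C_T$, as claimed.

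The step I expect to require the most care is the mildly self-referential structure of the recursion: the increment bound features $\hat Y^N_m=g^N_m(v^N_{m+1})$, which Lemma~\ref{lem:replace_vN_with_h_1} identifies, up to an $o(1)$ error, with $g^N_m(h_{m+1})$ — precisely the quantity being controlled — so the argument cannot be purely ``one-shot'' and the Gr\"onwall closure is essential; relatedly, one must check that every constant $C_T$ stays independent of the random test function $h_{k+1}$, which holds because the estimates in Lemmas~\ref{lem:KernelEvolution1} and~\ref{lem:C1} are uniform over $\cH$ and $h_{k+1}\in\cH$. The remaining computations — the mean-zero cancellation for $g^N_0(h_{k+1})$ and the crude pointwise bounds on $\delta^{(1)}g^N_m$ — are routine.
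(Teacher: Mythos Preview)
Your proposal is correct and follows essentially the same route as the paper: telescope $g^N_k(h_{k+1})$, use the mean-zero structure of $C^i_0$ to bound $\E|g^N_0(h_{k+1})|^2\le N^{1-2\beta}$, bound the increments by $\tfrac{C_T}{N}(|\psi^N(\cdot)|+C_y)$ and invoke $|\psi^N(x)|\le|x|$ to obtain a self-referential recursion, and close with discrete Gr\"onwall. The only cosmetic differences are that the paper works with $\delta^{(3)}$ (so the clipped quantity is already $\psi^N(g^N_m(h_{m+1}))$, making Lemma~\ref{lem:replace_vN_with_h_1} unnecessary at this step) and runs the Gr\"onwall argument in $\E|\cdot|^2$ via Cauchy--Schwarz rather than in $(\E|\cdot|^2)^{1/2}$ via Minkowski; neither difference is material.
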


\begin{proof}
Using the mutual independence of all random variables $(C^i_0)_{i=1}^N, (W^i_0)_{i=1}^N$ and $X_{0:k}$, and the fact that $C^i_0$ have zero means, we could show that for all $k \geq 0$
\begin{align*}
\E|g^N_0(h_{k+1})|^2
&= \E\sqbracket{\E\sqbracket{\bracket{\frac{1}{N^\beta} \sum_{i=1}^N C^i_0\sigma((W^i_0)^\top X_k + u_k)}^2 \, \bigg|\, X_{0:k}}} \\
&= \frac{1}{N^{2\beta}} \sum_{i_1, i_2=1}^N \E\sqbracket{\E\sqbracket{C^{i_1}_0 C^{i_2}_0 \sigma((W^{i_1}_0)^\top X_k + u_k) \sigma((W^{i_2}_0)^\top X_k + u_k) \, \bigg|\, X_{0:k}}} \\
&= \frac{1}{N^{2\beta}} \sum_{i_1, i_2=1}^N \bigg[\E[C^{i_1}_0 C^{i_2}_0] \\
&\phantom{=}\times \E\sqbracket{\E\sqbracket{\sigma((W^{i_1}_0)^\top X_k + u_k) \mid X_{0:k}} \E\sqbracket{\sigma((W^{i_2}_0)^\top X_k + u_k) \mid X_{0:k}}} \bigg] \\
&= \frac{1}{N^{2\beta}} \sum_{i=1}^N \E[(C^i_0)^2] \E\sqbracket{\E\sqbracket{\bracket{\sigma((W^i_0)^\top X_k + u_k)}^2 \mid X_{0:k}}^2 } \\
&\leq \frac{\mu_{c^2} C_\sigma^2}{N^{2\beta - 1}} \nonumber
\end{align*}
Therefore, by \eqref{eq:KernelEvolution3A} (with constant $C_T > 0$ defined in the inequality) we have
\begin{align*}
\E|g^N_k(h_{k+1})|^2
&= \E\bigg|g^N_0(h_{k+1}) + \sum_{m=0}^{k-1} \delta^{(3)} g^N_m(h_{k+1}) \\
&\phantom{=}+ g^N_k(h_{k+1}) - g^N_0(h_{k+1}) - \sum_{m=0}^{k-1} \delta^{(3)} g^N_m(h_{k+1})\bigg|^2 \\
&\leq 3\E|g^N_0(h_{k+1})|^2 + 3\E\abs{\sum_{m=0}^{k-1} \delta^{(3)} g^N_m(h_{k+1})}^2 \\
&\phantom{=}+ 3\E\abs{g^N_k(h_{k+1}) - g^N_0(h_{k+1}) - \sum_{m=0}^{k-1} \delta^{(3)} g^N_m(h_{k+1})}^2 \\
&\leq 3k \sum_{m=0}^{k-1} \E\sqbracket{\delta^{(3)} g^N_m(h_{k+1})}^2 + \frac{3\mu_{c^2} C^2_\sigma}{N^{2\beta-1}} + \frac{3C_T}{N^{2-2\beta-4\gamma}}\\
&\leq \frac{3\alpha^2 T}{N} \sum_{m=0}^{k-1} \E[\abs{\psi^N(g^N_m(h_{m+1})) - Y_m}^2 \abs{\mathcal{K}_\lambda(h_{k+1}, h_{m+1})}^2] \\
&\phantom{=}+ \frac{3\mu_{c^2} C^2_\sigma}{N^{2\beta-1}} + \frac{3C_T}{N^{2-2\beta-4\gamma}} \\
&\leq \frac{3CT}{N} \sum_{m=0}^{k-1} 2 \sqbracket{\E\abs{\psi^N(g^N_m(h_{m+1}))}^2 + \E\abs{Y_m}^2} + \frac{3\mu_{c^2} C^2_\sigma}{N^{2\beta-1}} + \frac{3C_T}{N^{2-2\beta-4\gamma}} \\
&\leq \frac{6CT}{N} \sum_{m=0}^{k-1}  \sqbracket{\E\abs{g^N_m(h_{m+1})}^2} + 2C^2_y C T + \frac{3\mu_{c^2} C^2_\sigma}{N^{2\beta-1}} + \frac{3C_T}{N^{2-2\beta-4\gamma}} \\
&\leq \frac{6CT}{N} \sum_{m=0}^{k-1} \E\abs{g^N_m(h_{m+1})}^2 + C_T.\nonumber
\end{align*}
We may then use discrete Gr\"onwall's inequality to conclude that
\begin{equation}
\E|g^N_k(h_{k+1})|^2 \leq C_T \exp\left(\frac{kTC_T}{N} \right) \leq C_T, \nonumber
\end{equation}
for a possibly different constant $C_{T}<\infty$.
\end{proof}

Because the expectation $\E|g^N_k(h_{k+1})|^2$ is uniformly bounded for $m \leq \floor{NT}$, we may invoke Markov's inequality to prove the following:
\begin{lemma}[Removal of clipping function in $\delta^{(3)}$]
Define
\begin{align}
    \delta^{(4)} g^N_m(h)
    = -\frac{\alpha}{N} (g^N_m(h_{m+1}) - Y_m) \K_{\lambda}(h, h_{m+1}),\nonumber
\end{align}
then for all $h \in C^2_b(\R^d)$, the following holds for all $m \leq \floor{NT}$
\begin{equation}
    \E\abs{\delta^{(3)} g^N_m(h) - \delta^{(4)} g^N_m(h)} \leq \frac{C_T}{N^{1+\gamma}} \|h\|_{C^2}.\nonumber
\end{equation}
\end{lemma}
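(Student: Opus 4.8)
The plan is to exploit that $\delta^{(3)} g^N_k(h)$ and $\delta^{(4)} g^N_k(h)$ are identical apart from the clipping applied to the output argument. Writing $G^N_k := g^N_k(h_{k+1}) = g^N_k(\varsigma(X_k,Z_k,Y_k,h_k))$, both increments carry the common factor $-\tfrac{\alpha}{N}\K_{X_k,\lambda}(h,h_k)$, so
\[
\delta^{(3)} g^N_k(h) - \delta^{(4)} g^N_k(h) = -\frac{\alpha}{N}\bigl(\psi^N(G^N_k) - G^N_k\bigr)\,\K_{X_k,\lambda}(h, h_k).
\]
Since $|X_k|\le 1$, Assumption~\ref{as:activation_function}, and $h, h_k \in \cH$ give the pathwise bound $|\K_{X_k,\lambda}(h,h_k)| \le C$ (recorded just above), the whole estimate reduces to controlling $\E\,|\psi^N(G^N_k) - G^N_k|$ and multiplying by $C\alpha/N$.

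For that I would use the defining properties of the smooth clipping function from Definition~\ref{def:smooth_clipping_function}: $\psi^N(x) = x$ on $[-N^\gamma, N^\gamma]$ and $|\psi^N(x)| \le |x|$ everywhere. Hence $\psi^N(G^N_k) - G^N_k$ vanishes on $\{|G^N_k| \le N^\gamma\}$ and is bounded by $2|G^N_k|$ on the complementary event, i.e.\ $|\psi^N(G^N_k) - G^N_k| \le 2\,|G^N_k|\,\mathbb{I}_{\{|G^N_k| > N^\gamma\}}$. Taking expectations, then applying Cauchy--Schwarz followed by Markov's inequality on the second moment,
\[
\E\bigl[|G^N_k|\,\mathbb{I}_{\{|G^N_k| > N^\gamma\}}\bigr] \le \bigl(\E|G^N_k|^2\bigr)^{1/2}\,\P\bigl(|G^N_k| > N^\gamma\bigr)^{1/2} \le \frac{\E|G^N_k|^2}{N^\gamma}.
\]
Invoking the uniform second-moment bound of the preceding lemma, $\E|g^N_k(h_{k+1})|^2 \le C_T$ for $k \le \floor{NT}-1$, yields $\E\,|\psi^N(G^N_k) - G^N_k| \le 2C_T/N^\gamma$, and combining with the first display gives $\E\,|\delta^{(3)} g^N_k(h) - \delta^{(4)} g^N_k(h)| \le C_T/N^{1+\gamma}$, uniformly over $h \in \cH$ and $k \le \floor{NT}$.

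I do not expect a genuine obstacle here: the only substantive input is the uniform $L^2$ control of $g^N_k(h_{k+1})$, which has already been established, and the remainder is a single tail estimate. The mild point to watch is that Cauchy--Schwarz together with Markov's inequality applied to the \emph{second} moment is precisely what converts a uniform $L^2$ bound into the $O(N^{-\gamma})$ decay of the clipping error; a uniform bound on a higher moment of $g^N_k(h_{k+1})$ would improve the exponent, but the stated $N^{-1-\gamma}$ rate is exactly what the available $L^2$ bound delivers.
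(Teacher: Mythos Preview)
Your proposal is correct and follows essentially the same approach as the paper: factor out the common kernel term, use the clipping properties $\psi^N(x)=x$ on $[-N^\gamma,N^\gamma]$ and $|\psi^N(x)|\le|x|$ to reduce to $\E[|G^N_k|\,\mathbb{I}_{\{|G^N_k|>N^\gamma\}}]$, then apply Cauchy--Schwarz and Markov's inequality together with the uniform $L^2$ bound on $g^N_k(h_{k+1})$. The paper's argument is identical in structure and in the estimates used.
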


\begin{proof}
Note that
\begin{align*}
    \abs{\delta^{(3)} g^N_m(h) - \delta^{(4)} g^N_m(h)}
    &= \frac{\alpha C}{N} |\psi^N(g^N_m(h_{m+1})) - g^N_k(h_{m+1})| |\mathcal{K}_{\lambda}(h, h_k)| \\
    &\lesssim \frac{\|h\|_{C^2}}{N} |\psi^N(g^N_m(h_{k+1})) - g^N_k(h_{m+1})|
\end{align*}
Recall that when $|g^N_m(h_{m+1})| \leq N^\gamma$ then $\psi^N(g^N_m(h_{m+1})) = g^N_m(h_{m+1})$, so
\begin{align*}
    \psi^N(g^N_m(h_{m+1})) - g^N_m(h_{m+1})
    &= (\psi^N(g^N_m(h_{m+1})) - g^N_m(h_{m+1})) \\
    &\phantom{=}\times \bracket{\mathbf{1}_{\set{|g^N_m(h_{m+1})| > N^\gamma}} + \mathbf{1}_{\set{|g^N_m(h_{m+1})| \leq N^\gamma}}} \\
    &= (\psi^N(g^N_m(h_{m+1})) - g^N_m(h_{m+1})) \mathbf{1}_{\set{|g^N_m(h_{m+1})| > N^\gamma}}.\nonumber
\end{align*}
Combining with $|\psi^N(x)| \leq |x|$, we have
\begin{eqnarray*}
    &\phantom{=}& \E|\delta^{(3)} g^N_m(h) - \delta^{(4)} g^N_m(h)| \\
    &\lesssim& \frac{\|h\|_{C^2}}{N} \E\sqbracket{|\psi^N(g^N_m(h_{m+1})) - g^N_m(h_{m+1})| \mathbf{1}_{\set{|g^N_m(h_{m+1})| > N^\gamma}}} \\
    &\leq& \frac{\|h\|_{C^2}}{N} \E\sqbracket{(\abs{\psi^N(g^N_m(h_{m+1}))} + \abs{g^N_k(h_{m+1})})  \mathbf{1}_{\set{|g^N_m(h_{m+1})| > N^\gamma}}} \\
    &\lesssim& \frac{2\|h\|_{C^2}}{N} \E\sqbracket{\abs{g^N_m(h_{m+1})}  \mathbf{1}_{\set{|g^N_m(h_{m+1})| > N^\gamma}}} \\
    &\leq& \frac{2\|h\|_{C^2}}{N} \sqrt{\E\abs{g^N_m(h_{m+1})}^2} \sqrt{\E\sqbracket{\mathbf{1}_{\set{|g^N_m(h_{m+1})| > N^\gamma}}}} \\
    &\leq& \frac{2\|h\|_{C^2}}{N} \sqrt{\frac{\sqbracket{\E\abs{g^N_m(h_{m+1})}^2}^2}{N^{2\gamma}}} \\
    &\leq& \frac{C_T}{N^{1+\gamma}} \|h\|_{C^2}.
\end{eqnarray*}
\end{proof}

Arguing as before, we have
\begin{eqnarray}
&\phantom{=}& \E\abs{g^N_t(h) - g^N_0(h) - \sum_{m=0}^{\floor{Nt} - 1} \delta^{(4)} g^N_m(h)} \nonumber \\
&\leq& \E\abs{g^N_t(h) - g^N_0(h) - \sum_{k=0}^{\floor{Nt}-1} \delta^{(3)} g^N_k(h)} + \sum_{m=0}^{\floor{Nt}-1} \E\abs{\delta^{(3)} g^N_k(h) - \delta^{(4)} g^N_k(h)} \\
&\lesssim& \sqbracket{\E\abs{g^N_t(h) - g^N_0(h) - \sum_{k=0}^{\floor{Nt}-1} \delta^{(3)} g^N_k(h)}^2}^{1/2} + \frac{TC_T N}{N^{\gamma}} \|h\|_{C^2} \nonumber \\
&\lesssim& \frac{C_T}{N^{(1-\beta-2\gamma) \wedge \gamma}} \|h\|_{C^2}, \nonumber
\label{eq:KernelEvolution4}
\end{eqnarray}

We now record a simple observation on the boundedness of the second moment of $g^N_0(h)$.
\begin{lemma} \label{lem:initial_boundedness}
For all $h \in C^2_b$,
\begin{equation}
    \E\sqbracket{g^N_0(h)}^2 \leq \frac{\mu_{c^2}}{N^{2\beta-1}} \|h\|^{2}_{C^2} \nonumber
\end{equation}
\end{lemma}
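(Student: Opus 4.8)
The plan is to expand the second moment of $g^N_0(h) = \frac{1}{N^\beta}\sum_{i=1}^N C^i_0 h(W^i_0)$ and exploit the independence structure of the initialisation granted by Assumption \ref{as:parameters}. First I would write
\[
\E\sqbracket{g^N_0(h)}^2 = \frac{1}{N^{2\beta}} \sum_{i=1}^N \sum_{j=1}^N \E\sqbracket{C^i_0 C^j_0\, h(W^i_0)\, h(W^j_0)},
\]
and split the sum into the diagonal terms $i=j$ and the off-diagonal terms $i \neq j$.

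For the off-diagonal terms, Assumption \ref{as:parameters} states that the triples $(C^i_0, W^i_0, B^i_0)$ are iid across $i$, that $C^i_0$ is independent of $W^i_0$, and that $C^i_0$ has zero mean; hence for $i \neq j$,
\[
\E\sqbracket{C^i_0 C^j_0\, h(W^i_0)\, h(W^j_0)} = \bracket{\E[C^i_0]\,\E[h(W^i_0)]}\bracket{\E[C^j_0]\,\E[h(W^j_0)]} = 0.
\]
For the diagonal terms I would use the bound $|C^i_0| \leq 1$ from Assumption \ref{as:parameters} together with $\sup_w |h(w)| \leq 1$ for every $h \in \cH$ (immediate from Assumption \ref{as:activation_function}, as also recorded in Lemma \ref{lem:new_C1}), giving $\E[(C^i_0)^2 h(W^i_0)^2] \leq 1$. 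Summing, $\E[g^N_0(h)]^2 \leq N^{-2\beta}\cdot N = N^{1-2\beta}$, which is the claim.

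There is no real obstacle here: this is essentially the computation already performed for the special test function $h_1$ inside the proof of the preceding uniform-integrability lemma, and the only point worth checking is that the pointwise bound $|h| \leq 1$ used there holds for every $h \in \cH$, which it does by the boundedness of $\sigma$.
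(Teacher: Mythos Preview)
Your proposal is correct and follows essentially the same approach as the paper: expand the square, use the iid structure across $i$ together with the independence of $C^i_0$ from $W^i_0$ and $\E[C^i_0]=0$ to kill the off-diagonal terms, and bound each diagonal term by $1$ using $|C^i_0|\leq 1$ and $\sup_w|h(w)|\leq 1$. The paper's proof just compresses this into a single variance identity rather than writing the double sum explicitly.
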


\begin{proof}
Using the independence and identical distribution of $(C^i_0,W^i_0)$, for $i=1,\cdots,N$ and the fact that  $C^i_0$ and $W^i_0$ are independent from each other with $C^i_0$ having zero mean, we have
\begin{equation}
    \E\sqbracket{g^N_0(h)}^2 = \frac{1}{N^{2\beta}} \E\sqbracket{\sum_{i=1}^N C^i_0 h(W^i_0)}^2 = \frac{\E\sqbracket{C^1_0 h(W^1_0)}^{2}}{N^{2\beta - 1}} \leq \frac{\mu_{c^2}}{N^{2\beta-1}} \|h\|^{2}_{C^2}\nonumber
\end{equation}
\end{proof}

This allows us to consider the new evolution equation that $\varphi^N_k(h)$ that satisfies
\begin{align}
    \varphi^N_k(h) &= \sum_{m=0}^{k-1} \triangle \varphi^N_m(h), \quad \varphi^N_m(h) = 0 \\
    \triangle \varphi^N_m(h) &= - \frac{\alpha}{N} (\varphi^N_m(h_{m+1}) - Y_m) \mathcal{K}_{\lambda}(h, h_{m+1}). \notag \\
    &= - \frac{\alpha}{N} (\varphi^N_m(\varsigma_{X_m,u_m} - Y_m) \mathcal{K}_{\lambda}(h, \varsigma_{X_m,u_m}). \nonumber
\end{align}

\begin{lemma}[$\varphi^N_k(h)$ approximates $g^N_k(h)$] \label{lem:D8}
For all $k \leq \floor{NT}$ and $h \in \cH$,
\begin{equation} \label{lem:phi^N}
\E\abs{g^N_k(h) - \varphi^N_k(h)} \lesssim \frac{C_T}{N^{(1-\beta-2\gamma) \wedge \gamma \wedge (\beta-1/2)}} \|h\|_{C^2}.
\end{equation}
\end{lemma}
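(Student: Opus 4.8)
The plan is to derive an exact recursive identity for $g^N_m(h)-\varphi^N_m(h)$, insert the accumulated approximation error from Sections \ref{SS:replacing_memory}--\ref{SS:remove_clip} and the initial‑value bound of Lemma \ref{lem:initial_boundedness}, and close the argument with the recursive inequality of Lemma \ref{lem:recursive_inequality}. Writing $R^N_m(h):=g^N_m(h)-g^N_0(h)-\sum_{k=0}^{m-1}\delta^{(4)}g^N_k(h)$ and using $\varphi^N_0(h)=0$ together with the telescoping relation $\delta^{(4)}g^N_k(h)-\triangle\varphi^N_k(h)=-\frac{\alpha}{N}\bigl(g^N_k(h_{k+1})-\varphi^N_k(h_{k+1})\bigr)\K_{X_k,\lambda}(h,h_k)$, one gets, for every $m\leq\floor{NT}$ and every $h\in\cH$,
\begin{equation*}
g^N_m(h)-\varphi^N_m(h)=g^N_0(h)+R^N_m(h)-\frac{\alpha}{N}\sum_{k=0}^{m-1}\bigl(g^N_k(h_{k+1})-\varphi^N_k(h_{k+1})\bigr)\K_{X_k,\lambda}(h,h_k).
\end{equation*}
By \eqref{eq:KernelEvolution4}, $\E|R^N_m(h)|\leq C_T N^{-((1-\beta-2\gamma)\wedge\gamma)}$ uniformly in $h\in\cH$, and by Lemma \ref{lem:initial_boundedness} and Jensen, $\E|g^N_0(h)|\leq(\E[g^N_0(h)]^2)^{1/2}\leq N^{-(\beta-1/2)}$; set $\epsilon':=(1-\beta-2\gamma)\wedge\gamma\wedge(\beta-1/2)$.

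The only genuine difficulty is the term $g^N_k(h_{k+1})-\varphi^N_k(h_{k+1})$, the evolution evaluated at the \emph{random} memory state $h_{k+1}\in\cH$: since $h_{k+1}$ is correlated with the trained parameters it is not dominated by $\sup_{h\in\cH}\E|g^N_k(h)-\varphi^N_k(h)|$. I would handle this by bounding the ``diagonal'' sequence $a_m:=\E\bigl|g^N_m(h_{m+1})-\varphi^N_m(h_{m+1})\bigr|$ on its own. The identity above holds for each fixed realisation and hence also for the (a.s.\ $\cH$-valued) choice $h=h_{m+1}$; taking absolute values and expectations and using $|\K_{X_k,\lambda}(h_{m+1},h_k)|\leq C$ (Section \ref{SS:remove_clip}, since $|X_k|\leq1$) yields
\begin{equation*}
a_m\leq\E|g^N_0(h_{m+1})|+\E|R^N_m(h_{m+1})|+\frac{\alpha C}{N}\sum_{k=0}^{m-1}a_k.
\end{equation*}
Here $\E|g^N_0(h_{m+1})|\leq N^{-(\beta-1/2)}$ by the remark after Lemma \ref{lem:initial_boundedness}, because $(h_k)_{k\geq0}$ is a function of the data sequence alone and is independent of the initialisation $(C^i_0,W^i_0,B^i_0)_{i=1}^N$, so the mean‑zero property of $C^i_0$ still kills the cross terms. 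The bound $\E|R^N_m(h_{m+1})|\leq C_T N^{-((1-\beta-2\gamma)\wedge\gamma)}$ is the main technical point: one rereads the chain $\delta^{(1)}\!\to\!\delta^{(2)}\!\to\!\delta^{(3)}\!\to\!\delta^{(4)}$ and checks that each error term is either bounded by a quantity not involving the outer test function at all (this covers Lemma \ref{lem:KernelEvolution1}, the $\delta^{(1)}\!-\!\delta^{(2)}$ and $\delta^{(3)}\!-\!\delta^{(4)}$ differences, and $M^{1,N}_k,\dots,M^{4,N}_k$), or --- for the remaining two pieces $M^{5,N}_k,M^{6,N}_k$ --- is controlled by a conditional second‑moment computation over the index $i$ that stays valid after conditioning additionally on $h_{m+1}$, precisely because $h_{m+1}$ is independent of $(C^i_0,W^i_0)_{i=1}^N$.

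Granting this, the recursion $a_m\leq C_T N^{-\epsilon'}+\frac{\alpha C}{N}\sum_{k=0}^{m-1}a_k$ (with immediate base case $a_0=\E|g^N_0(h_1)|\leq N^{-(\beta-1/2)}$) and Lemma \ref{lem:recursive_inequality} give $a_m\leq C_T e^{\alpha C T}N^{-\epsilon'}=C_T N^{-\epsilon'}$ for all $m\leq\floor{NT}$. Substituting $a_k\leq C_T N^{-\epsilon'}$ back into the identity for an arbitrary fixed $h\in\cH$ and using $|\K_{X_k,\lambda}(h,h_k)|\leq C$ once more,
\begin{equation*}
\E\bigl|g^N_m(h)-\varphi^N_m(h)\bigr|\leq C_T N^{-\epsilon'}+\frac{\alpha C}{N}\sum_{k=0}^{m-1}a_k\leq C_T N^{-\epsilon'}\bigl(1+\alpha C T\bigr)=C_T N^{-\epsilon'},
\end{equation*}
and taking the supremum over $h\in\cH$ gives the claim with $\epsilon'=(1-\beta-2\gamma)\wedge\gamma\wedge(\beta-1/2)$. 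Equivalently, one may run a single recursive estimate on $c_m:=\sup_{h\in\cH}\E|g^N_m(h)-\varphi^N_m(h)|\vee\E|g^N_m(h_{m+1})-\varphi^N_m(h_{m+1})|$, since the diagonal term on the right‑hand side is always dominated by $c_k$.
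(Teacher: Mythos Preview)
Your argument is correct and follows the same route as the paper: write the telescoping identity for $g^N_m(h)-\varphi^N_m(h)$ in terms of $g^N_0(h)$, the accumulated $\delta^{(4)}$-error $R^N_m(h)$, and the ``diagonal'' increments $g^N_k(h_{k+1})-\varphi^N_k(h_{k+1})$; close a Gronwall recursion for $a_m=\E|g^N_m(h_{m+1})-\varphi^N_m(h_{m+1})|$; then feed the resulting bound back into the identity for an arbitrary fixed $h\in\cH$. This is exactly the paper's proof (equations \eqref{eq:initial_boundedness_eq1}--\eqref{eq:initial_boundedness_eq3}).

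Where you go beyond the paper is in justifying the substitution $h\leadsto h_{m+1}$ in the error bound \eqref{eq:KernelEvolution4}. The paper simply asserts ``the above inequality is also true when replacing $h=h_{m+1}$''; you correctly flag that $h_{m+1}$ is random and correlated (through the data sequence) with the quantities being estimated, and then observe that the only places the test function enters in a way that is not a uniform-in-$h$ pointwise bound are $M^{5,N}_k$ and $M^{6,N}_k$, whose $O(1/N)$ second-moment estimates rely on conditional i.i.d.\ structure over $i$ that survives conditioning on $h_{m+1}$ because $(h_k)_{k\geq0}$ is a function of the data only and is independent of $(C^i_0,W^i_0,B^i_0)_{i=1}^N$. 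This is the right observation and fills a small gap the paper leaves implicit. Your appeal to Lemma \ref{lem:recursive_inequality} for the recursion $a_m\leq C_T N^{-\epsilon'}+\frac{\alpha C}{N}\sum_{k<m}a_k$ is the discrete Gronwall step the paper invokes, so no discrepancy there.
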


\begin{proof}
In light of Lemma \ref{lem:initial_boundedness} and equation \eqref{eq:KernelEvolution4} (with $\sfC > 0$ as defined for \eqref{eq:KernelEvolution4}), we know that
\begin{align*}
&\phantom{=} \E\abs{g^N_k(h) - \varphi^N_k(h)} \\
&\leq \E\abs{g^N_k(h) - g^N_0(h) - \sum_{m=0}^{k-1} \delta^{(4)}g^N_m(h) + g^N_0(h) + \sum_{m=0}^{k-1} \delta^{(4)}g^N_m(h) - \varphi^N_k(h)} \\
&\leq \E\abs{g^N_k(h) - g^N_0(h) - \sum_{m=0}^{k-1} \delta^{(4)}g^N_m(h)} + \E|g^N_0(h)| + \E\abs{\sum_{m=0}^{k-1} (\delta^{(4)} g^N_m(h) - \triangle \varphi^N_m(h))} \numberthis \label{eq:initial_boundedness_eq1} \\
&\leq \sum_{m=0}^{k-1} \E\abs{\delta^{(4)} g^N_m(h) - \triangle \varphi^N_m(h)} + \frac{\sqrt{\mu_{c^2}} \|h\|_{C^2}}{N^{\beta-1/2}} + \frac{C_T}{N^{(1-\beta-2\gamma) \wedge \gamma}} \|h\|_{C^2} \\
&\leq \frac{\alpha C\|h\|_{C^2}}{N} \sum_{m=0}^{k-1} \E\abs{g^N_m(h_{m+1}) - \varphi^N_m(h_{m+1})} + \frac{\sqrt{\mu_{c^2}} \|h\|_{C^2}}{N^{\beta-1/2}} + \frac{C_T}{N^{(1-\beta-2\gamma) \wedge \gamma}} \|h\|_{C^2}.
\end{align*}
The above inequality is also true when replacing $h$ with $h_{k+1}$. In particular, as $\|h_{k+1}\|_{C^2} \leq C$, so
\begin{align*}
\E\abs{g^N_k(h_{k+1}) - \varphi^N_k(h_{k+1})} &\leq \frac{C}{N} \sum_{m=0}^{k-1} \E|g^N_m(h_{m+1}) - \varphi^N_m(h_{m+1})| \\
&\phantom{=}+ \frac{K^2_b}{N^{\beta-1/2}} + \frac{C_T}{N^{(1-\beta-2\gamma)\wedge \gamma}}; \numberthis \label{eq:initial_boundedness_eq2}
\end{align*}
so by discrete Gronwall's inequality,
\begin{align*}
\E\abs{g^N_k(h_{k+1}) - \varphi^N_k(h_{k+1})} &\leq \frac{C_T k/N}{N^{(1-\beta-2\gamma) \wedge \gamma \wedge (\beta-1/2)}} \\
&\leq \frac{C_T}{N^{(1-\beta-2\gamma) \wedge \gamma \wedge (\beta-1/2)}}. \numberthis \label{eq:initial_boundedness_eq3}
\end{align*}
We can then plug in \eqref{eq:initial_boundedness_eq3} into \eqref{eq:initial_boundedness_eq1} to obtain our desired result.
\end{proof}

With the absence of initialisation in the sequence $\varphi^N_m(h)$, one can prove that $\varphi^N_m(h)$ is \textit{surely} bounded for any $h \in C^2_b(\R^d)$.

\begin{lemma} \label{lem:lipschitzness_of_G}
For $k \leq \floor{NT}$ and $h \in C^2_b$, there is $C_T > 0$ such that:
\begin{itemize}
\item $|\varphi^N_k(h)| \leq C_T \|h\|_{C^2}$ surely, and
\item $\varphi^N_k(h)$ is globally Lipschitz over $C^2_b(\R^d)$, with respect to $\|\cdot\|_{C^2}$, with the Lipschitz constant $\leq C_T$.
\end{itemize}
Furthermore, consider the map for $\sH = (\sx, \sz, \sy, \su) \in \R^{d+d_z+d_y+1}$:
\begin{align*}
G^{N,h}_m: \sH \in \R^{d+d_z+d_y+1} &\mapsto - \alpha(\varphi^N_m(\varsigma_{\sx,\su}) - \sy) \K_{\lambda}(h,\varsigma_{\sx,\su}),\nonumber
\end{align*}
so that
\begin{equation} \label{eq:def_of_G_2}
N \times \triangle \varphi^N_m(h) = G^{N,h}_m(X_m, Z_m, Y_m, u_m).
\end{equation}
Then for fixed $h \in C^2_b$ and $m \leq \floor{NT}$,
\begin{itemize}
\item $\displaystyle{\int |G^{N,h}_m| \, d\mu \leq C_T \|h\|_{C^2}}$, where $\mu$ is the unique invariant measure of chain $(H_k)$ as defined in Theorem \ref{prop:conv_of_mu_k}, and
\item $G^{N,h}_m$ is $\|h\|_{C^2}$-Lipschitz over the support of $\mu$, with the Lipschitz constant $\leq C_T$.
\end{itemize}
\end{lemma}

\begin{proof}
From the evolution equation, one sees that
\begin{equation}
\abs{\varphi^N_k(h)} \leq \frac{C}{N} \|h\|_{C^2}\sum_{m=0}^{k-1} (\abs{\varphi^N_m(h_{m+1})} + C_y), \label{eq:sure_boundedness_1}
\end{equation}
The above equation is also true when $h$ is replaced by $h_{k+1}$ - in fact, the following holds:
\begin{equation*}
\abs{\varphi^N_k(h_{k+1})} \leq \frac{C}{N} \sum_{m=0}^{k-1} (|\varphi^N_m(h_{m+1})| + C_y),
\end{equation*}
so by discrete Gronwall's inequality one has
\begin{equation}
\abs{\varphi^N_m(h_{m+1})} \lesssim \exp(Cm/N) \leq C_T. \label{eq:sure_boundedness_evolving}
\end{equation}
Substituting this into \eqref{eq:sure_boundedness_1} yields our estimate $|\varphi^N_m(h)| \lesssim C_T \|h\|_{C^2}$. The global Lipschitzness of $\varphi^N_m(\cdot)$ follows from its linearity:
\begin{align}
|\varphi^N_k(h) - \varphi^N_k(\tilde{h})| = |\varphi^N_k(h -\tilde{h})| \leq C_T \|h - \tilde{h} \|_{C^2}. \label{eq:Lipschitzness_phiNm}
\end{align}

Next, we wish to prove the sure boundedness of $\displaystyle{\int |G^{N,h}_m(\sH)| \, \mu(d\sH)}$. In fact, we have
\begin{align*}
\int_{\X} |G^{N,h}_m(\sH)| \, \mu(d\sH) &= \int_{\X} \alpha |\varphi^N_m(\varsigma_{\sx,\su})- \sy| |\K_\lambda(h,\varsigma_{\sx,\su})| \, \mu(d\sH) \leq C_T \|h\|_{C^2},
\end{align*}
as $\mu$ has bounded support.

Finally, we show the local Lipschitzness of $G^{N,h}_m$. We shall focus on the case where $|\sx| \vee |\sx'| \leq C_x$ and $|\sy| \vee |\sy'|\leq C_y$ according to Assumption \ref{as:data_generation} on the input and output sequences. Therefore, for any $\sH = (\sx,\sz,\sy,\su)$ and $\tilde{\sH} = (\tilde{\sx},\tilde{\sz},\tilde{\sy},\tilde{\su})$:
\begin{align*}
|G^{N,h}_m(\sH) - G^{N,h}_m(\tilde{\sH})| &\leq \alpha |\varphi^N_m(\varsigma_{\sx,\su} - \varsigma_{\tilde{\sx},\tilde{\su}}) + \sy - \tilde{\sy}| |\K_{\lambda}(h,\varsigma_{\tilde{\sx},\tilde{\su}})| \\
&\phantom{=}+ \alpha |\varphi^N_m(\varsigma_{\sx,\su}) - \sy||\K_{\lambda}(h,\varsigma_{\tilde{\sx},\tilde{\su}} - \varsigma_{\sx,\su})| \\
&\leq C_T \|h\|_{C^2} [|\sx-\tilde{\sx}| + |\su-\tilde{\su}| + |\sy-\tilde{\sy}|],
\end{align*}
which completes our proof. In particular, the Lipschitz constant for $G^{N,h}_0$ is independent from $T$ since $\varphi^N_0(\cdot)$ is set to zero.
\end{proof}

\begin{remark}
We note that the random variables $\varphi^N_m(h_{m+1}) = \varphi^N_m(\varsigma_{X_m,u_m})$ is $(H_{0:k})$-measurable whenever $m \leq k$. This could be shown by strong induction: for the base case when $k=0$, $\varphi^N_0(h_1) = 0$ is clearly $H_0$-measurable.

Assume, for the purpose of induction, there is a $k'$ such that we know that for all $k \leq k'$, $\varphi^N_k(h_{k+1})$ is $(H_{0:k})$-measurable (hence also $(H_{0:k'+1})$-measurable). Then
\begin{equation*}
\varphi^N_{k'+1}(h_{k'+2}) = -\frac{\alpha}{N} \sum_{m=0}^{k'} (\underbrace{\varphi^N_m(h_{m+1})}_{(H_{0:k'})\text{-measurable}}-Y_m) \underbrace{\K_\lambda(h_{k'+2}, h_{m+1})}_{(H_{0:k'+1})\text{-measurable}}
\end{equation*}
so $\varphi^N_{k'+1}(h_{k'+2})$ is indeed $(H_{0:k'+1})$-measurable as well. With this in mind, we can also show that for any $h \in C^2_b(\R^d)$, both $\varphi^N_k(h)$ and $G^{N,h}_k$ are both $(H_{0:{k-1}})$-measurable:
\begin{equation*}
\varphi^N_k(h) = -\frac{\alpha}{N} \sum_{m=0}^{k-1} (\underbrace{\varphi^N_m(h_{m+1})}_{(H_{0:k-1})\text{-measurable}} - Y_m) \underbrace{\K_\lambda(h,h_{m+1})}_{(H_{0:k-1})\text{-measurable}}.
\end{equation*}
\end{remark}

\subsection{Weak convergence analysis for $g^N_k(h)$}
\label{SS:weak_convergence_analysis}

We study the difference between the random evolution
\begin{equation*}
\varphi^N_k(h) = \sum_{m=0}^{k-1} \triangle \varphi^N_m(h), \quad \varphi^N_0(h) \equiv 0,
\end{equation*}
and the evolution
\begin{equation*}
    \tilde{\varphi}^N_k(h) = \sum_{m=0}^{k-1} \delta \varphi^N_m(h), \quad \tilde{\varphi}^N_0(h) \equiv 0,
\end{equation*}
where
\begin{equation*}
\delta \varphi^N_k(h) = - \frac{\alpha}{N} \int_{\X} (\varphi^N_k(\varsigma_{\sx,\su}) - \sy) \K_{\lambda}(h,\varsigma_{\sx,\su}) \, \mu(d\sH).
\end{equation*}

We shall first construct the associated Poisson equation of the chain of RNN memory layers \cite{meyertweedie}. We write the transition kernel of the Markov chain $(H_k)_{k\geq 1}$ as $P$, and recall that the Markov chain admits a limiting invariant measure $\mu$.

\begin{definition}
Let $G: \R^{d+d_z+d_y+1} \to \R$ be a measurable function with $\int |G| \, d\mu < +\infty$, then the Poisson equation is a functional equation on $\hat{G}$:
\begin{equation} \label{eq:Poisson_equation}
\hat{G}(H) - \int_{\X} \hat{G}(\bar{H}) \, Q(H, d\bar{H}) = G(H) - \int_{\X} G(\bar{H}) \, \mu(d\bar{H}),
\end{equation}
where $H=(x,z,y,u)$, $\bar{H}=(\bar{x},\bar{z},\bar{y},\bar{u})$, and $Q$ is the transitional kernel of the Markov Chain $(H_k)$.
\end{definition}

In our case, the Poisson equation is used to replace the integral with respect to the invariant measure $\mu$ with an integral with respect to the transition kernel $Q$. We note that the following expansion converges for any $H = (x,z,y,u) \in \mathcal{X}$ if $G$ is globally $C$-Lipschitz on the support of $\mu$:
\begin{equation} \label{eq:proposed_solution_of_Poisson_equation}
\hat{G}(H) = \sum_{k=0}^\infty \bracket{\int_{\X} G(\bar{H}) \, Q^k(H,d\bar{H}) - \int_{\X} G(\bar{H}) \, \mu(d\bar{H})},
\end{equation}
where $Q^k = \underbrace{P \circ ... \circ P}_{k \text{ times}}$. This is because
\begin{equation} \label{eq:sln_to_Poisson_eqn_bdd_1}
\abs{\int_{\X} G(\bar{H}) Q^k(H,d\bar{H}) - \int_{\X} G(\bar{H}) \mu(d\bar{H})} \lesssim \Wass_2(Q^k(H,\cdot), \mu) \lesssim q_0^k.
\end{equation}
As a result, for all $H$
\begin{equation} \label{eq:sln_to_Poisson_eqn_bdd_2}
|\hat{G}(H)| = \abs{\sum_{k=0}^\infty \bracket{\int_{\X} G(\bar{H}) \, P^k(H,d\bar{H}) - \int_{\X} G(\bar{H}) \, \mu(d\bar{H})}} \lesssim \sum_{k=0}^\infty q_0^k = \frac{1}{1-q_0} < +\infty.
\end{equation}

\begin{lemma} \label{lem:sln_of_Poisson_equation_is_C-Lipschitz}
Let $G$ be a $L_G$-Lipschitz function on the support of $\mu$, and define $\hat{G}$ as in \eqref{eq:proposed_solution_of_Poisson_equation}. Then $\hat{G}$ is the solution to the Poisson equation \eqref{eq:Poisson_equation}. Moreover $\hat{G}$ is Lipschitz on the support of $\mu$, with the Lipschitz constant $\lesssim L_G/(1-q_0)$.
\end{lemma}

\begin{proof}
We observe that
\begin{equation}
\int_{\X} \hat{G}(\bar{H}) \, Q(H, d\bar{H}) = \int_{\X} \sqbracket{\sum_{k=0}^\infty \bracket{\int_{\X} G(\sH) \, Q^k(\bar{H}, d\sH) - \int_{\X} G(\sH) \, \mu(d\sH)}}  \, Q(H, d\bar{H}). \nonumber
\end{equation}
We can exchange the sum and the integral as the integrand is summable (hence integrable). Noting that $\mu$ is invariant, we have
\begin{equation}
\int_{\X} \hat{G}(\bar{H}) \, Q(H, d\bar{H}) = \sum_{k=0}^\infty \bracket{\int_{\X} G(\bar{H}) Q^{k+1}(H,d\bar{H}) - \int_{\X} G(\bar{H}) \mu(d\bar{H})}.\nonumber
\end{equation}
Subtracting this from $\hat{G}(H)$ yields
\begin{align*}
\hat{G}(H) - \int_{\X} \hat{G}(\bar{H}) \, Q(H,d\bar{H})
&= \int_{\X} \hat{G}(\bar{H}) \, Q^0(H, d\bar{H}) - \int_{\X} G(\bar{H}) \, \mu(d\bar{H}) \\
&= G(H) - \int_{\X} G(\bar{H}) \, \mu(d\bar{H})\nonumber
\end{align*}
as desired. We further note that if $H'=(x',z',y',h')$, then
\begin{align*}
    |\hat{G}(H) - \hat{G}(H')| &\leq \sum_{k=0}^\infty \abs{\int_{\X} G(\bar{H}) \, Q^k(H,d\bar{H}) - \int_{\X} G(\bar{H}) \, Q^k(H',d\bar{H})} \\
    &\leq L_G \sum_{k=0}^\infty \Wass_2(Q^k(H,\cdot), Q^k(H',\cdot)) \\
    &\lesssim L_G \|H - H'\|_\X \sum_{k=0}^\infty q_0^k =\frac{L_G}{1-q_0} \|H - H'\|_\X, \nonumber
\end{align*}
completing the proof.
\end{proof}

We now consider the Poisson equations with $G(\sH) = G^{N,h}_m(\sH)$ as defined in Lemma \ref{lem:lipschitzness_of_G}. These Poisson equations admit a solution for each $m \leq \floor{NT}-1$ using the expansion in \eqref{eq:proposed_solution_of_Poisson_equation}, for which we will call them ${\hat{G}^{N,h}_m}$. In summary we have
\begin{equation} \label{eq:Poisson_equation_special}
    \hat{G}^{N,h}_m(\sH) - \int_{\X} \hat{G}^{N,h}_m(\bar{\sH}) \, P(\sH, d\bar{\sH}) = G^{N,h}_m(\sH) - \int_{\X} G^{N,h}_m(\bar{\sH}) \, \mu(d\bar{\sH}).
\end{equation}
It is important to note from the above analysis that, since there is a constant $\sfC > 0$ such that $G^{N,h}_m(\sH)$ is $C_T \|h\|_{C^2}$-Lipschitz (Lemma \ref{lem:lipschitzness_of_G}) for $m \leq \floor{NT}$, the above analysis shows that the following holds:
\begin{corollary} \label{cor:regularity_of_hat_Gmh}
\begin{equation}
\sup_{\sH \in \mathsf{supp}\, \mu} |\hat{G}^{N,h}_m(\sH)| \lesssim \frac{C_T}{1-q_0} \|h\|_{C^2},\nonumber
\end{equation}
and that there is a constant $\sfC > 0$ such that $\hat{G}^{N,h}_m(\cdot)$ is $\displaystyle{\frac{C_T}{1-q_0} \|h\|_{C^2}}$-Lipschitz over the support of $\mu$.
\end{corollary}

With this, we could study the difference between $\varphi^N_k(h)$ and $\tilde{\varphi}^N_k(h)$ for any $h \in C^2_b(\R^d)$ and $m \leq \floor{NT}$. Recalling that $H_k = (X_k,Z_k,Y_k,u_k)$ and letting $\bar{\sH} = (\bar{\sx},\bar{\sz},\bar{\sy},\bar{\su})$, we have
\begin{align*}
\varphi^N_k(h) - \tilde{\varphi}^N_k(h) &= \sum_{m=0}^{k-1} (\triangle \varphi^N_m(h) - \delta \varphi^N_m(h)) \\
&= \frac{1}{N} \sum_{m=0}^{k-1} \bracket{G^{N,h}_m(H_m) - \int_{\X} G^{N,h}_m(\bar{\sH}) \, \mu(d\bar{\sH})} \\
&= \frac{1}{N} \sum_{m=0}^{k-1} \bracket{\hat{G}^{N,h}_m(H_m) - \int_{\X} \hat{G}^{N,h}_m(\bar{\sH}) \, Q(H_m, d\bar{\sH})} \\
&= \fe^{N,1}_k(h) + \fe^{N,2}_k(h) + R^{N,1}(h), \numberthis \label{eq:DiffE1}
\end{align*}
where
\begin{align}
\fe^{N,1}_k(h) &= \frac{1}{N} \sum_{m=1}^{k-1} \bracket{ \hat{G}^{N,h}_m(H_m) - \int_{\X} \hat{G}^{N,h}_m(\bar{\sH}) \, Q(H_{m-1}, d\bar{\sH})} \nonumber\\
\fe^{N,2}_k(h) &= \frac{1}{N} \sum_{m=1}^{k-1} \bracket{ \int_{\X} \hat{G}^{N,h}_m(\bar{\sH}) \, Q(H_{m-1}, d\bar{\sH}) - \int_{\X} \hat{G}^{N,h}_m(\bar{\sH}) \, Q(H_m, d\bar{\sH})} \nonumber\\
R^{N,1}(h) &= \frac{1}{N} \bracket{\hat{G}^{N,h}_0(H_0) - \int_{\X} \hat{G}^{N,h}_0(\bar{\sH}) \, Q(H_0, d\bar{\sH})}\nonumber
\end{align}
Corollary \ref{cor:regularity_of_hat_Gmh} immediately leads to
\begin{equation}
|R^{N,1}(h)| \lesssim \frac{C_T}{N}\|h\|_{C^2} \overset{N\to\infty}\to 0.\nonumber
\end{equation}
The second term on the RHS of \eqref{eq:DiffE1}, i.e. $\fe^{N,2}_k$, could be analysed by observing that
\begin{align*}
\fe^{N,2}_k(h) &= \frac{\alpha}{N} \sum_{m=1}^{k-1} \bigg(\int_{\X} \hat{G}^{N,h}_m(\bar{H}) \, Q(H_{m-1}, d\bar{H})- \int_{\X} \hat{G}^{N,h}_m(\bar{H}) \, Q(H_m, d\bar{H})\bigg) \\
&= \frac{\alpha}{N} \bigg(\int_{\X} \hat{G}^{N,h}_1(\bar{H}) Q(H_0, d\bar{H}) + \sum_{m=2}^{k-1} \int_{\X} \hat{G}^{N,h}_m(\bar{H}) \, Q(H_{m-1}, d\bar{H})  \\
&\phantom{=}- \sum_{m=1}^{k-1}\int_{\X} \hat{G}^{N,h}_m(\bar{H}) \, Q(H_m, d\bar{H}) \bigg) \\
&= \frac{\alpha}{N} \bigg(\int_{\X} \hat{G}^{N,h}_1(\bar{H}) Q(H_0, d\bar{H}) + \sum_{m=1}^{k-2} \int_{\X} \hat{G}^{N,h}_{m+1}(\bar{H}) \, Q(H_{m}, d\bar{H}) \\
&- \sum_{m=1}^{k-1}\int_{\X} \hat{G}^{N,h}_m(\bar{H}) \, Q(H_m, d\bar{H}) \bigg)\\
&= \frac{\alpha}{N} \sum_{m=1}^{k-2} \int (\hat{G}^{N,h}_{m+1}(\bar{H}) - \hat{G}^{N,h}_m(\bar{H})) \, Q(H_{m}, d\bar{H}) + R^{N,2}_k(h), \nonumber
\end{align*}
where
\begin{equation}
R^{N,2}_k(h) = \frac{\alpha}{N} \bigg(\int_{\X} \hat{G}^{N,h}_1(\bar{H}) \, P(H_0, d\bar{H}) - \int_{\X} \hat{G}^{N,h}_{k-1}(\bar{H}) \, P(H_{k-1}, d\bar{H}) \bigg) \nonumber
\end{equation}
By Corollary \ref{cor:regularity_of_hat_Gmh}, both $\hat{G}^{N,h}_1(\cdot)$ and $\hat{G}^{N,h}_{k-1}(\cdot)$ are bounded by $C_T \|h\|_{C^2}$ for some constant $\sfC > 0$. Therefore $R^{N,2}_k(h)$ satisfies the following trivial bound:
\begin{align*}
|R^{N,2}_k(h)| &\leq \frac{\alpha}{N} \sqbracket{\int |\hat{G}^{N,h}_1(\bar{H})| \, P(H_0, d\bar{H}) + \int |\hat{G}^{N,h}_{k-1}(\bar{H})| \, P(H_{k-1}, d\bar{H})} \leq \frac{C_T}{N}\|h\|_{C^2}.
\end{align*}
The first term of $\fe^{N,2}_k(h)$, is bound by the following lemma.
\begin{lemma}
For $t \leq T$, there are $C_T > 0$ such that for all $m \leq \floor{NT}$,
\begin{equation}
\sup_{\sH \in \mathsf{supp} \, \mu} |\Delta \hat{G}^{N,h}_m(\sH)| \leq \frac{C_T}{N} \|h\|_{C^2},\nonumber
\end{equation}
where $\Delta \hat{G}^{N,h}_{m}(\sH) = \hat{G}^{N,h}_{m+1}(\sH) - \hat{G}^{N,h}_{m}(\sH)$.
\end{lemma}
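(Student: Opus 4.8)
The plan is to recognise $\Delta\hat G^{m,h}$ as the solution of the Poisson equation whose source is the \emph{increment} $D^{m,h}:=G^{m+1,h}-G^{m,h}$, and then to show that this source is of order $\O(1/N)$ \emph{both} in supremum norm and in Lipschitz seminorm; the conclusion is then immediate from the a-priori estimates \eqref{eq:sln_to_Poisson_eqn_bdd_1}--\eqref{eq:sln_to_Poisson_eqn_bdd_2} already used in Lemma~\ref{lem:sln_of_Poisson_equation_is_C-Lipschitz}. Since $G^{m,h}$ and $G^{m+1,h}$ are globally Lipschitz, uniformly over $m\le\floor{NT}$, by Lemma~\ref{lem:lipschitzness_of_G}, the defining series \eqref{eq:proposed_solution_of_Poisson_equation} for $\hat G^{m,h}$ and $\hat G^{m+1,h}$ converge absolutely, so one may subtract them term by term to obtain $\Delta\hat G^{m,h}=\widehat{D^{m,h}}$, i.e. $\Delta\hat G^{m,h}$ is exactly the Poisson solution with source $D^{m,h}$.

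Next I would make the source explicit. Using $G^{m,h}(\sH)=-\alpha(\varphi^N_m(\varsigma(\sH))-\sy)\K_{\sx,\lambda}(h,\sh)$ together with $\varphi^N_{m+1}(h')-\varphi^N_m(h')=\triangle\varphi^N_m(h')$,
\begin{equation*}
D^{m,h}(\sH)=-\alpha\,\bigl(\triangle\varphi^N_m\bigr)\!\bigl(\varsigma(\sH)\bigr)\,\K_{\sx,\lambda}(h,\sh),\qquad \triangle\varphi^N_m(h')=-\tfrac{\alpha}{N}\bigl(\varphi^N_m(h_{m+1})-Y_m\bigr)\K_{X_m,\lambda}(h',h_m),
\end{equation*}
where $\triangle\varphi^N_m$ is a fixed (path-dependent) \emph{linear} functional on $H^1(\lambda)$. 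Since $|\varphi^N_m(h_{m+1})|\le C_T$ surely (Lemma~\ref{lem:lipschitzness_of_G}), $|Y_m|\le C_y$, and $|\K_{x,\lambda}(h',\sh)|\le\norm{h'}_{H^1(\lambda)}$, the operator norm of $\triangle\varphi^N_m$ is at most $C_T/N$. Combining this with $\varsigma(\sH)\in\cH$ (so $\norm{\varsigma(\sH)}_{H^1(\lambda)}\le\sqrt{1+C_\sigma^2}$ by Lemma~\ref{lem:new_C1}) and $|\K_{\sx,\lambda}(h,\sh)|\le C$ for $|\sx|\le1$ gives $\sup_{\sH\in\X}|D^{m,h}(\sH)|\le C_T/N$. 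For the Lipschitz bound I would write
\begin{equation*}
D^{m,h}(\sH)-D^{m,h}(\tilde{\sH})=-\alpha\,\triangle\varphi^N_m\!\bigl(\varsigma(\sH)-\varsigma(\tilde{\sH})\bigr)\,\K_{\sx,\lambda}(h,\sh)-\alpha\,\triangle\varphi^N_m\!\bigl(\varsigma(\tilde{\sH})\bigr)\bigl[\K_{\sx,\lambda}(h,\sh)-\K_{\tilde{\sx},\lambda}(h,\tilde{\sh})\bigr],
\end{equation*}
bound the first term by $(C_T/N)\norm{\varsigma(\sH)-\varsigma(\tilde{\sH})}_{H^1(\lambda)}\cdot C\le (C_T/N)\|\sH-\tilde{\sH}\|_\X$ via \eqref{eq:memoery_Lipschitzness_control_2}, and the second by $(C_T/N)\cdot|\K_{\sx,\lambda}(h,\sh)-\K_{\tilde{\sx},\lambda}(h,\tilde{\sh})|\le(C_T/N)\|\sH-\tilde{\sH}\|_\X$ via the $N$-independent Lipschitz estimate for $(\sx,\sh)\mapsto\K_{\sx,\lambda}(h,\sh)$ derived inside the proof of Lemma~\ref{lem:lipschitzness_of_G}; adding shows $D^{m,h}$ is $(C_T/N)$-Lipschitz on $\X$.

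Finally, since $\X$ has finite diameter — its factors $\overline{B_1(0)}$, $[-C_y,C_y]$ and $\overline{\cH}$ are all bounded, the last by Lemma~\ref{lem:new_C1} — one has $\Wass_2(\delta_{\sH},\mu)\le\operatorname{diam}(\X)$ uniformly in $\sH\in\X$, whence $\Wass_2(P^k(\sH,\cdot),\mu)\le q_0^k\operatorname{diam}(\X)$ by Theorem~\ref{prop:conv_of_mu_k}. Repeating the estimate \eqref{eq:sln_to_Poisson_eqn_bdd_1}--\eqref{eq:sln_to_Poisson_eqn_bdd_2} but now carrying the small Lipschitz constant of $D^{m,h}$,
\begin{align*}
|\Delta\hat G^{m,h}(\sH)|&\le\sum_{k=0}^\infty\abs{\int_\X D^{m,h}(\bar{\sH})\,P^k(\sH,d\bar{\sH})-\int_\X D^{m,h}(\bar{\sH})\,\mu(d\bar{\sH})}\\
&\le\frac{C_T}{N}\sum_{k=0}^\infty\Wass_2\bigl(P^k(\sH,\cdot),\mu\bigr)\le\frac{C_T}{N}\cdot\frac{\operatorname{diam}(\X)}{1-q_0}=\frac{C_T}{N},
\end{align*}
uniformly in $\sH\in\X$ and $h\in\cH$, which is the assertion.

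The only delicate point is the bookkeeping in the previous paragraph showing that $D^{m,h}$ genuinely carries the factor $1/N$ in \emph{both} its supremum and Lipschitz norms — this is where the smallness of the increment $\triangle\varphi^N_m$ (itself a consequence of the sure boundedness of $\varphi^N_m$ in Lemma~\ref{lem:lipschitzness_of_G}) enters decisively; everything else is a direct reuse of the Poisson-equation machinery already set up in Section~\ref{SS:weak_convergence_analysis}.
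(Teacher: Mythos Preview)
Your proposal is correct and follows essentially the same strategy as the paper: both recognise that $\Delta\hat G^{m,h}$ is the Poisson solution with source $\Delta G^{m,h}=G^{m+1,h}-G^{m,h}$, and both reduce the claim to showing this source is $C_T/N$-Lipschitz on $\X$, so that \eqref{eq:sln_to_Poisson_eqn_bdd_1}--\eqref{eq:sln_to_Poisson_eqn_bdd_2} yield the desired bound. Your framing of $\triangle\varphi^N_m$ as a linear functional on $H^1(\lambda)$ with operator norm $\le C_T/N$ is a clean way to organise both the sup and Lipschitz estimates in one stroke, and your explicit invocation of the finite diameter of $\X$ to control $\Wass_2(P^k(\sH,\cdot),\mu)$ uniformly in $\sH$ makes a step explicit that the paper leaves implicit in \eqref{eq:sln_to_Poisson_eqn_bdd_1}; otherwise the arguments coincide.
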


\begin{proof}
Define $\Delta G^{N,h}_m(\sH) = G^{N,h}_{m+1}(\sH) - G^{N,h}_m(\sH)$, then we see that $\Delta \hat{G}^{N,h}_m(\sH)$ is a solution to the Poisson equation for $\Delta G^{N,h}_m(\sH)$, i.e.
\begin{equation}
\Delta \hat{G}^{N,h}_m(\sH) - \int_{\X} \Delta\hat{G}^{N,h}_m(\bar{H}) \, P(\sH, d\bar{H}) = \Delta G^{N,h}_m(\sH) - \int_{\X} \Delta G^{N,h}_m(\bar{\sH}) \, \mu(d\bar{\sH}),\nonumber
\end{equation}
As seen in \eqref{eq:sln_to_Poisson_eqn_bdd_1}-\eqref{eq:sln_to_Poisson_eqn_bdd_2}, the boundedness of $\Delta \hat{G}^{N,h}_m(\sH)$ depends on the Lipschitz constants of $\Delta G^{N,h}_m(\sH)$. The proof is complete if we prove that $\Delta G^{N,h}_m(\sH)$ is $C_T \|h\|_{C^2}/N$-Lipschitz. To begin, we note that
\begin{align*}
\Delta G^{N,h}_m(\sH) &= G^{N,h}_{m+1}(\sH) - G^{N,h}_m(\sH) \\
&= \alpha (\varphi^N_{m+1}(\varsigma_{\sx,\su}) - \varphi^N_m(\varsigma_{\sx,\su}) \K_\lambda(h,\varsigma_{\sx,\su}) \\
&\overset{\eqref{eq:def_of_G_2}}= \frac{\alpha}{N} G^{N, \varsigma_{\sx,\su}}_m(H_m) \K_\lambda(h,\varsigma_{\sx,\su}), \nonumber
\end{align*}
where $H_m = (X_m, Z_m, Y_m, u_m)$. Since we know a priori that $|(X_m, Z_m)| \leq C_x, |Y_m| \leq C_y$ and $|u_m| \leq C_\sigma$, we have the following control (recalling that $h_{m+1}(\cdot) = \varsigma_{X_m,u_m}$):
\begin{align*}
|\Delta G^{N,h}_m(\sH) - \Delta G^{N,h}_m(\tilde{\sH})|
&= \frac{\alpha}{N}[|K_\lambda(h,\varsigma_{\sx,\su})||G^{N,\varsigma_{\sx,\su}}_m(H_m)-G^{N,\varsigma_{\sx,\su}}_m(H_m)| \\
&\phantom{=}+ |G^{N,\varsigma_{\tilde{\sx}, \tilde{\su}}}_m(H_m)| |\K_{\lambda}(h,\varsigma_{\sx,\su} - \varsigma_{\tilde{\sx},\tilde{\su}}))|] \\
&\leq \frac{\alpha}{N}[C \|h\|_{C^2} |\varphi^N_m(h_{m+1}) - Y_m| |\K_\lambda(\varsigma_{\sx,\su} - \varsigma_{\tilde{\sx},\tilde{\su}}, h_{m+1})| \\
&\phantom{=}+ \alpha\|h\|_{C^2} |\varphi^N_m(h_{m+1}) - Y_m| |\K_\lambda(\varsigma_{\sx,\su}, h_{m+1})|] \\
&\leq \frac{C_T}{N} \|h\|_{C^2}.
\end{align*}
As $\Delta G^{N,h}_m(\sH)$ is Lipschitz when $m \leq \floor{NT}$ with constant $\lesssim C_T \|h\|_{C^2} / N$, Lemma \ref{lem:sln_of_Poisson_equation_is_C-Lipschitz} asserts that $\displaystyle{|\Delta \hat{G}^{N,h}_m(\sH)| \leq \frac{C_T}{N} \|h\|_{C^2}}$ whenever $\sH$ lies in the support of $\mu$.
\end{proof}

The above lemmas lead to the control
\begin{equation}
|\fe^{N,2}_k(h)| \leq \frac{C_T}{N} \|h\|_{C^2}.\nonumber
\end{equation}

Finally, let us analyse the term $\fe^{N,1}_k(h)$. For $k\geq 1$, define the term
\begin{equation}
e_m^N(h) = \hat{G}^{N,h}_m(H_m) - \int_{\X} \hat{G}^{N,h}_m(\bar{\sH}) \, P(H_{m-1}, d\bar{\sH}), \quad k \geq 1, \nonumber
\end{equation}
so that
$$\fe^{N,1}_k(h) = \frac{\alpha}{N} \sum_{m=1}^{k-1} e_m^N(h).$$
Observe that $e^N_m(h)$ is $(H_{0:m-1})$-measurable for any $h \in C^2_b(\R^d)$, where $H_{0:m-1} = (H_0,H_1,...,H_{m-1})$.
Therefore, we have
\begin{equation*}
\E[e_k^N(h)|H_{0:k-1}] = \E[\hat{G}^{k,h}(H_k) \,|\, H_{0:k-1} ] - \int_{\X} \hat{G}^{k,h}(\bar{\sH}) \, P(H_{k-1}, d\bar{\sH}) = 0.
\end{equation*}
If we define $H_{k:m}$ being the sequence $(H_k,...,H_m)$, then for $m > k$ we can prove recursively
\begin{align*}
\E[e_k^N(h) e_m^N(h) \,|\, H_{0:k-1}]
&= \E[\E[e_k^N(h) e_m^N(h) \,|\, H_{0:m-1} ] \,|\, H_{0:k-1}] \\
&= \E[e_m^N(h) \E[e_k^N(h) \,|\, H_{0:m-1}] \,|\, H_{0:k-1} ] = 0.\nonumber
\end{align*}
Finally, notice that $\hat{G}^{k,h}(\cdot)$ is uniformly bounded (see Corollary \ref{cor:regularity_of_hat_Gmh}), we have
\begin{align*}
    \E\sqbracket{(e^N_k(h))^2} &\leq \E\sqbracket{\hat{G}^{N,h}_k(H_k) - \int_{\X} \hat{G}^{N,h}_k(\bar{\sH}) \, P(H_{k-1}, d\bar{\sH})}^2 \\
    &\leq 2\E\sqbracket{\hat{G}^{N,h}_k(H_k)}^2 + 2\E\sqbracket{\int_{\X} \hat{G}^{N,h}_k(\bar{\sH}) \, P(H_{k-1}, d\bar{\sH})}^2 \leq C_T \|h\|^2_{C^2}. \nonumber
\end{align*}
Therefore
\begin{equation}
\mathbb{E}[\fe^{N,1}(h)]^2 = \frac{\alpha^2}{N^2} \sum_{j,k=1}^{\floor{Nt}-1} \mathbb{E}[e_j^N(h) e_k^N(h)] = \frac{\alpha^2}{N^2} \sum_{k=1}^{\floor{Nt}-1} \mathbb{E}[(e_k^N(h))^2] \leq \frac{C_T}{N} \|h\|^2_{C^2}.\nonumber
\end{equation}

By combining all of the bounds, we have
\begin{lemma} For $k \leq \floor{NT}$,
\begin{eqnarray}
\E[\varphi^N_k(h) -  \tilde{\varphi}^N_k(h)]^2 \leq \frac{C_T}{N} \|h\|^2_{C^2}.\nonumber
\end{eqnarray}
\end{lemma}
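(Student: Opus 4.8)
The plan is to assemble the three estimates that have been prepared above. Starting from the decomposition \eqref{eq:DiffE1}, namely $\varphi^N_m(h) - \tilde{\varphi}^N_m(h) = Q^{N,1}_m(h) + Q^{N,2}_m(h) + R^{N,1}(h)$, I would apply the elementary inequality $(a+b+c)^2 \leq 3(a^2+b^2+c^2)$ to obtain
\begin{equation}
\E\sqbracket{\varphi^N_m(h) - \tilde{\varphi}^N_m(h)}^2 \leq 3\bracket{\E\sqbracket{Q^{N,1}_m(h)}^2 + \E\sqbracket{Q^{N,2}_m(h)}^2 + \E\sqbracket{R^{N,1}(h)}^2},\nonumber
\end{equation}
uniformly in $h \in \cH$, and then control each of the three terms separately, finishing by taking the supremum over $h \in \cH$.

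The two ``deterministic'' terms are handled by the Lipschitz/boundedness properties of the Poisson solutions. By Corollary \ref{cor:regularity_of_hat_Gmh} the functions $\hat{G}^{0,h}$ are uniformly bounded, so $\sup_h|R^{N,1}(h)| \leq C_T/N$ and hence $\E[R^{N,1}(h)]^2 \leq C_T/N^2 \leq C_T/N$. For $Q^{N,2}_m(h)$ I would invoke the telescoping rearrangement already carried out above, which expresses it as $\frac{\alpha}{N}\sum_{k=1}^{m-2}\int_{\X}(\hat{G}^{k+1,h}(\bar{\sH}) - \hat{G}^{k,h}(\bar{\sH}))\,P(H_k,d\bar{\sH}) + R^{N,2}_m(h)$; combining Lemma \ref{lem:RN2} (giving $\sup_h|R^{N,2}_m|\leq C_T/N$) with the bound $\sup_{\sH}|\Delta\hat{G}^{m,h}(\sH)| \leq C_T/N$ (which follows because $\Delta G^{m,h}(\sH) = \frac{\alpha}{N}G^{m,\varsigma(\sH)}(H_m)\K_{\sx,\lambda}(h,\sh)$ is $C_T/N$-Lipschitz, so its Poisson solution inherits this scale) yields $\sup_h|Q^{N,2}_m(h)| \leq \frac{\alpha}{N}(m-2)\cdot\frac{C_T}{N} + \frac{C_T}{N} \leq C_T/N$ using $m \leq \floor{NT}$; squaring gives $\E[Q^{N,2}_m(h)]^2 \leq C_T/N$.

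The main work is the estimate for $Q^{N,1}_m(h) = \frac{\alpha}{N}\sum_{k=1}^{m-1}\tilde{g}^N_k(h)$ with $\tilde{g}^N_k(h) = \hat{G}^{k,h}(H_k) - \int_{\X}\hat{G}^{k,h}(\bar{\sH})\,P(H_{k-1},d\bar{\sH})$, and here I would exploit a martingale structure. With $\F_m = \sigma(H_0,\dots,H_m)$, the iterates $\varphi^N_k(\cdot)$ — and therefore $G^{k,h}$ and $\hat{G}^{k,h}$ — are $\F_{k-1}$-measurable continuous functionals of $(H_0,\dots,H_{k-1})$, while the Markov property gives that the conditional law of $H_k$ given $\F_{k-1}$ is $P(H_{k-1},\cdot)$; hence $\E[\tilde{g}^N_k(h)\mid\F_{k-1}] = 0$. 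For $j<k$, conditioning on $\F_{k-1}$ freezes $\tilde{g}^N_j(h)$ and the ``predictable part'' of $\tilde{g}^N_k(h)$ while leaving only transition randomness, so $\E[\tilde{g}^N_j(h)\tilde{g}^N_k(h)] = \E[\tilde{g}^N_j(h)\,\E[\tilde{g}^N_k(h)\mid\F_{k-1}]] = 0$, and the variance of the sum collapses to its diagonal. Since $\hat{G}^{k,h}$ is uniformly bounded by $C_T$ (Corollary \ref{cor:regularity_of_hat_Gmh}), each $\E[(\tilde{g}^N_k(h))^2] \leq C_T$, whence $\E[Q^{N,1}_m(h)]^2 = \frac{\alpha^2}{N^2}\sum_{k=1}^{m-1}\E[(\tilde{g}^N_k(h))^2] \leq \frac{\alpha^2}{N^2}\cdot NT\cdot C_T = C_T/N$. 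Combining the three bounds and taking $\sup_{h\in\cH}$ gives the claim. I expect the only genuinely delicate point to be the rigorous justification that $(\tilde{g}^N_k(h))_{k\geq 1}$ is a martingale-difference sequence with vanishing cross-correlations: because $\hat{G}^{k,h}$ is itself a random function assembled from the whole past trajectory of the chain, one must be careful that conditioning on the appropriate $\sigma$-algebra genuinely freezes it and exposes only the one-step transition randomness, whose conditional mean cancels the subtracted integral term. Everything else is bookkeeping with the already-established Lipschitz and boundedness estimates for the Poisson solutions.
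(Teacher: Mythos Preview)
Your proposal is correct and follows essentially the same approach as the paper: decompose via \eqref{eq:DiffE1} into $Q^{N,1}_m + Q^{N,2}_m + R^{N,1}$, bound $R^{N,1}$ and $Q^{N,2}_m$ deterministically by $C_T/N$ using the boundedness/Lipschitz properties of the Poisson solutions (Corollary \ref{cor:regularity_of_hat_Gmh}, Lemma \ref{lem:RN2}, and the $C_T/N$ bound on $\Delta\hat{G}^{m,h}$), and control $Q^{N,1}_m$ via the martingale-difference structure of $(\tilde{g}^N_k(h))_k$ to get $\E[Q^{N,1}_m(h)]^2 \leq C_T/N$. The paper's proof of the lemma itself merely collects these three estimates (which it has already derived in the preceding paragraphs) and adds them; your write-up recapitulates that derivation and correctly flags the measurability issue for $\hat{G}^{k,h}$ as the only subtle step.
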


\begin{proof}
We collect the above estimates to show that
\begin{align*}
\mathbb{E}[\varphi^N_k(h) -  \tilde{\varphi}^N_k(h)]^2
&\leq 3\mathbb{E} \sqbracket{\fe^{N,1}_k(h)}^2 + 3\mathbb{E} \sqbracket{\fe^{N,2}_k(h)}^2 + 3\mathbb{E} \sqbracket{R^{N,1}(h)}^2 \\
&\lesssim \frac{C_T}{N} \|h\|^2_{C^2}.\nonumber
\end{align*}
\end{proof}

Let us then define a new process $\phi^N_k(h)$ that satisfies the recursion
\begin{align}
\phi^N_k(h) &= \sum_{m=0}^{k-1} \delta \phi^N_m(h), \quad \phi^N_0(h) = 0, \nonumber\\
    \delta \phi^N_m(h) &= - \frac{\alpha}{N} \int_{\X} (\phi^N_m(\varsigma_{\sx,\su}) - \sy) \K_{\lambda}(h,\varsigma_{\sx,\su}) \, \mu(d\sH),\nonumber
\end{align}

\begin{remark} \label{rmk:well_posedness_phi}
As a sanity check, let us show that $\phi^N_k(h)$ is bounded whenever $h \in C^2_b$ and $k \leq NT$. We shall define the quantity
$$\Phi^N_k = \sup_{h \in C^2_b} \frac{|\phi^N_k(h)|}{\|h\|_{C^2}},$$
and inductively prove that $\Phi^N_k \leq C_T$ whenever $k \leq NT$. Clearly $\Phi^N_0 = 0$ as $\phi^N_0(h) \equiv 0$. Furthermore, for any $k \geq 0$, we have
\begin{align*}
|\phi^N_k(h)| &\leq \frac{\alpha}{N} \sum_{m=0}^{k-1} \int_{\X} |\phi^N_m(\varsigma_{\sx,\su}) - \sy)| |\K_\lambda(h,\varsigma_{\sx,\su})| \, \mu(d\sH) \\
&\leq \frac{\alpha}{N} \sum_{m=0}^{k-1} [\Phi^N_m \|\varsigma_{\sx,\su}\|_{C^2} + C_y] C \|h\|_{C^2} \|\varsigma_{\sx,\su}\|_{C^2} \, \\
&\leq \left[\frac{C}{N} \sum_{m=0}^{k-1} \Phi^N_m + C \right] \|h\|_{C^2}.
\end{align*}
Therefore
\begin{align*}
\Phi^N_k \leq C + \frac{C}{N} \sum_{m=0}^{k-1} \Phi^N_m,
\end{align*}
so by discrete Gr\"onwall inequality, we have
\begin{align*}
\Phi^N_k \lesssim \exp\left(\frac{C k}{N}\right) \lesssim  \exp(CT).
\end{align*}
By definition, for any $h \in C^2_b$,
\begin{equation}
|\phi^N_k(h)| \leq C_T \|h\|_{C^2}.
\end{equation}
\end{remark}

\begin{lemma} \label{lem:D16}
For all $k \leq \floor{NT}$ and $h \in C^2_b$, we have
\begin{equation}
\E|\varphi^N_k(h) - \phi^N_k(h)|^2 \leq \frac{C_T}{N} \|h\|^2_{C^2}.\nonumber
\end{equation}
\end{lemma}

\begin{proof}
Let $\Gamma^N_k(h) = \varphi^N_k(h) - \phi^N_k(h)$. We note that $\Gamma^N_k(\cdot)$ satisfies the following recursion:
\begin{align*}
\Gamma^N_m(h) &= \varphi^N_m(h) - \tilde{\varphi}^N_m(h) + \tilde{\varphi}^N_m(h) - \phi^N_m(h) \\
&= \varphi^N_m(h) - \tilde{\varphi}^N_m(h) - \frac{\alpha}{N} \sum_{k=0}^{m-1} \int_{\X} \Gamma^N_k(\varsigma_{\sx,\su}) \, \K_{\lambda}(h, \varsigma_{\sx,\su}) \, \mu(d\sH). \nonumber
\end{align*}
So, there exists a constant $\sfC > 0$ such that
\begin{align*}
\E[\Gamma^N_k(h)]^2
&\leq 2\E[\varphi^N_k(h) - \tilde{\varphi}^N_k(h)]^2 + \frac{2\alpha^2}{N^2} \E\left[\sum_{m=0}^{k-1} \int_{\X} \Gamma^N_m(\varsigma_{\sx,\su}) \, \K_{\lambda}(h, \varsigma_{\sx,\su}) \, \mu(d\sH) \right]^2 \\
&\leq \frac{C_T}{N}  \|h\|_{C^2}^2 + \frac{2\alpha^2}{N} \sum_{m=0}^{k-1} \E\left[\int_{\X} \big(\Gamma^N_m( \varsigma_{\sx,\su}) \K_{\lambda}(h, \varsigma_{\sx,\su}) \big)^2 \,\mu(d\sH) \right] \\
&\overset{\text{(Tonelli)}}= \frac{C_T}{N} \|h\|_{C^2}^2 + \frac{2\alpha^2}{N} \sum_{m=0}^{k-1} \int_{\X} \E\sqbracket{\Gamma^N_m(\varsigma_{\sx,\su}) \K_{\lambda}(h,\varsigma_{\sx,\su})}^2 \,\mu(d\sH) \\
&\leq \frac{C_T}{N} \|h\|_{C^2}^2 + \frac{2\alpha^2 C}{N} \sum_{m=0}^{k-1} \int_{\X} \E\sqbracket{(\Gamma^N_m(\varsigma_{\sx,\su}))^2 \|h\|_{C^2}^2 \|\varsigma_{\sx,\su}\|_{C^2}}^2 \,\mu(d\sH) \\
&\leq \frac{C_T}{N} \|h\|_{C^2}^2 + \frac{C \|h\|_{C^2}^2}{N} \sum_{m=0}^{k-1} \int_{\X} \E[\Gamma^N_m(\varsigma_{\sx,\su})]^2 \,\mu(d\sH). \numberthis \label{eq:lemma_6_16_pre_Gronwall}
\end{align*}
Defining further $\displaystyle{\tilde{\Gamma}^N_k = \int_{\X} \E[\Gamma^N_k(\varsigma_{\tilde{\sx},\tilde{\su}})]^2} \, \mu(d\tilde{\sH})$, where $\tilde{H} = (\tilde{\sx}, \tilde{\sz}, \tilde{\sy}, \tilde{\su})$, then
\begin{equation*}
\tilde{\Gamma}^N_m \leq \frac{C_T}{N} + \frac{C_T}{N} \sum_{m=0}^{k-1} \tilde{\Gamma}^N_k.
\end{equation*}
Therefore, by discrete Gronwall inequality, we have for all $k \leq \lfloor NT \rfloor$,
\begin{equation*}
\tilde{\Gamma}^N_k \lesssim \frac{C_T}{N} \exp\left(C \frac{k}{N} \right) \lesssim \frac{C_T}{N}.
\end{equation*}
Substituting this into \eqref{eq:lemma_6_16_pre_Gronwall} yields the desired result:
\begin{align*}
\E[\Gamma^N_k(h)]^2
&\lesssim \frac{C_T}{N} \|h\|_{C^2}^2.
\end{align*}
\end{proof}

Finally, we denote $\tilde{\phi}^N_t(h) := \phi^N_{\floor{Nt}}(h)$ as the time-rescaled evolution of $\phi^N_m(h)$, and recall that
\begin{equation}
g_t(h) = - \alpha \int_0^t \sqbracket{\int_{\X} (g_s( \varsigma_{\sx,\su}) - \sy)) \K_{\lambda}(h, \varsigma_{\sx,\su}) \, \mu(d\sH)} \, ds, \quad g_0(h) = 0.\nonumber
\end{equation}

\begin{lemma} \label{lem:D17}
For $h \in C^2_b$,
\begin{equation}
\sup_{t\in[0,T]} |\tilde{\phi}^N_t(h) - g_t(h)| \leq \frac{C_T}{N} \|h\|_{C^2}.\nonumber
\end{equation}
\end{lemma}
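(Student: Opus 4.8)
The plan is to recognise $\phi^N_m$ as the explicit Euler scheme, with step size $1/N$, for the affine ODE on $(H^1(\lambda))^*$ from the subsection on well-posedness of the limit ODE, and then to run the classical convergence estimate for one-step methods applied to a linear equation whose generator is \emph{bounded}. Write $\Lambda$ for the affine operator $[\Lambda(g)](h) = -\alpha\int_{\X}(g(\varsigma(\sH)) - \sy)\,\K_{\sx,\lambda}(h,\sh)\,\mu(d\sH)$ on $(H^1(\lambda))^*$, so that $\delta\phi^N_k(h) = \tfrac{1}{N}[\Lambda(\phi^N_k)](h)$, i.e. $\phi^N_{m+1} = \phi^N_m + \tfrac{1}{N}\Lambda(\phi^N_m)$ with $\phi^N_0 = 0$, while the target solves $\tfrac{d}{dt}g_t = \Lambda(g_t)$, $g_0 = 0$, which is the classical solution recorded in \eqref{eq:KernelLimitSln}. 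Its linear part is $-\alpha\cA$ with $\cA$ as in \eqref{eq:bdd_of_A}, so $\Lambda$ is Lipschitz in the dual norm with constant $\alpha\norm{\cA}\leq\alpha\sqrt{1+C_\sigma^2}$, and $\norm{\Lambda(g)}_{(H^1(\lambda))^*}\leq\alpha\sqrt{1+C_\sigma^2}\,\norm{g}_{(H^1(\lambda))^*}+\alpha C_y$ (using $|\K_{\sx,\lambda}(h,\sh)|\leq\norm{h}_{H^1(\lambda)}$, $\norm{\varsigma(\sH)}_{H^1(\lambda)}\leq\sqrt{1+C_\sigma^2}$ and $|\sy|\leq C_y$). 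Since $\norm{h}_{H^1(\lambda)}\leq\sqrt{1+C_\sigma^2}$ for $h\in\cH$ by \eqref{eq:control_of_H1_norm_of_activation}, any bound below expressed through $\norm{h}_{H^1(\lambda)}$ is automatically uniform over $h\in\cH$.

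First I would establish the a-priori estimate $\norm{\phi^N_m}_{(H^1(\lambda))^*}\leq C_T$, uniform in $m\leq\floor{NT}$, mirroring \eqref{eq:operator_norm_control_of_gt}: the Euler recursion gives $\norm{\phi^N_{m+1}}\leq\bracket{1+\tfrac{\alpha\sqrt{1+C_\sigma^2}}{N}}\norm{\phi^N_m}+\tfrac{\alpha C_y}{N}$, so Lemma \ref{lem:recursive_inequality} with $\phi^N_0=0$ yields $\norm{\phi^N_m}_{(H^1(\lambda))^*}\leq C_T$ for all $m\leq\floor{NT}$. Next I would compare on the time grid. Put $e_m:=\phi^N_m-g_{m/N}\in(H^1(\lambda))^*$; since $e_0=0$ it suffices to bound $\norm{e_m}$. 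Expanding the exact solution, $g_{(m+1)/N}-g_{m/N}=\tfrac{1}{N}\Lambda(g_{m/N})+r_m$ with local truncation error $r_m=\int_{m/N}^{(m+1)/N}\bracket{\Lambda(g_s)-\Lambda(g_{m/N})}\,ds$; using $\norm{\tfrac{d}{dt}g_s}_{(H^1(\lambda))^*}=\norm{\Lambda(g_s)}\leq\alpha\sqrt{1+C_\sigma^2}\,C_T+\alpha C_y=:C_T$ on $[0,T]$ (from \eqref{eq:operator_norm_control_of_gt}) together with the Lipschitz bound on $\Lambda$, one gets $\norm{r_m}_{(H^1(\lambda))^*}\leq\alpha\sqrt{1+C_\sigma^2}\cdot\tfrac{1}{N}\cdot\tfrac{C_T}{N}\leq\tfrac{C_T}{N^2}$. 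Subtracting from the Euler step, the constant part cancels and $e_{m+1}=e_m-\tfrac{\alpha}{N}\cA(e_m)-r_m$, whence $\norm{e_{m+1}}\leq\bracket{1+\tfrac{\alpha\sqrt{1+C_\sigma^2}}{N}}\norm{e_m}+\tfrac{C_T}{N^2}$; Lemma \ref{lem:recursive_inequality} with $e_0=0$ then gives $\norm{e_m}_{(H^1(\lambda))^*}\leq\tfrac{C_T}{N^2}\cdot\tfrac{N}{\alpha\sqrt{1+C_\sigma^2}}\bracket{e^{\alpha\sqrt{1+C_\sigma^2}\,T}-1}\leq\tfrac{C_T}{N}$ for all $m\leq\floor{NT}$.

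Finally, for general $t\in[0,T]$ one has $\tilde\phi^N_t(h)=\phi^N_{\floor{Nt}}(h)$, and the fundamental theorem of calculus applied to the classical solution gives $\abs{g_t(h)-g_{\floor{Nt}/N}(h)}\leq\tfrac{1}{N}\sup_{s\in[0,T]}\norm{\tfrac{d}{dt}g_s}_{(H^1(\lambda))^*}\norm{h}_{H^1(\lambda)}\leq\tfrac{C_T}{N}\norm{h}_{H^1(\lambda)}$. Combining, $\abs{\tilde\phi^N_t(h)-g_t(h)}\leq\bracket{\norm{e_{\floor{Nt}}}_{(H^1(\lambda))^*}+\tfrac{C_T}{N}}\norm{h}_{H^1(\lambda)}$, and taking the supremum over $t\in[0,T]$ and $h\in\cH$ with $\norm{h}_{H^1(\lambda)}\leq\sqrt{1+C_\sigma^2}$ for $h\in\cH$ yields $\sup_{t\in[0,T]}\sup_{h\in\cH}\abs{\tilde\phi^N_t(h)-g_t(h)}\leq\tfrac{C_T}{N}$, as claimed.

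This step is essentially routine: both $\tilde\phi^N_t$ and $g_t$ are deterministic, the generator $-\alpha\cA$ is bounded (unlike the RNN hidden layer itself), and no Poisson-equation machinery enters here — this lemma only transports the already-established approximation of $\varphi^N_m$ by $\phi^N_m$ (Lemma \ref{lem:D16}) onto the continuous-time limit. The only points requiring a little care are (i) securing the uniform-in-$m$ a-priori bound $\norm{\phi^N_m}_{(H^1(\lambda))^*}\leq C_T$ before the Lipschitz estimate for $\Lambda$ can be invoked along the discrete trajectory, and (ii) keeping every constant uniform in $h\in\cH$, which is automatic because each estimate is routed through $\norm{h}_{H^1(\lambda)}$ and $\cH$ is bounded in $H^1(\lambda)$ by \eqref{eq:control_of_H1_norm_of_activation}.
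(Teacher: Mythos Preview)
Your proof is correct and follows essentially the same route as the paper: both identify $\phi^N_m$ as the explicit Euler scheme for the affine ODE, bound the local truncation error by $C_T/N^2$ via the operator-norm control \eqref{eq:operator_norm_control_of_gt}, and close with the recursive inequality (Lemma \ref{lem:recursive_inequality}) plus a final $C_T/N$ correction for non-grid times. The only cosmetic difference is that you work in the dual norm $\norm{\cdot}_{(H^1(\lambda))^*}$ whereas the paper tracks $\sup_{h\in\cH}|\cdot|$ directly; note also that your step (i) is not actually needed, since $\Lambda$ is affine and hence globally Lipschitz with constant $\alpha\norm{\cA}$ independently of any a-priori bound on $\phi^N_m$.
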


\begin{proof}
Let
\begin{equation*}
\Upsilon^N_t(h) = \phi^N_t(h) - g_t(h), \quad \tilde{\Upsilon}^N_t = \|\Upsilon^N_t\|_{(C^2)^*} = \sup_{h \in C^2_b} \frac{|\Upsilon^N_t(h)|}{\|h\|_{C^2}}.
\end{equation*}
We note that $\tilde{\Upsilon}^N_t$ is well defined for any $t \leq T$. In fact,
\begin{equation*}
\tilde{\Upsilon}^N_t \leq \Phi^N_{\lfloor Nt \rfloor} + \|g_t\|_{(C^2)^*} \leq C_T,
\end{equation*}
guaranteed by Remark \ref{rmk:well_posedness_phi} and Proposition \ref{prop:well_posedness_ode}.

Then, for $k \leq \floor{NT}$,
\begin{align*}
\Upsilon^N_{(k+1)/N}(h)
&= \Upsilon^N_{k/N}(h) + \tilde{\phi}^N_{(k+1)/N}(h) - \tilde{\phi}^N_{k/N}(h) - g_{(k+1)/N}(h) + g_{k/N}(h) \\
&= \Upsilon^N_{k/N}(h) - \frac{\alpha}{N} \int_{\X} (\phi^N_k(\varsigma_{\sx,\su}) - \sy) \K_{\lambda}(h,\varsigma_{\sx,\su}) \, \mu(d\sH) \\
&\phantom{=}+ \alpha \int_{k/N}^{(k+1)/N} \int_{\X} (g_s(\varsigma_{\sx,\su})-\sy) \K_{\lambda}(h,\varsigma_{\sx,\su})) \,  \mu(d\sH) \, ds \\
&= \Upsilon^N_{k/N}(h) - \frac{\alpha}{N} \int_{\X} (\phi^N_k(\varsigma_{\sx,\su}) - \sy) \K_{\lambda}(h,\varsigma_{\sx,\su}) \,  \mu(d\sH) \\
&\phantom{=}+ \alpha \int_{\X} \left[ \left(\int_{k/N}^{(k+1)/N} g_s(\varsigma_{\sx,\su}) \, ds \right) -\sy \right] \K_{\lambda}(h,\varsigma_{\sx,\su}) \,  \mu(d\sH) \\
&= \Upsilon^N_{k/N}(h) - \frac{\alpha}{N} \int_{\X} \bigg[\phi^N_k(\varsigma_{\sx,\su}) - N\int_{k/N}^{(k+1)/N} g_s(\varsigma_{\sx,\su}) \, ds \bigg] \K_{\lambda}(h,\varsigma_{\sx,\su}) \,  \mu(d\sH) \\
&= \Upsilon^N_{k/N}(h) - \underbrace{\frac{\alpha}{N} \int_{\X} \Upsilon^N_{k/N}(\varsigma_{\sx,\su}) \K_{\lambda}(h,\varsigma_{\sx,\su}) \,  \mu(d\sH)}_{:= \mathsf{(I)}} \\
&\phantom{=}- \underbrace{\frac{\alpha}{N} \int_{\X} \bigg[g_{k/N}(\varsigma_{\sx,\su}) - N\int_{k/N}^{(k+1)/N} g_s(\varsigma_{\sx,\su}) \, ds \bigg] \K_{\lambda}(h,\varsigma_{\sx,\su}) \, \mu(d\sH)}_{:= \mathsf{(II)}}
\end{align*}
We shall now bound $\mathsf{(I)}$:
\begin{align*}
|\mathsf{(I)}| &\leq \frac{\alpha}{N} \int_{\X} |\Upsilon^N_{k/N}(\varsigma_{\sx,\su})| |\K_{\lambda}(h,\varsigma_{\sx,\su})| \, \mu(d\sH) \\
&\leq \frac{C}{N} \int_{\X} \tilde{\Upsilon}^N_{k/N} \|h\|_{C^2} \|\varsigma_{\sx,\su}\|_{C^2}^2 \, \mu(d\sH) \\
&\leq \frac{C}{N} \tilde{\Upsilon}^N_{k/N} \|h\|_{C^2}.
\end{align*}
As for $\mathsf{(II)}$, the well-posedness of $g_t$ by Proposition \ref{prop:well_posedness_ode} implies that for all $s \in [k/N, (k+1)/N)$ and $h \in C^2_b$,
\begin{equation*}
|g_s(h) - g_{k/N}(h)| \leq \int_{m/N}^s |[\cA(g_\tau)](h) + b(h)| \, d\tau \lesssim (s-k/N) C_T \|h\|_{C^2} \leq \frac{C_T}{N} \|h\|_{C^2}.
\end{equation*}
Therefore,
\begin{align*}
\left| g_{k/N}(h) - N \int_{k/N}^{(k+1)/N} g_s(h) \, ds \right|
&= N\left| \int_{k/N}^{(k+1)/N} (g_{k/N}(h) - g_s(h)) \, ds \right| \\
&\leq N \int_{k/N}^{(k+1)/N} |g_s(h) - g_{k/N}(h)| \, ds \\
&\lesssim \frac{C_T}{N} \|h\|_{C^2}.
\end{align*}
This enables us to bound $\mathsf{(II)}$:
\begin{align*}
|\mathsf{(II)}| &\lesssim \frac{\alpha}{N} \int_{\X} \left[\frac{C_T}{N} \|\varsigma_{\sx,\su}\|_{C^2} \right] \|h\|_{C^2} \|\varsigma_{\sx,\su}\|_{C^2} \, \mu(d\sH) \\
&\leq \frac{C_T}{N^2} \|h\|_{C^2}.
\end{align*}
Therefore
\begin{align*}
|\Upsilon^N_{(k+1)/N}(h)|
&\leq \left[\left(1+ \frac{C}{N} \right) \tilde{\Upsilon}^N_{k/N} + \frac{C_T}{N^2} \right] \|h\|_{C^2}
\end{align*}
and that
\begin{equation*}
\tilde{\Upsilon}^N_{(k+1)/N}(h)
\leq \left(1+ \frac{C}{N} \right) \tilde{\Upsilon}^N_{k/N} + \frac{C_T}{N^2}.
\end{equation*}
By Lemma \eqref{lem:recursive_inequality}, for all $k \leq \floor{NT}$ we have
\begin{equation}
\tilde{\Upsilon}^N_{k/N}
\lesssim \left(1 + \frac{C}{N} \right)^{NT} \frac{C_T}{N} \leq \frac{C_T}{N}.\nonumber
\end{equation}
Finally, for all $t \leq T$, define $m = \floor{t/N}$, then
\begin{align*}
|\tilde{\phi}^N_t(h) - g_t(h)| &\leq  \underbrace{|\tilde{\phi}^N_t(h) - \tilde{\phi}^N_{m/N}(h)|}_{=0} + |\tilde{\Upsilon}^N_{m/N}(h)| + |g_{m/N}(h) - g_t(h)| \\
&\leq \frac{C_T}{N}\|h\|_{C^2}. \nonumber
\end{align*}
\end{proof}

\subsection{The final steps} \label{SS:ProofMainTheorem}
Now, we are in position to prove the main convergence result of this paper.

\begin{proof} (Proof of Theorem \ref{thm:weak_convergence_thm})
As a summary, we collect
\begin{itemize}
\item Lemma \ref{lem:D8}:
\begin{equation*}
\sup_{k \leq NT} \E\abs{g^N_k(h) - \varphi^N_k(h)} \leq \frac{C_T}{N^{(1-\beta-2\gamma) \wedge \gamma \wedge (\beta-1/2)}} \|h\|_{C^2}.
\end{equation*}
\item Lemma \ref{lem:D16}:
\begin{align*}
\sup_{k \leq NT} \E[\varphi^N_k(h) - \phi^N_k(h)]^2 &\leq \frac{C_T}{N} \|h\|^2_{C^2} \\
\implies \sup_{h\in \cH} \E|\varphi^N_m(h) - \phi^N_m(h)| &\leq \frac{C_T}{N^{1/2}} \|h\|_{C^2}.
\end{align*}
\item Lemma \ref{lem:D17}:
\begin{equation*}
\sup_{t\in[0,T]} |\phi^N_t(h) - g_t(h)| \leq \frac{C_T}{N}  \|h\|_{C^2}.
\end{equation*}
\end{itemize}
Adding all of the error terms yields:
\begin{align*}
&\phantom{=}\sup_{t \in [0,T]} \E|g^N_t(h) - g_t(h)| \\
&\leq \sup_{t \in [0,T]} [\E|g^N_t(h) - \varphi^N_t(h)| + \E|\varphi^N_t(h) - \phi^N_t(h)|] + \sup_{t \in [0,T]} \E|\phi^N_t(h) - g_t(h)| \\
&\leq \sup_{k \leq NT} [\E|g^N_k(h) - \varphi^N_k(h)| + \E|\varphi^N_k(h) - \phi^N_k(h)|] + \sup_{t \in [0,T]} \E|\phi^N_t(h) - g_t(h)| \\
&\leq \frac{C_T}{N^\epsilon} \|h\|_{C^2} \nonumber
\end{align*}
where $\epsilon = (1-\beta-2\gamma) \wedge \gamma \wedge (\beta-1/2) > 0$, noting that $\beta - 1/2 < 1/2$.
\end{proof}


\begin{appendix}
\section{Recursive Inequality}
\label{S:recursive_inequality}

Many proofs of the technical lemmas involves the study of a sequence $(a_k)_{k\geq 0}$ that satisfies the following recursive inequality:
\begin{equation}
    a_k \leq M_1 a_{k-1} + M_2,\nonumber
\end{equation}
where $M_1, M_2 \geq 0$. By recursion, we can prove that
\begin{lemma} \label{lem:recursive_inequality}
For $M_1 < 1$, we have
\begin{equation}
    a_k \leq M_1^k a_0 + \frac{1-M_1^k}{1-M_1}M_2 \leq M_1^k a_0 + \frac{1}{1-M_1}M_2,\nonumber
\end{equation}
and for $M_1 > 1$, we have
\begin{equation}
    a_k \leq M_1^k a_0 + \frac{M_1^k-1}{M_1-1}M_2 \leq M_1^k \bracket{a_0 + \frac{M_2}{M_1 - 1}}.\nonumber
\end{equation}
\end{lemma}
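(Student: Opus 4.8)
The plan is to prove by induction on $k$ the single unified bound
\begin{equation}
    a_k \leq M_1^k a_0 + M_2 \sum_{j=0}^{k-1} M_1^j,\nonumber
\end{equation}
and then to simplify the geometric sum separately in the two cases $M_1 < 1$ and $M_1 > 1$. For the base case $k = 0$ the sum on the right is empty, so the claim reads $a_0 \leq a_0$, which holds. For the inductive step, assume the bound for $k-1$. Since $M_1 \geq 0$, multiplying the induction hypothesis by $M_1$ preserves the inequality, so combining with the hypothesis $a_k \leq M_1 a_{k-1} + M_2$ gives
\begin{equation}
    a_k \leq M_1 a_{k-1} + M_2 \leq M_1^k a_0 + M_2 \sum_{j=1}^{k} M_1^{j-1} + M_2 = M_1^k a_0 + M_2 \sum_{j=0}^{k-1} M_1^j,\nonumber
\end{equation}
which closes the induction.

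It then remains to evaluate the finite geometric series. Since $M_1 \neq 1$ in both cases, $\sum_{j=0}^{k-1} M_1^j = \frac{1 - M_1^k}{1 - M_1}$. When $M_1 < 1$ this equals $\frac{1 - M_1^k}{1 - M_1}$, and since $M_1^k \geq 0$ we may drop it to obtain the coarser bound $\frac{1}{1-M_1}$, yielding both inequalities in the first displayed case. When $M_1 > 1$ we rewrite $\frac{1 - M_1^k}{1 - M_1} = \frac{M_1^k - 1}{M_1 - 1}$; this gives the first inequality of the second case directly, and bounding $M_1^k - 1 \leq M_1^k$ gives $a_k \leq M_1^k a_0 + M_2 \frac{M_1^k}{M_1 - 1} = M_1^k\bracket{a_0 + \frac{M_2}{M_1 - 1}}$, the second inequality.

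This argument is entirely routine; there is no real obstacle. The only two points requiring a moment's care are (i) that the sign hypothesis $M_1 \geq 0$ is what lets the induction step preserve the inequality when multiplying through by $M_1$, and (ii) that the closed-form geometric sum is only valid because $M_1 \neq 1$ in each of the two stated regimes (the borderline case $M_1 = 1$ is deliberately excluded from the statement). No further technical machinery is needed.
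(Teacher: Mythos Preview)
Your approach is exactly what the paper intends: it writes only ``By recursion, we can prove that'' and states the lemma, so your induction followed by summing the geometric series is precisely the omitted argument. One small indexing slip: in your displayed inductive step the middle expression should read $M_1^k a_0 + M_2 \sum_{j=1}^{k-1} M_1^{j} + M_2$ (from multiplying the hypothesis $a_{k-1}\le M_1^{k-1}a_0 + M_2\sum_{j=0}^{k-2}M_1^j$ by $M_1$ and adding $M_2$); as written, $\sum_{j=1}^{k} M_1^{j-1}$ already equals $\sum_{j=0}^{k-1} M_1^j$, so the extra $+M_2$ would overshoot. The final bound and the rest of the argument are correct.
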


\section{Construction of a clipping function}
\label{S:construction_of_clipping_function}

We consider the function
\begin{equation}
    f(x) = \begin{cases}
        \exp(-1/x) & x > 0 \\
        0 & x \leq 0,
    \end{cases}\nonumber
\end{equation}
which is known to be infinitely smooth (i.e. in $C^\infty(\R)$). Therefore, the function
\begin{equation}
    g(x) = \frac{f(x)}{f(x) + f(1-x)}\nonumber
\end{equation}
is also infinitely smooth. In particular we have
\begin{equation}
    g(x) = \begin{cases}
        = 0 & x < 0 \\
        \in [0,1] & x \in [0,1] \\
        = 1 & x > 1
    \end{cases}.\nonumber
\end{equation}
Therefore for any $a < b$, the infinitely smooth function
\begin{equation}
    g_{a,b}(x) = g\bracket{\frac{x-a}{b-a}} = \begin{cases}
        = 0 & x < a \\
        \in [0,1] & x \in [a,b] \\
        = 1 & x > b
    \end{cases},\nonumber
\end{equation}
and that the infinitely smooth function
\begin{equation}
    \rho_N(x) = g_{-2N^\gamma,-N^\gamma}(-x) g_{-2N^\gamma,-N^\gamma}(x) = \begin{cases}
        = 1 & |x| \leq N^\gamma \\
        \in [0,1] & N^\gamma < |x| \leq 2N^\gamma \\
        = 0 & |x| > 2N^\gamma
    \end{cases}.\nonumber
\end{equation}
\begin{figure}[h!]
    \centering
    \includegraphics[width=\textwidth]{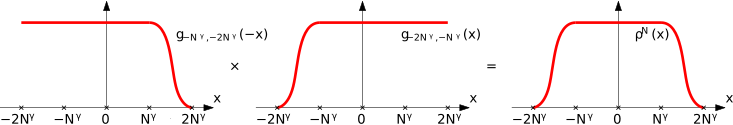}
    \caption{$\rho_N(x) = g_{-2N^\gamma,-N^\gamma}(-x) \times g_{-2N^\gamma,-N^\gamma}(x)$}
    \label{fig:bump_function_construction}
\end{figure}
Finally, the function
\begin{equation}
    \psi_N(x) = \int_0^x \rho_N(y) \, dy\nonumber
\end{equation}
satisfies all the requirement for being a smooth clipping function in Definition \ref{def:smooth_clipping_function}: (2) follows by direct computation, and (3) is true by definition (that $\frac{d}{dx} \psi_N(x) = \rho_N(x)$). Finally, (1) follows by Fundamental theorem of calculus. By symmetry we could prove only for the case when $x > 0$, for which
\begin{equation}
    |\psi_N(x)| \leq \int_0^{2N^\gamma} |\rho_N(y)| \, dy \leq 2N^\gamma.\nonumber
\end{equation}

\section{A-priori controls over the increments of parameters}
\label{S:increments_of_parameters}

Readers are reminded that $x \lesssim y$ means there exists $C>0$ such that $x \leq Cy$. In particular, the constant $C$ does not depend on $N$ and $T$.

\subsection{Proof of Lemma \ref{lem:evolution_of_parameters}}
\label{SS:proof_of_evolution_of_parameters_with_clipping}

Assumption \ref{as:activation_function} on the activation functions $\sigma$ implies for all $k\in\mathbb{N}$
\begin{align}
    \abs{\hat{S}^{i,N}_{k+1}} &= \sigma\bracket{(W^i_k)^\top X_k + \frac{1}{N} \sum_{j=1}^N B^j_k \hat{S}^{j,N}_k} \leq C_\sigma, \label{eq:application_activation_1} \\
    \abs{\Delta\hat{S}^{i,N}_{k+1}} &= \sigma'\bracket{(W^i_k)^\top X_k + \frac{1}{N} \sum_{j=1}^N B^j_k \hat{S}^{j,N}_k} \leq  C_{\sigma'}, \label{eq:application_activation_2}
\end{align}
where $(\hat{S}^{i,N}_k)_{k\geq 0}$ and $(\Delta \hat{S}^{i,N}_k)_{k\geq 1}$ are as specified in Algorithm \ref{alg:onlineSGDwithBPTT}. Therefore, we have
\begin{equation}
    \abs{C^i_{k+1} - C^i_k} \leq \frac{\alpha}{N^{2-\beta}} \abs{\hat{S}^{i,N}_{k+1}} \bracket{|\psi^N(\hat{Y}^N_k)| + \abs{Y_k}} \leq \frac{\alpha C_\sigma}{N^{2-\beta}}(2N^\gamma + C_y) \lesssim \frac{1}{N^{2-\beta-\gamma}}, \label{Eq:IncrementDiff}
\end{equation}
so by a telescopic sum argument we have for all $k\leq \floor{NT}$
\begin{equation*}
    \abs{C^i_k - C^i_0} \leq \sum_{j=0}^{k-1} \abs{C^i_{j+1}-C^i_j} \lesssim \frac{NT}{N^{2-\beta-\gamma}} = \frac{T}{N^{1-\beta-\gamma}},
\end{equation*}
and
\begin{equation*}
    |C^i_k| \leq |C^i_0| + |C^i_k - C^i_0| \lesssim 1+T.
\end{equation*}
We may further conclude that
\begin{align*}
    \|W^i_{k+1} - W^i_k\|
    &\leq \frac{\alpha}{N^{2-\beta}} |C^i_k| \big[|\psi^N(\hat{Y}^N_k)| + |Y_k| \big] |\Delta \hat{S}^{i,N}_{k+1}| \|X_k\| \\
    &\leq \frac{C_T}{N^{2-\beta}} \\
    &\leq\frac{C_T}{N^{2-\beta-\gamma}}, \nonumber \\
    |B^i_{k+1} - B^i_k| &\leq \frac{\alpha}{N^{3-\beta}} \bracket{|\psi^N(\hat{Y}^N_k)| + |Y_k|} \sum_{\ell=1}^N |C^\ell_k| |\hat{S}^{\ell,N}_k| |\Delta \hat{S}^{\ell,N}_{k+1}| \\
    &\leq \frac{C_T}{N^{2-\beta}} (2N^\gamma + C_y)\\
    &\leq \frac{C_T}{N^{2-\beta-\gamma}}. \nonumber
\end{align*}
Therefore
\begin{equation*}
    |C^i_{k+1} - C^i_k| \vee \|W^i_{k+1} - W^i_0 \| \vee |B^i_{k+1} - B^i_k| \lesssim \frac{1+T}{N^{2-\beta-\gamma}},
\end{equation*}
and by following the telescoping sum argument as above we conclude, as desired,
\begin{equation*}
    |C^i_k - C^i_0| + \norm{W^i_{k} - W^i_0} + |B^i_k - B^i_0| \lesssim \frac{(1+T)^2}{N^{1-\beta-\gamma}}. \nonumber
\end{equation*}

\subsection{Proof of Remark \ref{rmk:less_sharp_evolution_of_parameters}}
\label{SS:proof_of_evolution_of_parameters}

The proof of remark relies on the observation that
\begin{equation} \label{eq:diff_abs_C}
\abs{C^i_{k+1}} - \abs{C^i_k} \leq \abs{C^i_{k+1} - C^i_k} \leq \frac{\alpha}{N^{2-\beta}} |\hat{S}^{i,N}_{k+1}| \bracket{|\hat{Y}^N_k| + \abs{Y_k}} \leq \frac{\alpha C_\sigma C_y}{N^{2-\beta}} + \frac{\alpha C^2_\sigma}{N^2} \sum_{i=1}^N |C^i_k|,
\end{equation}
so by letting $\bar{C}_k := \frac{\sum_{i=1}^N |C^i_k|}{N}$, one has
\begin{equation}
    \bar{C}_{k+1} - \bar{C}_k \leq \frac{\alpha C_\sigma C_y}{N^{2-\beta}} + \frac{\alpha C^2_\sigma}{N} \bar{C}_k \implies \bar{C}_{k+1} \leq \bracket{1 + \frac{\alpha C^2_\sigma}{N}} \bar{C}_k + \frac{\alpha C_\sigma C_y}{N^{2-\beta}},\nonumber
\end{equation}
and by Lemma \ref{lem:recursive_inequality}, for all $k \leq \floor{NT}$, one has
\begin{equation}
    \bar{C}_k \leq \bracket{1 + \frac{\alpha C^2_\sigma}{N}}^{\floor{TN}} \bracket{1 + \frac{\alpha C_\sigma C_y / N^{2-\beta}}{\alpha C^2_\sigma /N}} \lesssim \exp(\alpha C^2_\sigma T).\nonumber
\end{equation}
Substituting this back to Equation \eqref{eq:diff_abs_C} yields
\begin{equation}
    \abs{C^i_{k+1} - C^i_k} \leq \frac{\alpha C_\sigma C_y}{N^{2-\beta}} + \frac{\alpha C^2_\sigma \bar{C}_k}{N} \lesssim \frac{C_{T}}{N},\nonumber
\end{equation}
noting that $\beta < 1$ and $C_{T}<\infty$ is constant that depends on $\alpha,C_{\sigma},C_{y}$ and on $T$. Therefore, by a telescoping sum argument we have for all $k \leq \lfloor NT \rfloor$,
$$|C^i_k| = |C^i_0|+\sum_{m=0}^{k-1}|C^i_{m+1} - C^i_{m}| \lesssim 1+NTe^{\alpha C_\sigma^2 T} / N = C_{T},$$
for a different constant $C_{T}<\infty$. This enables us to show further that
\begin{align*}
\norm{W^i_{k+1} - W^i_k} &\leq \frac{\alpha}{N^{2-\beta}} |C^i_k| \sqbracket{|\hat{Y}^N_k| + |Y_k|} |\Delta \hat{S}^{i,N}_{k+1}| \\
&\leq \frac{\alpha C_\sigma |C^i_k|}{N^{2-\beta}} \sqbracket{|Y_k| + \frac{1}{N^\beta} \sum_{\ell=1}^N |C^\ell_k|} \\
&\leq \frac{\alpha C_\sigma C|C^i_k|}{N^{2-\beta}} + \frac{\alpha C_\sigma}{N} \sum_{\ell=1}^N |C^i_k| |C^\ell_k|\\
&\leq \frac{C_{T}}{N}, \nonumber \\
\abs{B^i_{k+1} - B^i_k}
&\leq \frac{\alpha}{N^{3-\beta}} \bracket{|\hat{Y}^N_k| + |Y_k|} \sum_{\ell=1}^N |C^\ell_k| |\Delta \hat{S}^{\ell,N}_{k+1}| \\
&\leq \frac{C_{T}}{N^{2-\beta}} \left(C_y + \frac{1}{N^\beta} \sum_{\ell=1}^N |C^\ell_k||\hat{S}^{\ell,N}_{k+1}| \right) \\
&\leq \frac{C_{T}}{N}.\nonumber
\end{align*}

Overall, for all $k \leq \lfloor NT \rfloor$
\begin{equation*}
|C^i_{k+1} - C^i_k| + \|W^i_{k+1} - W^i_k \| + |B^i_{k+1} - B^i_k| \lesssim \frac{C_T}{N},
\end{equation*}
and therefore
\begin{equation*}
|C^i_k - C^i_0| + \|W^i_k - W^i_0 \| + |B^i_k - B^i_0| \leq C_T.
\end{equation*}

Note that the constant $C_{T}$ depends on $T,\sigma,\alpha$ and may change from line to line.

\section{Details on numerical simulations}
\label{S:numerics}
To illustrate the above results, we simulated 100 paths of the untrained hidden memory units, denoted as $h^{N,\mathsf{path}}_k(W^i) = S^{i,N}_k(X^{\mathsf{path}};\theta)$, each based on the common parameters $\theta = (C^i,W^i,B^i)$ and an independent instance of the input sequence $X^{\mathsf{path}}$ for $\mathsf{path} = 1, 2, ..., 100$. The actual input sequence is simulated from the following 2D random dynamical systems:
\begin{align*}
\begin{bmatrix} X_{k+1} \\ Z_{k+1} \end{bmatrix} &= \frac{1}{2} \tanh\bracket{P \begin{bmatrix} 1 & 0 \\ 0 & 1/2 \end{bmatrix} P^{-1} \begin{bmatrix} X_{k} \\ Z_{k} \end{bmatrix}}, \\
P &= \begin{bmatrix} \sqrt{3}/2 & 1/2 \\ -1/2 & \sqrt{3}/2 \end{bmatrix}, \; X_0, Z_0 \overset{\text{iid}}\sim \mathsf{Uniform}[0,1].
\end{align*}

We choose the standard sigmoid function as our activation function, and $(B^i, W^i)$ are simulated so that $B^i \overset{\text{iid}}\sim \mathsf{Uniform}[0,1]$ and $W^i \overset{\text{iid}}\sim \mathsf{Normal}[0,1]$.\footnote{The setting does not fully satisfy our strong assumptions, but it gives a good illustration of the general mean-field behaviour.}

The mean-field behaviour could be studied by looking at the empirical distributions of the untrained hidden units for each set (represented by the gray lines in figure \ref{fig:2}):
\begin{equation*}
\nu^{N,\mathsf{path}}_k = \frac{1}{N} \sum_{i=1}^N \delta_{h^{N,\mathsf{path}}_k(W^i)},
\end{equation*}
as well as the overall distributions of the hidden units from all sets as a Monte-carlo estimate of the expectation of the distribution of $\varsigma_{X_k,u^N_k}(W^i)$ (represented by the
red lines in Figure \ref{fig:2}):
\begin{equation*}
\nu^{N,\mathsf{overall}}_k = \frac{1}{100} \sum_{\mathsf{path}=1}^{100} \delta_{h^{N,\mathsf{path}}_k(W^i)}.
\end{equation*}
The convergence of distribution of $\varsigma_{X_k,u^N_k}(W^i)$ as $N\to\infty$ is illustrated by noting that the above plots are similar to each other for large $N$. Figure \ref{fig:3} in the introduction is plotted by stacking all overall empirical distributions $\nu^{N,\mathsf{overall}}_k$ on the same plot for easier comparison.
\begin{figure}[t]
    \centering
    \includegraphics[width=.9\textwidth]{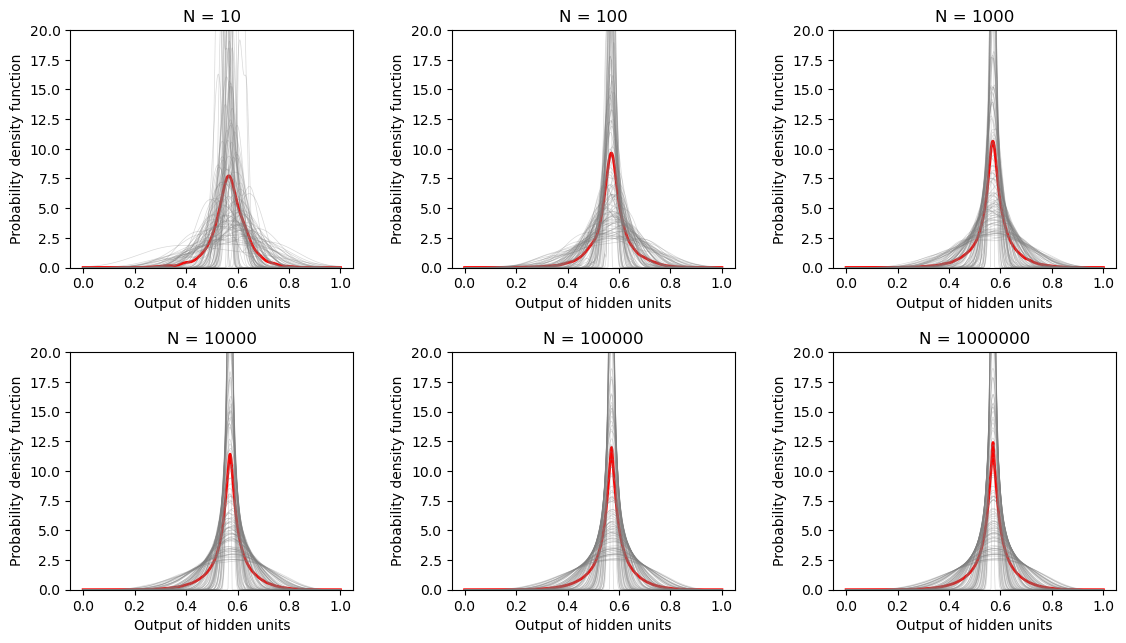}
    \caption{Empirical distributions of the untrained hidden memory units $\varsigma_{X_k,u^N_k}(W^i)$ for varying $N$ and large time step $k \approx 50000$. The grey lines represents the empirical distribution for a \textit{single} set of the untrained hidden memory units $\nu^{N,\mathsf{path}}_k$, and the red line represents the empirical distribution of all untrained hidden memory units from \textit{all} sets $\nu^{N,\mathsf{overall}}_k$.}
    \label{fig:2}
\end{figure}

The intermediate proposition provides a solid foundation for our numerical experiment. Showing the ergodicity of $\varsigma_{X_k,u^N_k}(W^i) = S^{i,N}(X;\theta)$ is easier than showing the ergodicity of the function $h_k(\cdot)$ as the former does not involve the exact computation of $\la b' h_k(w'), \lambda \ra$. The above proposition implies that for sufficiently large $N$, the untrained memory units $\varsigma_{X_k,u^N_k}(W^i) = S^{i,N}(X;\theta)$ exhibit similar ergodic behaviour due to the ergodicity of $h_k$. To illustrate this, we compute the time average of the empirical first and second moments of the sample hidden memory units $(S^{i,N}_k(X;\theta))_{k\geq 0}$ for each sets, defined as
\begin{equation*}
\mathsf{timeAvg}^{N,p,\mathsf{path}}_{T} = \frac{1}{NT} \sum_{k=0}^T \sum_{i=1}^N \sqbracket{h^{N,\mathsf{path}}_k(W^i)}^p, \quad p = 1,2, \quad \mathsf{path}=1,...,100,
\end{equation*}
as well as the overall time average of the empirical first and second moments:
\begin{equation*}
\mathsf{timeAvg}^{N,p,\mathsf{overall}}_{T} = \frac{1}{100}  \sum_{\mathsf{path}=1}^{100} \mathsf{timeAvg}^{N,p,\mathsf{path}}_{T}.
\end{equation*}
They are both plotted in Figure \ref{fig:1}, with the red line being $\mathsf{timeAvg}^{N,p,\mathsf{overall}}_{T}$, and all $\mathsf{timeAvg}^{N,p,\mathsf{path}}_{T}$ lies within the gray line.
band. The shrinking of the gray band towards the red line is another evidence of mean-field behaviour, and the convergence of $\mathsf{timeAvg}^{N,p,\mathsf{overall}}_{T}$ demonstrates the ergodicity of the untrained hidden memory units. Figure \ref{fig:1_truncated} in the introduction is the enlarged version of \ref{fig:1} for $N = 10^6$.

\begin{figure}
\centering
\includegraphics[width=.6\textwidth]{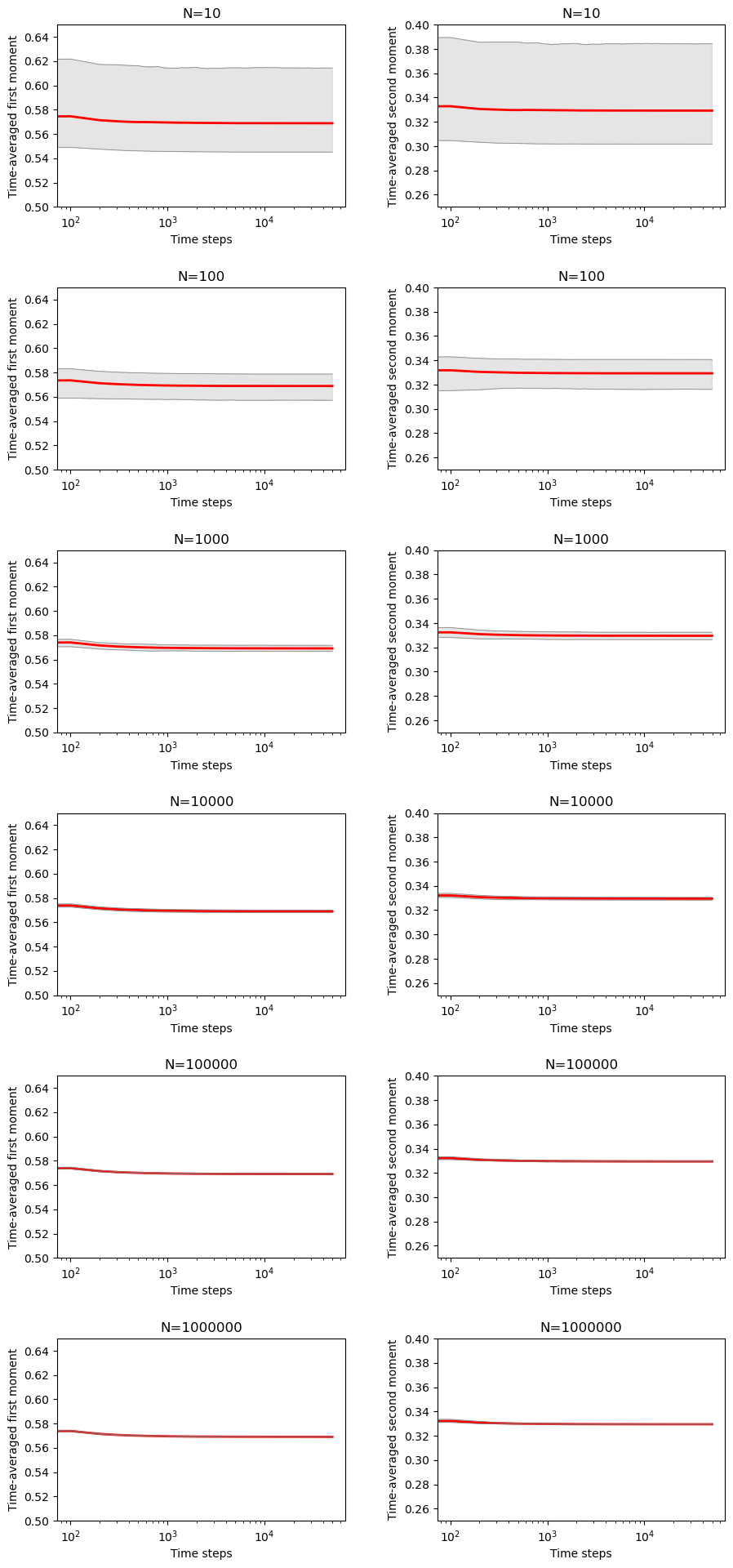}
\caption{The plot of time averages for $N = 10^k, k = 2,3,4,5,6$ and $p = 1,2$. The actual realizations of $\mathsf{timeAvg}^{N,p,\mathsf{path}}_T$ lie in the grey band, and the red line is the overall time average $\mathsf{timeAvg}^{N,p,\mathsf{overall}}_T$ as the Monte-Carlo estimate of $\E[\mathsf{timeAvg}^{N,p}_T]$. The desired converging behaviour is only exhibited when $N$ is sufficiently large.}
\label{fig:1}
\end{figure}

Please refer to \url{https://github.com/Samuel-CHLam/rnn_ergodicity} for the code of the simulation.
\end{appendix}

\section*{Acknowledgment}
The authors would like to acknowledge the use of the University of Oxford Advanced Research Computing (ARC) facility for completing the numerical simulations. We would also like to thank the anonymous reviewers  for their constructive criticism that greatly improved the article.

\bibliographystyle{vancouver}
\bibliography{export}

@article{zhang2019gradient,
  title={Why gradient clipping accelerates training: A theoretical justification for adaptivity},
  author={Zhang, Jingzhao and He, Tianxing and Sra, Suvrit and Jadbabaie, Ali},
  journal={arXiv preprint arXiv:1905.11881},
  year={2019}
}

@article{WilliamsPeng1990,
author = {R. Williams and J. Peng},
journal = {Neural Computation},
number = {4},
pages = {490-501},
title = {An efficient gradient-based algorithm for on-line training of recurrent
network trajectories},
volume = {2},
year = {1990}
}

@book{Villanioldandnew,
author = {Villani, Cédric},
publisher = {Springer, Berlin},
title = {Optimal transport: old and new.},
year = {2009},
doi = {10.1007/978-3-540-71050-9},
isbn = {978-3-540-71050-9},
}

@book{meyertweedie,
author = {Sean P. Meyn and Richard L. Tweedie},
publisher = {Springer, London},
title = {Markov Chains and Stochastic Stability},
year = {2012},
doi = {10.1007/978-1-4471-3267-7},
isbn = {978-1-4471-3267-7},
}

@article{NeuralNetworkLLN,
author = {J. Sirignano and K. Spiliopoulos},
journal = {SIAM Journal on Applied Mathematics},
number = {2},
pages = {725-752},
title = {Mean Field Analysis of Neural Networks: A Law of Large Numbers},
volume = {80},
year = {2020},
}

@article{DeepNN,
author = {J. Sirignano and K. Spiliopoulos},
journal = {Mathematics of Operations Research},
number = {1},
pages = {120-152},
title = {Mean Field Analysis of Deep Neural Networks},
volume = {47},
year = {2021},
doi = {10.1287/moor.2020.1118},
}

@article{RLasymptotics,
author = {J. Sirignano and K. Spiliopoulos},
journal = {Stochastic Systems},
month = {March},
number = {1},
pages = {2-29},
title = {Asymptotics of Reinforcement Learning with Neural Networks},
volume = {12},
year = {2022},
doi = {doi.org/10.1287/stsy.2021.0072},
unidentified = {In press},
}

@article{PDEclosure,
author = {J. Sirignano and J. MacArt and K. Spiliopoulos},
journal = {Journal of Computational Physics},
month = {May},
title = {PDE-constrained Models with Neural Network Terms: Optimization and Global Convergence},
volume = {481},
year = {2023},
doi = {10.1016/j.jcp.2023.112016},
}

@article{NENN,
author = {K. Spiliopoulos and J. Yu.},
journal = {AIMS Journal on Foundations of Data Science},
number = {2},
pages = {151-200},
title = {Normalization effects on shallow neural networks and related asymptotic expansions},
volume = {3},
year = {2021},
}

@article{dNENN,
author = {J. Yu. and K. Spiliopoulos},
journal = {Foundations of Data Science},
month = {September},
note = {Early access March 2023},
number = {3},
pages = {389-465},
title = {Normalization effects on deep neural networks},
volume = {5},
year = {2023},
doi = {10.3934/fods.2023004},
}

@inproceedings{DeepVoice,
author = {S. Arik and M. Chrzanowski and A. Coates and G. Diamos and A. Gibiansky and Y. Kang and X. Li and J. Miller and A. Ng and J. Raiman and S. Sengputa},
booktitle = {Proceedings of Machine Learning Research 70},
publisher = {ML Research Press},
title = {Deep voice: Real-time neural text-to-speech},
year = {2017},
archivePrefix = {arXiv},
eprint = {1702.07825},
}

@inproceedings{jacot2018neural,
author = {A. Jacot and F. Gabriel and C. Hongler},
booktitle = {Advances in Neural Information Processing Systems 31},
editor = {S. Bengio and H. Wallach and H. Larochelle and K. Grauman and N. Cesa-Bianchi and R. Garnett},
publisher = {Curran Associates, Inc.},
title = {Neural tangent kernel: Convergence and generalization in neural networks},
year = {2018},
archivePrefix = {arXiv},
eprint = {1806.07572},
}

@unpublished{CohenJiangSirignano,
author = {S. Cohen and D. Jiang and J. Sirignano},
note = {preprint},
title = {Neural Q-learning for solving elliptic PDEs.},
year = {2022},
archivePrefix = {arXiv},
eprint = {2203.17128},
}

@inproceedings{ChizatBach2018,
author = {L. Chizat and F. Bach},
booktitle = {Advances in Neural Information Processing Systems 31},
editor = {Curran Associates Inc.},
pages = {3040-3050},
title = { On the global convergence of gradient descent for over-parameterized models
using optimal transport},
year = {2018},
}

@book{Goodfellow,
address = {Cambridge},
author = {I. Goodfellow and Y. Bengio and A. Courville},
publisher = {MIT Press},
title = {Deep Learning},
year = {2016},
}

@article{Montanari,
author = {S. Mei and A. Montanari and P. Nguyen},
journal = {Proceedings of the National Academy of Sciences (PNAS)},
note = {Edited by Peter J. Bickel},
number = {33},
pages = {E7675-E7671},
title = {A mean field view of the landscape of two-layer neural network},
volume = {115},
year = {2018},
doi = {doi.org/10.1073/pnas.1806579115},
}

@unpublished{Rotskoff_VandenEijnden2018,
author = {G. M. Rotskoff and E. Vanden-Eijnden},
note = {Neural Networks as Interacting Particle Systems: Asymptotic Convexity of the Loss Landscape and Universal Scaling of the Approximation Error},
year = {2018},
archivePrefix = {arXiv},
eprint = {1805.00915},
}

@misc{Sutskever2012,
    author = {Ilya Sutskever},
    school = {Department of Computer Science, University of Toronto 406, 413},
    title = {Training Recurrent Neural Networks},
    year = {2012},
    month = {August},
    type = {PhD thesis}
}

@inproceedings{SpeechRecognition3,
author = {Y. Zhang and W. Chan and N. Jaitly},
booktitle = {2017 IEEE International Conference on Acoustics, Speech and Signal Processing (ICASSP)},
note = {doi:10.1109/ICASSP.2017.7953077},
pages = {4845-4849},
publisher = {IEEE},
title = {Very deep convolutional networks for end-to-end speech recognition},
year = {2017},
}

@article{elman,
title = {Finding structure in time},
journal = {Cognitive Science},
volume = {14},
number = {2},
pages = {179-211},
year = {1990},
issn = {0364-0213},
doi = {https://doi.org/10.1016/0364-0213(90)90002-E},
url = {https://www.sciencedirect.com/science/article/pii/036402139090002E},
author = {Jeffrey L. Elman},
}

@InProceedings{exploding_gradient_2,
  title = 	 {On the difficulty of training recurrent neural networks},
  author = 	 {Pascanu, Razvan and Mikolov, Tomas and Bengio, Yoshua},
  booktitle = 	 {Proceedings of the 30th International Conference on Machine Learning},
  pages = 	 {1310--1318},
  year = 	 {2013},
  editor = 	 {Dasgupta, Sanjoy and McAllester, David},
  volume = 	 {28(3)},
  series = 	 {Proceedings of Machine Learning Research},
  address =  {Atlanta, Georgia, USA},
  month = 	 {17--19 Jun},
  publisher = {PMLR},
  pdf = 	 {http://proceedings.mlr.press/v28/pascanu13.pdf},
  url = 	 {https://proceedings.mlr.press/v28/pascanu13.html},
}

@InProceedings{mean_field_agazzi,
  title = 	 {Global optimality of Elman-type {RNN}s in the mean-field regime},
  author =       {Agazzi, Andrea and Lu, Jianfeng and Mukherjee, Sayan},
  booktitle = 	 {Proceedings of the 40th International Conference on Machine Learning},
  pages = 	 {196--227},
  year = 	 {2023},
  editor = 	 {Krause, Andreas and Brunskill, Emma and Cho, Kyunghyun and Engelhardt, Barbara and Sabato, Sivan and Scarlett, Jonathan},
  volume = 	 {202},
  series = 	 {Proceedings of Machine Learning Research},
  month = 	 {23--29 Jul},
  publisher =    {PMLR},
  pdf = 	 {https://proceedings.mlr.press/v202/agazzi23a/agazzi23a.pdf},
  url = 	 {https://proceedings.mlr.press/v202/agazzi23a.html},
}

@book{PazyA1983,
series = {Applied mathematical sciences (Springer-Verlag New York Inc.) ; v.44},
publisher = {Springer-Verlag},
isbn = {9780387908458},
year = {1983},
title = {Semigroups of linear operators and applications to partial differential equations},
language = {eng},
address = {New York},
author = {Pazy, A},
keywords = {Differential equations, Partial; Initial value problems; Semigroups of operators},
lccn = {LC},
}

@book{NestruevJet2020Smao,
series = {Graduate texts in mathematics},
publisher = {Springer},
booktitle = {Smooth manifolds and observables},
isbn = {9783030456504 (electronic bk.)},
year = {2020},
title = {Smooth manifolds and observables [electronic resource]},
edition = {2nd ed.},
language = {eng},
address = {Cham, Switzerland},
author = {Nestruev, Jet},
keywords = {Manifolds (Mathematics); Algebra; Quantum physics (quantum mechanics & quantum field theory); Analytic geometry; Mathematics -- Algebra -- General; Science -- Quantum Theory; Computers -- Information Technology; Mathematics -- Topology; Manifolds (Mathematics)},
}

@inproceedings{recurrent_BP_1,
  added-at = {2008-03-11T14:52:34.000+0100},
  author = {Almeida, Luis B.},
  booktitle = {Proceedings of the IEEE First International Conference on Neural Networks {\rm ({S}an {D}iego, {CA})}},
  key = {ICNN},
  keywords = {nn},
  pages = {609--618},
  priority = {2},
  publisher = {Piscataway, NJ: IEEE},
  title = {A Learning Rule for Asynchronous Perceptrons with Feedback in a Combinatorial Environment},
  volume = {II},
  year = 1987
}

@inproceedings{recurrent_BP_2,
 author = {Pineda, Fernando},
 booktitle = {Neural Information Processing Systems},
 editor = {D. Anderson},
 pages = {},
 publisher = {American Institute of Physics},
 title = {Generalization of Back propagation to Recurrent and Higher Order Neural Networks},
 url = {https://proceedings.neurips.cc/paper_files/paper/1987/file/735b90b4568125ed6c3f678819b6e058-Paper.pdf},
 volume = {0},
 year = {1987}
}

@InProceedings{recurrent_BP_3,
  title = 	 {Reviving and Improving Recurrent Back-Propagation},
  author = {Liao, Renjie and Xiong, Yuwen and Fetaya, Ethan and Zhang, Lisa and Yoon, KiJung and Pitkow, Xaq and Urtasun, Raquel and Zemel, Richard},
  booktitle = {Proceedings of the 35th International Conference on Machine Learning},
  pages = 	 {3082--3091},
  year = 	 {2018},
  editor = 	 {Dy, Jennifer and Krause, Andreas},
  volume = 	 {80},
  series = 	 {Proceedings of Machine Learning Research},
  month = 	 {10--15 Jul},
  publisher =    {PMLR},
  pdf = 	 {http://proceedings.mlr.press/v80/liao18c/liao18c.pdf},
  url = 	 {https://proceedings.mlr.press/v80/liao18c.html},
  abstract = {In this paper, we revisit the recurrent back-propagation (RBP) algorithm, discuss the conditions under which it applies as well as how to satisfy them in deep neural networks. We show that RBP can be unstable and propose two variants based on conjugate gradient on the normal equations (CG-RBP) and Neumann series (Neumann-RBP). We further investigate the relationship between Neumann-RBP and back propagation through time (BPTT) and its truncated version (TBPTT). Our Neumann-RBP has the same time complexity as TBPTT but only requires constant memory, whereas TBPTT’s memory cost scales linearly with the number of truncation steps. We examine all RBP variants along with BPTT and TBPTT in three different application domains: associative memory with continuous Hopfield networks, document classification in citation networks using graph neural networks and hyperparameter optimization for fully connected networks. All experiments demonstrate that RBPs, especially the Neumann-RBP variant, are efficient and effective for optimizing convergent recurrent neural networks.}
}

@book{DobrushinRoland2006LoPT,
author = {Dobrushin, Roland and Groeneboom, Piet and Ledoux, Michel},
address = {Berlin, Heidelberg},
booktitle = {Lectures on Probability Theory and Statistics},
copyright = {Springer-Verlag Berlin Heidelberg 1996},
isbn = {9783540620556},
issn = {0075-8434},
keywords = {Mathematics ; Mathematics and Statistics ; Partial Differential Equations ; Probability Theory and Stochastic Processes ; Statistics, general},
language = {eng},
publisher = {Springer Berlin Heidelberg},
series = {Lecture Notes in Mathematics},
title = {Lectures on Probability Theory and Statistics: Ecole d’Eté de Probabilités de Saint-Flour XXIV—1994},
volume = {1648},
year = {2006},
}
\end{document}